\newtheorem{prp}{Proposition}
\newtheorem{thm}{Theorem}
\def\E{\mathop{{\mathbb{E}}}}
\def\R{{\mathbb{R}}}
\def\1{{\mathbf{1}}}
\def\N{{\mathbb{N}}}
\def\_{\,\,\,\,\,}
\def\prob{{\mathbf{Pr}}}
\newcommand{\eps}{\epsilon}
\title{On the Covariance-Hessian Relation in Evolution Strategies}
\author{Ofer M. Shir}
        \affil{Computer Science Department, Tel-Hai College, and Migal Institute, Upper Galilee, Israel }
\author{Amir Yehudayoff}
        \affil{Department of Mathematics, Technion - Israel Institute of Technology, Haifa, Israel}
\date{}
\begin{document}
\maketitle

\begin{abstract}
We consider Evolution Strategies (ESs) operating only with isotropic Gaussian mutations on positive quadratic objective functions, and investigate the covariance matrix when constructed out of selected individuals by truncation. 
We prove that the covariance matrix over $(1,\lambda)$-selected decision vectors becomes proportional to the inverse of the landscape Hessian as the population-size $\lambda$ increases. 
This confirms a classical hypothesis that statistical learning of the landscape is an inherent characteristic of standard ESs, and that this distinguishing capability stems only from the usage of isotropic Gaussian mutations and rank-based selection.
Even though the model under consideration does not precisely conform with practically encountered scenarios, it plays a role of a theoretical foundation for learning capabilities within ESs.
We also provide broad numerical validation for the proven results, and present empirical evidence for its generalization to $(\mu,\lambda)$-selection.
\end{abstract}

\textbf{Keywords}: Theory of evolution strategies, statistical learning, landscape Hessian, covariance matrix, inverse-relation.

\section{Introduction}
Evolution Strategies (ESs) are popular randomized search heuristics that excel in global optimization of continuous landscapes; for recent reviews see \cite{RudolphHandbookNACO,hansen2015,ESchapter2018}.
Their mechanism is primarily characterized by the mutation operator, whose step is drawn from a multivariate normal distribution using an evolving covariance matrix.
Since their early development in the 1960's \cite{Schwefel,Baeck-book,Beyer} and up until recently, it has been hypothesized that this adapted matrix approximates the inverse Hessian of the search landscape. 
This hypothesis was intuitively supported by the rationale that the optimal covariance distribution can offer mutation steps whose \emph{equidensity probability contours} match the \emph{level sets} of the landscape, and so they maximize the progress rate~\cite{Rudolph92}. 
Altogether, the motivation to hold a covariance matrix reflective of the eigen-directions of the landscape Hessian is well-justified from the algorithmic perspective \cite{RudolphHandbookNACO}. It additionally carries \textit{practical implications} for the application perspective | knowledge of second-order Hessian information is desirable in practice as (i) a measure of system robustness to noise in the decision variables, (ii) a means for dimensionality reduction, and (iii) assisting in landscape characterization \cite{Shir-FOCAL}.

Historically speaking, Rudolph's study on correlated mutations \cite{Rudolph92} constituted one of the motivations for ESs' scholars to design strategies that accumulate search information of selected individuals. Such information is to be iteratively constructed by means of statistically-learned algebraic structures -- memory vectors, sliding-window matrices, or proper covariance matrices -- which in turn hold the capacity to \textit{derandomize} the search process. 
Derandomized ESs \cite{HansenDR1,HansenDR2,HansenDR3,Hansen01completely} have become a successful family of search heuristics, whose operation heavily relies on statistical learning of past search information; for a comprehensive overview see \cite{Baeck2013contemporary}.  
Lastly but importantly, from an empirical perspective, it must be noted that the volume of supportive evidence for the hypothesis that ESs can learn such a covariance matrix is overwhelming, including application-driven studies (see, e.g., \cite{Shir-FOCAL}, where the Hessian's eigenvectors were successfully recovered following statistical learning of a covariance matrix).

Recent developments in randomized search heuristics for continuous optimization succeeded in making a link between certain modern ESs (e.g., the renowned CMA-ES \cite{hansen2015}) to Information Geometry \cite{Amari}; the linkage is due to the strategy adaptation by means of the mutation distribution update. 
This line of research originated in the release of the so-called Natural ESs \cite{Wierstra2014NES}, and the consequent compilation of the Information Geometry Optimization (IGO) philosophy \cite{IGO} in parallel to the formulation of the Natural Gradient (NGD) algorithm \cite{Akimoto2012_NGD}.
In short, the building block of this class of algorithms is the \textit{natural gradient method}~\cite{Amari}, which features steepest ascent moves on the Riemannian manifold of the strategy distribution function using Fisher information.
Importantly, in the context of landscape learning, modern ESs were proven to achieve such learning under certain adaptation mechanisms.
Akimoto proved that the NGD algorithm adapts its covariance matrix so it becomes proportional to the inverse landscape Hessian of any monotonic convex-quadratic composite function in the limit of a large population-size \cite{Akimoto2012_NGD}.
In a broader perspective, Beyer showed that upon implementing the IGO philosophy in the same population-size limit, the self-adapted covariance matrix necessarily becomes proportional to the inverse Hessian \cite{Beyer2014_ConvergenceIGO}.

For the generic ESs' context, a preliminary study \cite{Shir-Theory-foga17} assumed a quadratic model in the vicinity of the optimum, and proved that learning the landscape was an inherent property of classical ESs. Notably, this kind of learning does not require Derandomization (for adaptation) nor IGO (as a proof tool). In short, it showed that the statistically-constructed covariance matrix over selected decision vectors had the same eigenvectors as the Hessian matrix, and that when the population-size was increased, the covariance became proportional to the inverse of the Hessian. Its main results are furthermore outlined in Section \ref{sec:previousresults}.

The current study generalizes this model beyond the near-optimum-assumption and obtains results that are valid for learning anywhere on positive quadratic objective functions. 
Here, we prove that the statistically-constructed covariance matrix over \textit{single-winning} decision vectors converges to the inverse of the landscape Hessian, up to a scalar factor, when the population-size tends to infinity (around any point for any quadratic objective function). 
This shows that for large populations, ESs indeed unveil effective information on the landscape, requiring only usage of isotropic Gaussian mutations and rank-based selection.

The main contribution of this work lies within its theoretical results, being a mathematical rigor to a long-standing hypothesis in ESs' foundations. 
However, their direct applicability is limited, mainly due to the facts that (i) the model does not necessarily reflect effective search, and (ii) the required population-sizes seem to be impractical for pragmatic scenarios.

The remainder of this paper is organized as follows.
The framework is formally stated in Section \ref{sec:problem}, where the assumed model is described in detail, and the covariance matrices for $(1,\lambda)$- and $(\mu,\lambda)$-truncation selection are explicitly derived. We also outline the previous results concerning the near-optimum-assumption.
We address in Section \ref{sec:CinvH} the relation between the covariance matrix and the landscape Hessian, subject to a large population-size. A simulation study encompassing various landscape scenarios for $(1,\lambda)$-selection is presented in Section~\ref{sec:simulation}, constituting a numerical corroboration for all the theoretical outcomes in this work. 
Finally, the results are summarized in Section~\ref{sec:discussion}, and a future direction is discussed, constituting a generalization of the current results into $(\mu,\lambda)$-selection. We formally hypothesize such a generalizing claim and present empirical evidence to support it.

\section{Statistical Learning of the Landscape}\label{sec:problem}
We target the following \textit{research question}: 
\begin{quote}
What is the relation between the statistically-constructed covariance matrix over ESs' $(1,\lambda)$-\textit{winners} to the landscape Hessian under the quadratic approximation when search-points are taken anywhere?
\end{quote}
We focus on the \textit{a posteriori} statistical construction of the covariance matrix of the decision variables.
Next, we formulate the problem, assume a model and present our notation.
\subsection{The Model}
Let ${J}: \mathbb{R}^n \to \mathbb{R}$ denote the objective function subject to minimization, and let it be minimized at the location $\vec{x}^{*}$.
We assume that $J$ is quadratic; 
we model the $n$-dimensional basin of attraction about $\vec{x}^{*}$ by means of a quadratic function:\footnote{This representation of $J$ includes the gradient (linear) term.}
\begin{equation}
\label{eq:Jfunc}
\displaystyle  
J\left(\vec{x}\right)
= J_{{\cal H},\vec{x}^*} \left(\vec{x}\right)
= (\vec{x}-\vec{x}^{*})^T \cdot \mathcal{H} \cdot (\vec{x}-\vec{x}^{*}),
\end{equation}
with $\mathcal{H}$ being the landscape Hessian about the optimum, which is assumed to be full-rank.

The classical non-elitist single-parent ES operates in the following manner:
$\lambda$ search-points $\vec{x}_1,\ldots,\vec{x}_\lambda$ are generated in each iteration, based upon Gaussian sampling with respect to the given search-point $\vec{x}_{0}$.
We are especially concerned with the canonical ES variation operator, which adds a normally distributed \textit{mutation}
$\vec{z} \sim \mathcal{N} (\vec{0},\mathbf{I})$.
That is, $\vec{x}_1,\ldots,\vec{x}_\lambda$ are independent and each is $\mathcal{N} (\vec{x}_0,\mathbf{I})$.
Upon evaluating those $\lambda$ search-points with respect to $J$, the best (minimal) individual is selected and recorded as
\begin{equation}
\vec{y} = \arg\min \left\{ J(\vec{x}_1),~ J(\vec{x}_2),~ \ldots ,~ J(\vec{x}_\lambda) \right\}.
\end{equation}
Finally, let $\omega$ denote the \textit{winning} objective function value,
\begin{equation}
\omega = J(\vec{y}) = \min \left\{ J(\vec{x}_1),~ J(\vec{x}_2),~ \ldots ,~ J(\vec{x}_\lambda) \right\}.
\end{equation}
We will also consider the case of $(\mu,\lambda)$-selection, where the truncated subset of $\mu$ winners is selected.

We mention the difference between the optimization phase, which aims to arrive at the optimum and is not discussed here, to the statistical learning of the basin, which lies in the focus of this study. 
In other words, we consider a \textbf{static model where sampling takes place around a fixed point and the selected points are statistically accumulated}.
The sampling procedure is summarized as Algorithm 1, wherein the routine \texttt{statCovariance} refers to \textit{statistically} constructing a covariance matrix from raw observations. \\
Our notation is summarized in Table~\ref{tab:nomenclature}.
\IncMargin{1em}
\RestyleAlgo{boxed} 
\begin{algorithm}[ht]
\SetKwInOut{Output}{output}
\caption{Statistical sampling by $(1,\lambda)$-selection}
$t \leftarrow 0$\;
$\mathcal{S} \leftarrow$ $\emptyset$\;
\Repeat{$t \geq N_{\texttt{iter}}$} {
 \For{$k\leftarrow 1$ \KwTo $\lambda$}{
 $\vec{x}^{(t+1)}_{k} \leftarrow \vec{x}_{0} + \vec{z}_k,~~~\vec{z}_k \sim \mathcal{N} ( \vec{0},\mathbf{I} )$\;
 $J^{(t+1)}_{k}\leftarrow$ \texttt{evaluate} $\left(\vec{x}^{(t+1)}_{k}\right)$\;
 } 
 $m_{t+1} \leftarrow\arg\min \left(\left\{J^{(t+1)}_{\imath}\right\}_{\imath=1}^{\lambda} \right)$\;
 $\mathcal{S} \leftarrow \mathcal{S} \cup \left\{ \vec{x}^{(t+1)}_{m_{t+1}} \right\} $\;
 $t \leftarrow t+1$\;
 }
\Output{$\mathcal{C}^{\texttt{stat}}=$\texttt{statCovariance}$\left(\mathcal{S}\right)$}
\label{algo:ES_sampling}
\end{algorithm}
\DecMargin{1em}

\begin{table*}[ht]
\centering 
\begin{tabular}{l l l}
\hline
Term & Description & Notation\\
\hline
landscape Hessian & positive definite matrix defining the landscape structure & $\mathcal{H}$ \\
Hessian eigenvalue & by eigendecomposition: $\mathcal{H}=\mathcal{U}\mathcal{D}\mathcal{U}^{-1},~\mathcal{D}=\textrm{diag}\left[\Delta_1,\ldots,\Delta_n \right]$ & $\Delta_i$\\ 
objective function &  subject to minimization, assumed to be minimized at $\vec{x}^{*}$ (Eq.~\ref{eq:Jfunc}) & %
$J\left(\vec{x}\right)$\\
quadratic function & a function $\hat{J}: \mathbb{R}^n \to \mathbb{R}$ of the form $\hat{J}\left(\vec{r}\right)=\vec{r}^T \mathbf{B} \vec{r}+\vec{c}^T\vec{r}+\vec{d}$ & $\hat{J}\left(\vec{r}\right)$\\
random vector & a normal Gaussian mutation & $\vec{z}$ \\
random vector's function value & representing the objective function's value of $\vec{z}$ & $\psi=J(\vec{z})$ \\
population-size & number of generated search-points per iteration & $\lambda$ \\
offspring & $\lambda$ independent copies of $\vec{z}$ & $\vec{x}_1,\ldots,\vec{x}_\lambda$ \\
parental population-size & number of selected search-points per iteration & $\mu$ \\
winner & the recorded best (minimal) individual by selection & $\vec{y}$ \\
winning value & the winning objective function value & $\omega = J(\vec{y})$ \\
$\ell^{th}$ winner & the recorded $\ell^{th}$-best individual by selection & $\vec{y}_{\ell:\lambda}$ \\
$\ell^{th}$ winning value & the $\ell^{th}$-best objective function value & $\omega_{\ell:\lambda}$ \\
expectation vector & expectation vector over winning decision vectors (Eq.~\ref{eq:Exp0}) & $\mathcal{E}$\\
covariance matrix & covariance matrix over winning decision vectors (Eq.~\ref{eq:Cov0}) & $\mathcal{C}$\\
empirical covariance matrix & statistically-constructed matrix (Algorithm 1) & $\mathcal{C}^{\texttt{stat}}$\\
\hline
\end{tabular}
\caption{Nomenclature.\label{tab:nomenclature}}
\end{table*}

\subsection{Previous Results: Optimum's Vicinity}\label{sec:previousresults}
Here are the main results obtained for the special case of near-optimum sampling (i.e.,~$\vec{x}_0 = \vec{x}^*$ ) \cite{Shir-Theory-foga17}:
\begin{compactenum}[(i)]
\item $\mathcal{C}$ and $\mathcal{H}$ commute for any $\lambda$.
This learning capability stems only from two components: (1) isotropic Gaussian mutations, and (2) rank-based selection. 
\item There is $\alpha \left(\lambda,\mathcal{H} \right) >0$ so that $\lim_{\lambda \to \infty}  \alpha \mathcal{C}  \mathcal{H} = \mathbf{I}$.
For this we need to guarantee that $\mathcal C$ is pointwise $\eps$-close to $\alpha {\mathcal H}^{-1}$.
An upper bound on the $\lambda$ needed for this part depends on $\eps$ and on the spectrum of $\mathcal H$.

\item Guaranteeing that $\mathcal{C}^{\texttt{stat}}$ is pointwise $\eps$-close to $\mathcal C$ with confidence $1-\delta$.
The number of samples required for this part is polynomial in $\lambda,1/\eps,\ln(n)$ and $\ln(1/\delta)$.

\item In order to calculate $\mathcal{C}_{ij}$ when $\lambda$ tends to infinity, it is possible to approximate the appropriate density function by considering the Generalized Extreme Value Distribution ($\textrm{GEVD}$) \cite{Castillo2004}, which belongs to the only non-degenerate family of distributions satisfying the limit $\lambda \rightarrow \infty$.
\end{compactenum}

\subsection{The Covariance Matrix for $\left(1,\lambda\right)$-Selection}
For simplicity, we address $\left(1,\lambda\right)$-selection, and accordingly, we analytically formulate the covariance matrix when constructed by consecutive single winners.
Importantly, the following results hold for sampling about any search-point $\vec{x}_0$. 
However, since the Hessian and the covariance matrices in the current modeling are invariant with respect to translations, the nature of $\vec{x}_0$ has no effect whatsoever. 
Thus, for convenience and without loss of generality, we assume that the search-point resides in the origin, $\vec{x}_0:=\vec{0}$.
Yet, $\vec{x}_0$ and $\vec{x}^*$ can be different.
The expectation vector of the winner is defined by its $i^{th}$ element:
\begin{equation}
\label{eq:Exp0}
\displaystyle \mathcal{E}_{i} = \int x_i \texttt{PDF}_{\vec{y}}\left( \vec{x}\right) \textrm{d}\vec{x}~,
\end{equation}
where $\texttt{PDF}_{\vec{y}}\left( \vec{x}\right)$ is an $n$-dimensional density function characterizing the \textit{winning} decision variables in this mutation and selection processes. 
The covariance elements are defined as
\begin{equation}
\label{eq:Cov0}
\boxed{
\displaystyle \mathcal{C}_{ij} = \int (x_i-\mathcal{E}_i)(x_j-\mathcal{E}_j) \texttt{PDF}_{\vec{y}}\left( \vec{x}\right) \textrm{d}\vec{x} }~ .
\end{equation}

The density function of a single winning vector $\vec{y}$ is related to the density of the 
winning value $\omega$ via the following relation \cite{Shir-Theory-foga17}:
\begin{equation}\label{eq:x_pdf_2}
\displaystyle \texttt{PDF}_{\vec{y}}\left(\vec{x}\right) =   \texttt{PDF}_{\omega}\left( J\left(\vec{x} \right) \right) \cdot \frac{\texttt{PDF}_{\vec{z}}\left(\vec{x}\right)}{\texttt{PDF}_{\psi}\left( J\left(\vec{x} \right)\right)} ~,
\end{equation}
with $\texttt{PDF}_{\vec{z}}$ denoting the density function for generating an individual by \textit{mutation} (i.e.,\ Gaussian), and $\texttt{PDF}_{\psi}$ denoting the density function of the objective function values for an individual mutation.
For completeness, we explain~\eqref{eq:x_pdf_2} in~\ref{sec:DE}.


We note the distribution function of the winning value,
\begin{equation}\label{eq:y_cdf}
\displaystyle \texttt{CDF}_{\omega}\left( v \right) =  \prob\left\{ \omega \leq v \right\} = 1 - \left( 1-\texttt{CDF}_{\psi}\left( v \right)\right)^{\lambda},
\end{equation}
of which the density function is differentiated:
\begin{equation}\label{eq:y_pdf}
\displaystyle \texttt{PDF}_{\omega}\left( v \right) =   \lambda \cdot \left( 1-\texttt{CDF}_{\psi}\left(v \right)\right)^{\lambda-1}\cdot \texttt{PDF}_{\psi}\left(v\right).
\end{equation}
Overall, we get the following pleasant representation:
\begin{equation}\label{eq:CDFy}
\boxed{
\displaystyle \texttt{PDF}_{\vec{y}}\left(\vec{x}\right)
 =   \lambda \cdot \left( 1-\texttt{CDF}_{\psi}\left(J\left(\vec{x} \right) \right)\right)^{\lambda-1}
 \cdot \texttt{PDF}_{\vec{z}}\left(\vec{x}\right) } ~.
\end{equation}

%
\subsection{$\left(\mu,\lambda\right)$-Truncation Selection}
In a more general case, $\mu$ winners are selected out of the population of size $\lambda$ in each iteration.
We denote by
$J_{1:\lambda}\leq J_{2:\lambda}\leq \ldots \leq J_{\lambda:\lambda}$ the order statistics obtained by sorting the objective function values, and furthermore denote by $\omega_{1:\lambda},\ldots,\omega_{\mu:\lambda}$ the first $\mu$ values from this list, and by $\vec{y}_{1:\lambda},\ldots,\vec{y}_{\mu:\lambda}$ their corresponding vectors.

Here, there are $\mu$ vectors, rather than a single vector, so the expectation vector and covariance matrix can be defined in several ways.
We choose to focus on the average of these $\mu$ vectors.

The expectation vector reads:
\begin{equation}
\label{eq:Exp_mu}
\displaystyle \mathcal{E}_{i} = 
\sum_{k=1}^{\mu} \frac{1}{\mu}
\int x_{k,i} \texttt{PDF}_{\vec{y}_{k:\lambda}}\left(\vec{x}_k\right) \textrm{d}\vec{x}_k ~ ,
\end{equation}
for the appropriate density of $\vec{y}_{k:\lambda}$.
The covariance element reads:
\begin{small}
\begin{equation}
\label{eq:Cov_mu}
\displaystyle \mathcal{C}_{ij} = 
\frac{1}{\mu^2}
\sum_{k,\ell=1}^{\mu} \int \left( x_{k,i} - \mathcal{E}_{i} \right) \left( x_{\ell,j} - \mathcal{E}_{j} \right) \texttt{PDF}_{\vec{y}_{k:\lambda},\vec{y}_{\ell:\lambda}}\left(\vec{x}_k,\vec{x}_{\ell}\right) \textrm{d}\vec{x}_k \textrm{d}\vec{x}_{\ell} ,
\end{equation}
\end{small}
for the appropriate joint density.


\section{The Inverse Relation}
\label{sec:CinvH}

The following propositions show that for a large population-size $\lambda$, the covariance matrix in \eqref{eq:Cov0} is close to being proportional to the inverse of the landscape Hessian. 

\textbf{We begin by considering the diagonal Hessian case:}

\begin{prp} \label{prop:inverseRelation}
For every invertible diagonal Hessian matrix $\mathcal{H}=\textrm{diag}\left[\Delta_1,\ldots,\Delta_n \right]$ and $\lambda \in \N$, there exists a constant $\alpha = \alpha({\cal H},\lambda) > 0$ such that
$$\lim_{\lambda \to \infty}  \alpha \mathcal{C}  \mathcal{H} = \mathbf{I}.$$
\end{prp}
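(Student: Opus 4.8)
The plan is to exploit the explicit winner density \eqref{eq:CDFy} and analyze its behavior as $\lambda\to\infty$ through a rescaling around the optimum $\vec{x}^*$. First I would use the stated translation invariance to put $\vec{x}_0=\vec{0}$ and work with the displacement $\vec{u}=\vec{y}-\vec{x}^*$, whose covariance coincides with $\mathcal{C}$. Since the reweighting factor $\lambda\bigl(1-\texttt{CDF}_{\psi}(J(\vec{x}))\bigr)^{\lambda-1}$ is largest exactly where $J$ is smallest, namely at $\vec{x}^*$, I expect the winner to concentrate at $\vec{x}^*$ and its fluctuations to live at a vanishing scale. The key preliminary computation is the small-value asymptotics of $\texttt{CDF}_{\psi}$: because $\psi=J(\vec{z})=\sum_i\Delta_i(z_i-x^*_i)^2$ is a positive definite quadratic form in a Gaussian vector, the event $\{\psi\le v\}$ is a thin ellipsoid about $\vec{x}^*$ whose Gaussian mass behaves like $A\,v^{n/2}$ for a constant $A=A(\mathcal{H},\vec{x}^*)>0$ as $v\to 0^+$. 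This pins down the correct fluctuation scale as $\lambda^{-1/n}$.

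I would then introduce the rescaled variable $\vec{s}=\lambda^{1/n}(\vec{y}-\vec{x}^*)$ and set $q=\vec{s}^T\mathcal{H}\vec{s}$. Substituting $\vec{x}=\vec{x}^*+\lambda^{-1/n}\vec{s}$ into \eqref{eq:CDFy}, using $J(\vec{x})=\lambda^{-2/n}q$, the expansion $\texttt{CDF}_{\psi}(\lambda^{-2/n}q)\approx A\lambda^{-1}q^{n/2}$, the limit $(1-A\lambda^{-1}q^{n/2})^{\lambda-1}\to e^{-Aq^{n/2}}$, and the continuity of the Gaussian mutation density (approximately constant at scale $\lambda^{-1/n}$ around $\vec{x}^*$), I expect the density of $\vec{s}$ to converge to one proportional to $e^{-A(\vec{s}^T\mathcal{H}\vec{s})^{n/2}}$, with the Jacobian $\lambda^{-1}$ exactly cancelling the prefactor $\lambda$ in \eqref{eq:CDFy}.

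The proportionality to $\mathcal{H}^{-1}$ then follows from a symmetry argument. Writing $t_i=\sqrt{\Delta_i}\,s_i$ turns the limiting density into one proportional to $e^{-A|\vec{t}|^{n}}$, which is spherically symmetric; hence the limiting law of $\vec{t}$ has mean $\vec{0}$ and isotropic covariance $\sigma^2\mathbf{I}$ for some $\sigma^2>0$, so $\mathrm{Cov}(s_i,s_j)\to\sigma^2\delta_{ij}/\Delta_i=\sigma^2(\mathcal{H}^{-1})_{ij}$. Undoing the rescaling gives $\mathcal{C}=\lambda^{-2/n}\,\mathrm{Cov}(\vec{s})\to\lambda^{-2/n}\sigma^2\mathcal{H}^{-1}$, and choosing $\alpha=\lambda^{2/n}/\sigma^2$ yields $\alpha\mathcal{C}\mathcal{H}\to\mathbf{I}$. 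Equivalently, since $\mathcal{H}$ is diagonal, this amounts to showing that the off-diagonal entries $\mathcal{C}_{ij}\Delta_j$ vanish while the diagonal entries $\mathcal{C}_{ii}\Delta_i$ approach a common constant.

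The step I expect to be the main obstacle is upgrading the convergence in distribution of $\vec{s}$ to convergence of its second moments, i.e.\ justifying that $\mathcal{C}$ itself (and not merely the limiting shape) converges. This requires a uniform-integrability/tail estimate showing that $\bigl(1-\texttt{CDF}_{\psi}(J(\vec{x}))\bigr)^{\lambda-1}$ suppresses the large-$\vec{s}$ region strongly enough, uniformly in $\lambda$, so that no second-moment contribution escapes to infinity; a dominating bound of the form $e^{-c|\vec{s}|^{n}}$ valid for all large $\lambda$ would suffice. A secondary technical point is controlling the error terms in the $\texttt{CDF}_{\psi}$ expansion and in the local constancy of the Gaussian density, and checking that the odd-in-$\vec{s}$ corrections they generate perturb only the (vanishing) mean and not the limiting covariance.
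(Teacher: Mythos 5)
Your proposal is correct in outline, but it takes a genuinely different route from the paper. The paper never rescales or identifies a limiting distribution: after the same linear change of variables $r_i=\sqrt{\Delta_i}(x_i-x_i^*)$ (which is also the crux of your argument, since it makes the selection factor $\lambda\left(1-\texttt{CDF}_{\psi}(\|\vec{r}\|^2)\right)^{\lambda-1}$ spherically symmetric while the Gaussian factor $\exp(-\hat{J}(\vec{r}))$ is only locally constant), it splits the integration domain into $A=\{\vec{r}:\texttt{CDF}_{\psi}(\|\vec{r}\|^2)>1/\sqrt{\lambda}\}$, where the selection weight is exponentially small, and its complement $\bar A$, which shrinks to a point; on $\bar A$ it replaces $\exp(-\hat{J}(\vec{r}))$ by a constant $\phi$, kills the odd integrands by symmetry, and controls all the error terms by Cauchy--Schwarz and convexity, defining $\alpha$ implicitly as the reciprocal of $c_{\mathcal H}\int_{\bar A}\phi\, r_i^2\,\lambda\left(1-\texttt{CDF}_{\psi}(\|\vec{r}\|^2)\right)^{\lambda-1}\mathrm{d}\vec{r}$. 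You instead prove a scaling limit: fluctuations at scale $\lambda^{-1/n}$, limiting density proportional to $\exp\left(-A\,(\vec{s}^T\mathcal{H}\vec{s})^{n/2}\right)$, and explicit $\alpha=\lambda^{2/n}/\sigma^2$; this is essentially the extreme-value (GEVD/Weibull-type) route that the paper mentions as an alternative in item (iv) of Section~\ref{sec:previousresults} but does not pursue. The trade-off: the paper's argument is softer, needing only that $\texttt{CDF}_{\psi}$ vanishes at $0$ and is continuous and strictly increasing, with no power-law asymptotics; yours is more quantitative (it exhibits the rate $\mathcal{C}\sim\lambda^{-2/n}\sigma^2\mathcal{H}^{-1}$ and the limit law), but it genuinely requires the two ingredients you flag, namely the uniform expansion $\texttt{CDF}_{\psi}(v)=Av^{n/2}(1+o(1))$ and a uniform-integrability bound to pass from convergence in distribution to convergence of second moments. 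Both ingredients do hold here: the two-sided bound $c\,v^{n/2}\le \texttt{CDF}_{\psi}(v)$ for small $v$, together with $\texttt{CDF}_{\psi}(v_0)>0$ for the region $\|\vec{s}\|\gtrsim\lambda^{1/n}$ (where the weight is at most $(1-\texttt{CDF}_{\psi}(v_0))^{\lambda-1}$ and the Gaussian factor takes over), yields exactly the dominating function $e^{-c\|\vec{s}\|^n}$ plus an exponentially negligible far region, so dominated convergence closes the gap you identified.
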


\begin{proof}
In the following, $\eps_1,\eps_2,\ldots$ tend to zero as $\lambda$ tends to infinity
and $c_0,c_1,c_2,\ldots$ are large positive constants (that may depend on $\mathcal{H}$ and $\vec{x}^*$).
In the proof, we explain why each $\eps_i$ tends to zero (either explicitly or by comparing it to $\eps_j$ for $j<i$).


We start by studying the expectation vector.
We already know from Eqs.\ \ref{eq:Exp0} and \ref{eq:CDFy} that
\begin{equation}\label{eq:proofExp_i}
\displaystyle \mathcal{E}_{i} = \int x_i 
\lambda (1-\texttt{CDF}_\psi(J(\vec{x})))^{\lambda-1} f(\|\vec{x}\|)
 \textrm{d}\vec{x} ,
\end{equation}
where $f(v) = \texttt{PDF}_{\vec{z}}((v,0,0,\ldots,0))$.
By changing variables from $\vec{x}$ to $\vec{r}$, defined as
$r_i = \sqrt{\Delta_i} \cdot ( x_i - x_i^*)$ for all $i$, 
one gets
\begin{equation}\label{eq:proofExp_i-X_*}
 \mathcal{E}_{i} - x^*_i =
\frac{c_{\cal H}}{\sqrt{\Delta_i}}  \int r_i G(\vec{r}) \textrm{d}\vec{r} ,
\end{equation}
where $c_{\cal H} >0$ is a constant that depends on $\mathcal{H}$ and
$$G(\vec{r}) = \lambda (1-\texttt{CDF}_\psi(\|\vec{r}\|^2))^{\lambda-1} \exp \left(-\hat{J}(\vec{r})\right),$$
{where $\hat{J}$ denotes some quadratic function of $\vec{r}$}, having the general form $\hat{J}\left(\vec{r}\right)=\vec{r}^T \mathbf{B} \vec{r}+\vec{c}^T\vec{r}+\vec{d}$.

\medskip

We partition the integration to two parts: firstly on 
$$A = \{ \vec{r}: \texttt{CDF}_\psi(\|\vec{r}\|^{2}) > 1/\sqrt{\lambda}\}$$
and secondly on its complement $\bar A$.
 
The integral on $A$ is at most $c_1 \lambda e^{- \sqrt{\lambda}}$: 
\begin{align}
I_A& := \frac{c_{\cal H}}{\sqrt{\Delta_i}} \int_A r_i G(\vec{r}) \textrm{d}\vec{r} \notag \\
& \leq \lambda (1-1/\sqrt{\lambda})^{\lambda-1} \frac{c_{\cal H}}{\sqrt{\Delta_i}} \int  |r_i| \exp \left(-\hat{J}(\vec{r})\right) \textrm{d}\vec{r} \notag \\
&  \leq c_1 \lambda (1-1/\sqrt{\lambda})^{\lambda-1} \leq c_1 \lambda e^{-\sqrt{\lambda}}
\label{eqn:IofA}.
\end{align}

We now move to the integral over $\bar A$.
The only way to make $\texttt{CDF}_\psi(\|\vec{r}\|^2)$ small
is to have $\|\vec{r}\|$ sufficiently small; that is, we claim that
\begin{align}
\bar A \subset \{ \vec{r} : \|\vec{r}\| < \eps_1 \}.
\label{eqn:barAin}
\end{align}
Indeed,
recall that $\psi$ is the objective function value for an individual mutation.
The function $\texttt{CDF}_\psi$ is $0$ at $0$,
and is strictly increasing on $[0,\infty)$.
So for every $\delta >0$,
there is $\lambda_0 > 0$ so that 
$\texttt{CDF}_\psi(\delta) > 1/\sqrt{\lambda}$
for all $\lambda > \lambda_0$.

We now claim that there is a constant $\phi >0$ so that
for all $\vec{r} \in \bar A$,
\begin{align}
\left|\exp \left(- \hat{J}(\vec{r}) \right) - \phi \right| \leq \eps_2
\leq \eps_3 \exp \left(- \hat{J}(\vec{r}) \right).
\label{eqn:phiDef}
\end{align}
Indeed, all points in $\bar A$ are at most $2 \eps_1$-far apart,
by \eqref{eqn:barAin}.
The function $\exp (- \hat{J}(\cdot) )$ is continuous.
So, when $\eps_1 \to 0$,
its values on $\bar A$ are $\eps_2$-close to some 
fixed value $\phi$,
where $\eps_2$ tends to $0$ as $\eps_1$ tends to $0$.
There is also
some constant $c_0  >0$
so that $\sup \{ |\hat{J}(\vec{r})|
: \vec{r} \in \bar A\} < c_0$
as long as $\eps_1 \leq 1$.
So we can take $\eps_3 = c_0 \eps_2$.

In addition,
\begin{align}
 \int_{\bar A} 
 \phi \cdot r_i \lambda (1-\texttt{CDF}_\psi(\|\vec{r}\|^2))^{\lambda-1} ~ \textrm{d}\vec{r} = 0,
 \label{eqn:barAphi0}
\end{align}
because this is an integral of an odd function over an even domain.
Therefore,
\begin{small}
\begin{align}
I_{\bar A}& := 
\frac{c_{\cal H}}{\sqrt{\Delta_i}} \int_{\bar A}  r_i G(\vec{r}) \textrm{d}\vec{r} \notag \\
& = \frac{c_{\cal H}}{\sqrt{\Delta_i}} \int_{\bar A}  r_i 
\lambda (1-\texttt{CDF}_\psi(\|\vec{r}\|^2))^{\lambda-1} 
\left( \exp \left(- \hat{J}(\vec{r}) \right) - \phi \right) \textrm{d}\vec{r} \notag \\
& \leq 
\eps_3
\frac{c_{\cal H}}{\sqrt{\Delta_i}} \int |r_i| G(\vec{r}) \textrm{d}\vec{r}  \notag \\
& \leq 
\eps_4 \sqrt{ 
 \int r_i^2 G(\vec{r}) \textrm{d}\vec{r}  } \label{eqn:I_0boudn} ,
\end{align}
\end{small}where the transition into the last inequality 
uses convexity, and $\eps_4$ is some constant times $\eps_3$.

Next, we target the covariance diagonal term. Fix $i$ for now.
We claim that 
\begin{equation} \label{eq:ubCii}
\displaystyle \mathcal{C}_{ii} \geq \frac{1}{c_2 \lambda^2} .
\end{equation}
The reason being that for every $\eps_5 > 0$,
a normally distributed $z_i$ takes values in any interval of length $L$,
with a probability at most $2 L$.
So, by the union bound over the $\lambda$ choices of mutations,
$\displaystyle \prob\left[ |y_i - {\cal E}_i|< \sfrac{1}{4 \lambda} \right] \leq \sfrac{1}{2}$.
Hence, ${\cal C}_{ii} = \E \left[(y_i-{\cal E}_i)^2\right] \geq \sfrac{1}{32 \lambda^2}$.

Using the same change of variables as above:
\begin{align*}
 \mathcal{C}_{ii}
& = \int (x_i - \mathcal{E}_i)^2
\lambda (1-\texttt{CDF}_\psi(J(\vec{x})))^{\lambda-1} f(\|\vec{x}\|)
 \textrm{d}\vec{x} \\
& = c_{\cal H} \int \left(\frac{r_i}{\sqrt{\Delta_i}} + x_i^* - \mathcal{E}_i \right)^2 G(\vec{r}) \textrm{d}\vec{r} \\
& = \left(  c_{\cal H} \int \frac{r^2_i}{\Delta_i}  G(\vec{r}) \textrm{d}\vec{r}   \right)
- 2 (x_i^*-\mathcal{E}_i)^2 + (x_i^*-\mathcal{E}_i)^2   \tag{using \eqref{eq:proofExp_i-X_*}} \\
& = S - (x_i^*-\mathcal{E}_i)^2 , \notag
\end{align*}
where
$$S =c_{\cal H} \int \frac{r^2_i}{\Delta_i}  G(\vec{r}) \textrm{d}\vec{r}.$$
By \eqref{eqn:IofA}, \eqref{eq:ubCii} and \eqref{eqn:I_0boudn},

\begin{align*}
(x_i^*-\mathcal{E}_i)^2
& =  (I_A + I_{\bar A})^2 \\
& \leq 2 (I^2_A + I^2_{\bar A}) \\
& \leq 2 \eps_6 ( {\cal C}_{ii} +   S) ,
\end{align*}
where
$$\eps_6 = \max \left \{c_1 c_2 \lambda^3 e^{-\sqrt{\lambda}},
\frac{\eps_4^2 \Delta_i}{c_{\cal H}} 
\right\}.$$
We see that $\eps_6 \to 0$ as $\lambda \to \infty$.
Hence, 
${\cal C}_{ii}
\geq S - 2 \eps_6 ( {\cal C}_{ii} +   S)$ 
or
\begin{align*}
{\cal C}_{ii}
\geq S \frac{1 - 2 \eps_6}{1+2\eps_6} .
\end{align*}
It follows that
\begin{align}
(1-\eps_7) S \leq  {\cal C}_{ii} \leq S. \label{eqn:CiiVsIi}
\end{align}
Consequently,
\begin{align}
(x_i^*-\mathcal{E}_i)^2  
& \leq 2 \eps_6 ( {\cal C}_{ii} +   S) \leq
6 \eps_6 {\cal C}_{ii}
= \eps_8 {\cal C}_{ii} ,
\label{eqn:xIEiVsCii}
\end{align}
as long as $\eps_7 \leq 1/2$.
We see that $\eps_8 \to 0$
when $\lambda \to \infty$.

We now claim that $\Delta_i S$
hardly depends on $i$.
Let
$$S_{A} : = c_{\cal H} \int_A \frac{r^2_i}{\Delta_i} G(\vec{r}) \textrm{d}\vec{r}$$
and $S_{\bar A} = S - S_{A}$.
Let 
\begin{equation}\label{eq:alphadefined}
\displaystyle 
\alpha = \frac{1}{c_{\cal H} \int_{\bar A} \phi \cdot r^2_i \lambda (1-\texttt{CDF}_\psi(\|\vec{r}\|^2))^{\lambda-1} ~ \textrm{d}\vec{r}},
\end{equation}
where $\phi > 0$ is defined in \eqref{eqn:phiDef};
note that $\alpha > 0$ does not depend on $i$.
Bound \begin{align*}
\left| \frac{1}{\alpha} - \Delta_i S_{\bar A} \right|
& \leq \eps_2 c_{\cal H} \int_{\bar A} r^2_i \lambda (1-\texttt{CDF}_\psi(\|\vec{r}\|^2))^{\lambda-1}  ~ \textrm{d}\vec{r}\\
& = \eps_2 \frac{1}{\phi \alpha} ,
\end{align*}
which implies 
$$| 1 - \alpha \Delta_i S_{\bar A}| \leq \eps_2 \frac{1}{\phi} =
\eps_9 .$$
We see that $\eps_9 \to 0$ as $\lambda \to \infty$,
since $\phi$ does not depend on~$\lambda$.
Similarly to \eqref{eqn:IofA} and by \eqref{eq:ubCii},
we know that $S_A \leq \sfrac{{\cal C}_{ii}}{2}$ for large $\lambda$.
Since $S_{\bar A} + S_A \geq {\cal C}_{ii}$, we get
$$S_{\bar A} \geq \frac{{\cal C}_{ii}}{2} \geq \frac{1}{c_3 \lambda^2} .$$
Hence,
\begin{align}
\alpha \leq c_4 \lambda^2.
\label{eqn:alphaIslarge}
\end{align}
Similarly to \eqref{eqn:IofA} again,
$$|\alpha \Delta_i S_A |\leq \eps_{10}.$$
We see that $\eps_{10} \to 0$
as $\lambda \to \infty$.
because $e^{-\sqrt{\lambda}}$ tends
to zero faster than any polynomial in $\lambda$,

Now, bound
\begin{small}
\begin{align}
& \Delta_i I^2_{\bar A} & \notag \\
& =  c^2_{\cal H} 
\left[ \int_{\bar A}   r_i
\lambda (1-\texttt{CDF}_\psi(\|\vec{r}\|^2))^{\lambda-1} 
\left( \exp \left(- \hat{J}(\vec{r}) \right) - \phi \right)
\textrm{d}\vec{r} \right]^2 \tag{using \eqref{eqn:barAphi0}} \\
& \leq  c^2_{\cal H} 
\left[ \eps_3 \int_{\bar A}  |r_i| \lambda (1-\texttt{CDF}_\psi(\|\vec{r}\|^2))^{\lambda-1} \exp \left(- \hat{J}(\vec{r}) \right)
 \textrm{d}\vec{r} \right]^2 \tag{using \eqref{eqn:phiDef}} \\
& \leq \eps_{11} \int_{\bar A}  r_i^2 \lambda (1-\texttt{CDF}_\psi(\|\vec{r}\|^2))^{\lambda-1} 
 \exp \left(- \hat{J}(\vec{r}) \right) \textrm{d}\vec{r} \tag{convexity} \\
& \leq \eps_{12} \int_{\bar A}  r_i^2 \lambda (1-\texttt{CDF}_\psi(\|\vec{r}\|^2))^{\lambda-1} 
  \textrm{d}\vec{r} \tag{$\bar A$ is bounded} \\
 & \leq \eps_{13} \frac{1}{\alpha}, \label{eqn:I2barA}
 \end{align}
\end{small}
where $\eps_{13}$ is some constant times $\eps_3$.

Hence, using \eqref{eqn:alphaIslarge} and \eqref{eqn:IofA},
\begin{align}
\label{eqn:xiEileq}
(x_i^*-\mathcal{E}_i)^2
 \leq  2 (I_A^2 + I_{\bar A}^2 )
 \leq \eps_{14} \frac{1}{\alpha \Delta_i} .
\end{align}
Similarly to that $\eps_8 \to 0$
as $\lambda \to \infty$ in~\eqref{eqn:xIEiVsCii},
we see that $\eps_{14} \to 0$ as $\lambda \to 0$.
Finally,
\begin{align*}
| 1 - \alpha \Delta_i {\cal C}_{ii} |
= |1 - \alpha \Delta_i (S_A + S_{\bar A} - (x_i^*-\mathcal{E}_i)^2 )| \\
\leq \eps_{9}+\eps_{10}+\eps_{14} = \eps_{15} .
\end{align*}
\textbf{This completes the treatment of the diagonal of $\alpha {\cal C} {\cal H}$}.

\medskip

Let us move to the off-diagonal term ${\cal C}_{ij}$ for $i \neq j$.
With the same substitution, using \eqref{eq:proofExp_i-X_*},
\begin{small}
\begin{align*}
\mathcal{C}_{ij}  
& = c_{\cal H} \int \left(\frac{r_i}{\sqrt{\Delta_i}} + x_i^* - \mathcal{E}_i \right) \left(\frac{r_j}{\sqrt{\Delta_j}} + x_j^* - \mathcal{E}_j \right)
G(\vec{r}) \textrm{d}\vec{r} \\
& = \left( c_{\cal H} \int \frac{r_i r_j}{\sqrt{\Delta_i \Delta_j}}
G(\vec{r}) \textrm{d}\vec{r} \right)
  - (x_j^* - \mathcal{E}_j ) (x_i^* - \mathcal{E}_i ) .
\end{align*}
\end{small}

We need to show that the two summands are small, even when
multiplied by $\alpha$.
The second summand is small by~\eqref{eqn:xiEileq}.
Bound the first summand as follows.
By symmetry,
\begin{align*}
 \int_{\bar A} 
 \phi \cdot r_i r_j \lambda (1-\texttt{CDF}_\psi(\|\vec{r}\|^2))^{\lambda-1} ~ \textrm{d}\vec{r} = 0.
\end{align*}
Write
\begin{small}
\begin{align*}
\Bigg| \int  & r_i r_jG(\vec{r}) \textrm{d}\vec{r} \Bigg| \\
 & = 
\Bigg| \int  r_i r_j  \lambda (1-\texttt{CDF}_\psi(\|\vec{r}\|^2))^{\lambda-1} 
\left( \exp \left(-\hat{J}(\vec{r})\right) - \phi \right)
 \textrm{d}\vec{r} \Bigg| \\
& \leq \eps_3 \int  |r_i| |r_j|  \lambda (1-\texttt{CDF}_\psi(\|\vec{r}\|^2))^{\lambda-1} 
 \exp \left(-\hat{J}(\vec{r})\right)  \textrm{d}\vec{r} 
 \tag{using \eqref{eqn:phiDef}} \\
 & \leq \eps_{16} 
 \sqrt{\int  r_i^2   \lambda (1-\texttt{CDF}_\psi(\|\vec{r}\|^2))^{\lambda-1} 
 \exp \left(-\hat{J}(\vec{r})\right)  \textrm{d}\vec{r}}  \\ 
 & \qquad \times  
 \sqrt{\int  r_j^2   \lambda (1-\texttt{CDF}_\psi(\|\vec{r}\|^2))^{\lambda-1} 
 \exp \left(-\hat{J}(\vec{r})\right)  \textrm{d}\vec{r}}  \tag{Cauchy-Schwartz}
\\
 & \leq \eps_{17} \frac{1}{\alpha}. \tag{see end of \eqref{eqn:I2barA}}
\end{align*}
\end{small}
Here $\eps_{17}$ is some constant times $\eps_3$.
Finally,
$$ \big| (\alpha {\cal C} {\cal H})_{ij} \big|
= \big| \alpha {\cal C}_{ij} \Delta_j \big| \leq 
c_5 \eps_{17} +  c_6 \eps_{14} = \eps_{18} .$$
\end{proof}

\medskip

\textbf{Next, we show how to handle non-diagonal ${\cal H}$.}

\begin{prp} \label{prop:commuting}
Let the orthogonal matrix $\mathcal{U}$ diagonalize $\mathcal{H}$; that is, 
 $\mathcal{U} \mathcal{H} \mathcal{U}^T = \mathcal{D} = \textrm{diag}\left[\Delta_1,\Delta_2,\ldots,\Delta_n\right]$. Let $\mathcal{C}_{\mathcal{H}}$ and $\mathcal{C}_{\mathcal{D}}$ denote the covariance matrices over winning decision vectors 
for $J_{{\cal H},\vec{x}^*}$ and $J_{{\cal D},{\cal U}\vec{x}^*}$, respectively. Then, 
$$\mathcal{C}_{\mathcal{D}} = \mathcal{U} \mathcal{C}_{\mathcal{H}} \mathcal{U}^T.$$ 
\end{prp}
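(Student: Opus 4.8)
The plan is to exploit the rotational invariance of the isotropic Gaussian mutation together with the orthogonal change of variables $\vec{w} = \mathcal{U}\vec{x}$. The entire mutation-and-selection pipeline producing the winner is equivariant under this change of coordinates, so the diagonal winner is distributed as $\mathcal{U}$ applied to the Hessian winner, and the stated identity then follows from the elementary transformation rule for covariance matrices under linear maps.

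First I would verify that the change of variables identifies the two problems. Since $\mathcal{U}\mathcal{H}\mathcal{U}^T = \mathcal{D}$ gives $\mathcal{U}^T\mathcal{D}\mathcal{U} = \mathcal{H}$, a direct computation shows $J_{\mathcal{D},\mathcal{U}\vec{x}^*}(\mathcal{U}\vec{x}) = (\mathcal{U}\vec{x} - \mathcal{U}\vec{x}^*)^T\mathcal{D}(\mathcal{U}\vec{x} - \mathcal{U}\vec{x}^*) = (\vec{x}-\vec{x}^*)^T\mathcal{H}(\vec{x}-\vec{x}^*) = J_{\mathcal{H},\vec{x}^*}(\vec{x})$. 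Because $\mathcal{U}$ is orthogonal, $\mathcal{U}\vec{z}$ has the same $\mathcal{N}(\vec{0},\mathbf{I})$ law as $\vec{z}$; combined with the value identity above, this shows that the objective-value random variable $\psi$ has an identical distribution in both problems, so the two share a single $\texttt{CDF}_\psi$ and a single $\texttt{PDF}_\psi$.

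Next I would push this through the winner density of \eqref{eq:CDFy}. For the diagonal problem, $\texttt{PDF}_{\vec{y}_{\mathcal{D}}}(\vec{w}) = \lambda(1-\texttt{CDF}_\psi(J_{\mathcal{D},\mathcal{U}\vec{x}^*}(\vec{w})))^{\lambda-1}\texttt{PDF}_{\vec{z}}(\vec{w})$. Substituting $\vec{w}=\mathcal{U}\vec{x}$ and using both the value identity and the rotational invariance $\texttt{PDF}_{\vec{z}}(\mathcal{U}\vec{x}) = \texttt{PDF}_{\vec{z}}(\vec{x})$ yields $\texttt{PDF}_{\vec{y}_{\mathcal{D}}}(\mathcal{U}\vec{x}) = \texttt{PDF}_{\vec{y}_{\mathcal{H}}}(\vec{x})$. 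Since $|\det\mathcal{U}|=1$, this is precisely the statement that $\vec{y}_{\mathcal{D}}$ and $\mathcal{U}\vec{y}_{\mathcal{H}}$ have the same law. Applying the change of variables to \eqref{eq:Exp0} then gives $\mathcal{E}_{\mathcal{D}} = \mathcal{U}\mathcal{E}_{\mathcal{H}}$, so that $\vec{w}-\mathcal{E}_{\mathcal{D}} = \mathcal{U}(\vec{x}-\mathcal{E}_{\mathcal{H}})$; feeding this into \eqref{eq:Cov0} factors $\mathcal{U}$ out on the left and $\mathcal{U}^T$ out on the right of the integral, leaving $\mathcal{C}_{\mathcal{D}} = \mathcal{U}\mathcal{C}_{\mathcal{H}}\mathcal{U}^T$.

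There is no genuine analytic obstacle here; the argument is a bookkeeping exercise in a single orthogonal substitution. The one point that requires care is confirming that $\psi$ — and hence $\texttt{CDF}_\psi$ — is truly the same object for both landscapes, which is exactly where isotropy of the mutation enters and is the reason the argument would break for anisotropic mutations. I would therefore isolate the distributional equality of $\psi$ as the crux of the proof and treat the covariance manipulation as a routine consequence.
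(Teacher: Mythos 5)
Your proposal is correct, but it reaches the crucial distributional identity $\vec{y}_{\mathcal{D}} \stackrel{d}{=} \mathcal{U}\vec{y}_{\mathcal{H}}$ by a genuinely different mechanism than the paper. You argue at the level of densities: you invoke the winner-density formula \eqref{eq:CDFy}, verify that $\texttt{CDF}_{\psi}$ is one and the same object for both landscapes, use $\texttt{PDF}_{\vec{z}}(\mathcal{U}\vec{x}) = \texttt{PDF}_{\vec{z}}(\vec{x})$ together with $|\det \mathcal{U}| = 1$, and then change variables inside the integrals \eqref{eq:Exp0} and \eqref{eq:Cov0}. The paper instead uses a coupling argument that never touches any density: it samples the same $\lambda$ Gaussians $\vec{x}_1,\ldots,\vec{x}_\lambda$ for both problems, observes via the value identity $J_{\mathcal{D},\mathcal{U}\vec{x}^*}(\mathcal{U}\vec{x}) = J_{\mathcal{H},\vec{x}^*}(\vec{x})$ that the winner among $\mathcal{U}\vec{x}_1,\ldots,\mathcal{U}\vec{x}_\lambda$ under $J_{\mathcal{D},\mathcal{U}\vec{x}^*}$ is \emph{pathwise} equal to $\mathcal{U}\vec{y}_{\mathcal{H}}$, and then invokes rotational invariance of $\mathcal{N}(\vec{0},\mathbf{I})$ to conclude that the rotated samples are a legitimate sample for the diagonal problem; expectation and covariance then transform by linearity of $\E$. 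What each route buys: the paper's coupling is independent of \eqref{eq:CDFy} — a formula whose derivation rests on the nontrivial justification of \eqref{eq:x_pdf_2} in the appendix — and treats the $\arg\min$ directly as a random variable, making it the more economical argument; your route, by contrast, makes explicit at the formula level exactly where isotropy enters (the shared $\texttt{CDF}_{\psi}$ and the rotational invariance of $\texttt{PDF}_{\vec{z}}$), which you correctly isolate as the crux, at the cost of presupposing the existence and form of the winner's density. Both proofs are complete and rest on the same two structural facts; only the bookkeeping differs.
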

\begin{proof}
The relation between the two objective functions reads:
\begin{align}
J_{\mathcal{H},\vec{x}^*}(\vec{x})& =  (\vec{x}-\vec{x}^{*})^T \cdot \mathcal{H} \cdot (\vec{x}-\vec{x}^{*}) \notag\\
& = (\vec{x}-\vec{x}^{*})^T \cdot \mathcal{U}^T \mathcal{D} \mathcal{U} \cdot (\vec{x}-\vec{x}^{*}) \notag\\
& = (\mathcal{U} (\vec{x}-\vec{x}^{*}))^T \cdot \mathcal{D} \cdot \mathcal{U}(\vec{x}-\vec{x}^{*}) \notag\\
& = J_{\mathcal{D}, {\cal U} \vec{x}^*}(\mathcal{U}\vec{x}).
\end{align}
Now, 
consider the following experiment. 
Sample $\lambda$ Gaussian mutations
$\vec{x}_1,\ldots,\vec{x}_\lambda$. 
Let $\vec{y}_{{\cal H}}$ be the winner among $\vec{x}_1,\ldots,\vec{x}_\lambda$
with respect to  $J_{\mathcal{H}, \vec{x}}$.\\
The winner among $\mathcal{U}\vec{x}_1,\ldots,\mathcal{U}\vec{x}_\lambda$ 
with respect to $J_{\mathcal{D}, {\cal U} \vec{x}^*}$
is 
$$\vec{y}_{{\cal D}} = {\cal U} \vec{y}_{{\cal H}}.$$
Since $\mathcal{N} (\vec{0},\mathbf{I})$ is invariant under rotations, 
the vectors\\ 
$\mathcal{U}\vec{x}_1,\ldots,\mathcal{U}\vec{x}_\lambda$ 
are also distributed as $\lambda$ Gaussian mutations.
The expectation vectors over such winners thus satisfy:
$$\mathcal{E}_{\mathcal{H}} = \E \left[ \vec{y}_{\cal H}\right] 
=  \E \left[\mathcal{U}^T \vec{y}_{\cal D} \right] = \mathcal{U}^T \E \left[\vec{y}_{\cal D} \right] =
 \mathcal{U}^T \mathcal{E}_{\mathcal{D}} .$$
The covariance matrices read:
\begin{align*}
\mathcal{C}_{\mathcal{H}} 
& = \E \left[(\vec{y}_{\cal H}-\mathcal{E}_{\mathcal{H}})  (\vec{y}_{\cal H}-\mathcal{E}_{\mathcal{H}})^T \right] \\
& = \E \left[(\mathcal{U}^T\vec{y}_{\cal D} -\mathcal{U}^T\mathcal{E}_{\mathcal{D}})  (\mathcal{U}^T\vec{y}_{\cal D} -\mathcal{U}^T \mathcal{E}_{\mathcal{D}})^T \right]\\
& = \mathcal{U}^T \E \left[(\vec{y}_{\cal D}-\mathcal{E}_{\mathcal{D}})  (\vec{y}_{\cal D}-\mathcal{E}_{\mathcal{D}})^T \right]\mathcal{U} \\
& = \mathcal{U}^T \mathcal{C}_{\mathcal{D}}\mathcal{U}.
\end{align*}
\end{proof}

\textbf{The general result is thus obtained:}

\medskip

\noindent\fbox{
\parbox{0.45\textwidth}{%
\begin{thm} \label{thm:generalInverseRelation}
For every invertible $\cal H$ and $\lambda \in \N$, there exists a constant $\alpha({\cal H},\lambda) > 0$ such that
$$\lim_{\lambda \to \infty}  \alpha \mathcal{C}  \mathcal{H} = \mathbf{I}.$$
\end{thm}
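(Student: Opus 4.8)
The plan is to reduce the general case directly to the diagonal case already settled in Proposition~\ref{prop:inverseRelation}, using the conjugation identity of Proposition~\ref{prop:commuting} as the bridge. Since $\mathcal{H}$ is symmetric (it is a Hessian) and invertible, fix an orthogonal matrix $\mathcal{U}$ that diagonalizes it, so that $\mathcal{U}\mathcal{H}\mathcal{U}^T = \mathcal{D} = \textrm{diag}\left[\Delta_1,\ldots,\Delta_n\right]$ is invertible and diagonal; equivalently, $\mathcal{H} = \mathcal{U}^T \mathcal{D} \mathcal{U}$.

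First I would invoke Proposition~\ref{prop:inverseRelation} for the diagonal matrix $\mathcal{D}$: there is a constant $\alpha = \alpha(\mathcal{D},\lambda) > 0$ with $\lim_{\lambda\to\infty}\alpha\,\mathcal{C}_{\mathcal{D}}\mathcal{D} = \mathbf{I}$, where $\mathcal{C}_{\mathcal{D}}$ is the covariance matrix for $J_{\mathcal{D},\mathcal{U}\vec{x}^*}$. Because $\mathcal{D}$ is determined by $\mathcal{H}$, this same $\alpha$ serves as the constant $\alpha(\mathcal{H},\lambda)$ claimed in the theorem. Next I would apply Proposition~\ref{prop:commuting}, which gives $\mathcal{C}_{\mathcal{D}} = \mathcal{U}\mathcal{C}_{\mathcal{H}}\mathcal{U}^T$, or equivalently $\mathcal{C}_{\mathcal{H}} = \mathcal{U}^T\mathcal{C}_{\mathcal{D}}\mathcal{U}$.

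The heart of the argument is then a one-line conjugation computation. Substituting $\mathcal{C}_{\mathcal{H}} = \mathcal{U}^T\mathcal{C}_{\mathcal{D}}\mathcal{U}$ and $\mathcal{H} = \mathcal{U}^T\mathcal{D}\mathcal{U}$ and using $\mathcal{U}\mathcal{U}^T = \mathbf{I}$,
\begin{equation*}
\alpha\,\mathcal{C}_{\mathcal{H}}\mathcal{H} = \alpha\,\mathcal{U}^T\mathcal{C}_{\mathcal{D}}\mathcal{U}\mathcal{U}^T\mathcal{D}\mathcal{U} = \mathcal{U}^T\left(\alpha\,\mathcal{C}_{\mathcal{D}}\mathcal{D}\right)\mathcal{U}.
\end{equation*}
Taking $\lambda\to\infty$ and using that left/right multiplication by the fixed orthogonal matrices $\mathcal{U}^T,\mathcal{U}$ is a continuous (indeed entrywise-linear) operation on matrices, the limit passes inside to yield $\mathcal{U}^T\mathbf{I}\mathcal{U} = \mathcal{U}^T\mathcal{U} = \mathbf{I}$, which is exactly the assertion.

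There is essentially no hard analytic obstacle left: all the delicate work --- the Laplace-type concentration of the winner's density onto the small ball $\bar A$, the odd-function vanishing of the leading term, and the control of $\alpha$ --- has already been discharged inside Proposition~\ref{prop:inverseRelation}. The only points needing care are bookkeeping ones: keeping the transposes consistent between the two equivalent forms of Proposition~\ref{prop:commuting}, and noting that interchanging the limit with the matrix products is legitimate because $\mathcal{U}$ does not depend on $\lambda$ and the matrices are of fixed finite size, so entrywise convergence suffices. I would also remark that positive-definiteness of $\alpha\mathcal{C}_{\mathcal{H}}$ is preserved under the orthogonal conjugation, consistent with the inverse-Hessian interpretation of the limit.
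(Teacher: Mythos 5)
Your proposal is correct and follows essentially the same route as the paper's own proof: both reduce to the diagonal case via Proposition~\ref{prop:commuting}, apply Proposition~\ref{prop:inverseRelation} to $\mathcal{D}=\mathcal{U}\mathcal{H}\mathcal{U}^T$, and pass the limit through conjugation by the fixed orthogonal matrix $\mathcal{U}$. The only cosmetic difference is that the paper writes the conclusion as $\mathbf{0}=\lim_{\lambda\to\infty}\mathcal{U}(\mathbf{I}-\alpha\mathcal{C}\mathcal{H})\mathcal{U}^T$ rather than conjugating $\alpha\mathcal{C}_{\mathcal{H}}\mathcal{H}$ directly, which is the same computation.
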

}
}

\medskip

Before proving the theorem, we note that (\textit{a posteriori}) for a given $\lambda$,
one can choose $\alpha$ as $\sfrac{1}{\beta}$
for 
$$\beta = \max \{ ({\cal C}{\cal H})_{ij} : i,j\}.$$

\begin{proof}

Proposition \ref{prop:commuting} shows for a general (non-diagonal) Hessian matrix
${\cal H}$ with diagonalizing matrix ${\cal U}$ and covariance matrix ${\cal C}$,
the matrix $\mathcal{U} \mathcal{C} \mathcal{U}^T$ is 
the covariance matrix for the diagonal Hessian 
${\cal D} = \mathcal{U} \mathcal{H} \mathcal{U}^T$.
Applying Proposition~\ref{prop:inverseRelation} to ${\cal D}$ yields
that
\begin{align*}
\mathbf{0} 
& = \lim_{\lambda \to \infty}  \mathbf{I} - \alpha \ \mathcal{U} \mathcal{C} \mathcal{U}^T \
\mathcal{U} \mathcal{H} \mathcal{U}^T  = \lim_{\lambda \to \infty} {\cal U} (\mathbf{I} - \alpha  \mathcal{C}  \mathcal{H}) \mathcal{U}^T ,
\end{align*}
as needed.
\end{proof}

\section{Numerical Validation}\label{sec:simulation}
We present numerical validation for two aspects: 
\begin{itemize}
\item Section \ref{sec:numPDF} assesses the nature of the winners' distribution. 
\item Section \ref{sec:simCH} validates Propositions \ref{prop:inverseRelation} and \ref{prop:commuting} by accounting for the deviation of $\mathcal{H}\mathcal{C}^{\texttt{stat}}$ away from the identity as a function of increasing $\lambda$ and/or translating the sampling point farther away from the optimum.
It also presents a systematic evaluation of the inverse relation when the Hessian's conditioning varies.
\end{itemize}
We consider three separable and two non-separable Hessian matrices:
\begin{enumerate}[(H-1)]
\item Discus:~$\left( \mathcal{H}_{\textrm{disc}}\right)_{11} = c,~\left( \mathcal{H}_{\textrm{disc}}\right)_{ii} = 1~~~i=2,\ldots, n$
\item Cigar:~$\left( \mathcal{H}_{\textrm{cigar}}\right)_{11} = 1,~\left( \mathcal{H}_{\textrm{cigar}}\right)_{ii} = c~~~i=2,\ldots,n$
\item Ellipse:~$\left( \mathcal{H}_{\textrm{ellipse}}\right)_{ii} = c^{\frac{i-1}{n-1}}$
\item Rotated Ellipse:~$\mathcal{H}_{\textrm{RE}} = \mathcal{R}\mathcal{H}_{\textrm{ellipse}} \mathcal{R}^{-1}$ where $\mathcal{R}$ is rotation by $\approx \frac{\pi}{4}$ radians in the plane spanned by $(1,0,1,0,\ldots)^T$ and $(0,1,0,1,\ldots)^T$;
\item Hadamard Ellipse:~$\mathcal{H}_{\textrm{HE}} = \mathcal{S}\mathcal{H}_{\textrm{ellipse}} \mathcal{S}^{-1}$ where $\mathcal{S}:=\textrm{Hadamard}(n)/\sqrt{n}$,
\end{enumerate}
with $c$ denoting a parametric condition number. \\
In practice, sampling is carried out as follows ($\mathcal{H}_0$ refers to 
one of the above matrices):
\begin{equation}\label{eqn:QSampling}
\displaystyle J\left(\vec{z} \right)= \vec{z}^{T}\mathcal{H}_0 \vec{z} + \vec{a}^{T}\vec{z},
\end{equation}
where $\vec{z}$ is normally distributed, and $\vec{a}$ is the translation vector.
Unless specified otherwise, the translation is set to a vector of ones, $\vec{a}:=\vec{1}$.

Finally, we note that although $\mathcal{C}$ and $\mathcal{C}^{\texttt{stat}}$ tend to zero, no regularization was needed \textit{de facto} in our experiments.
Nevertheless, due to this tendency to zero, regularization may be needed when attempting to extract landscape information
(considering the fact that $\alpha$ is unknown).
For instance, the study in \cite{Shir-FOCAL} reported the usage of Tikhonov filtering \cite{regtutorial} to this end.

All calculations were implemented using \texttt{python3}.\footnote{Source code: \url{https://github.com/ofersh/invHessian}}

\subsection{Probability Density Functions}\label{sec:numPDF}
We are interested in providing estimates for the distributions that are relevant to Theorem~\ref{thm:generalInverseRelation}. 
The $n$-dimensional density $\texttt{PDF}_{\vec{y}}$, which lies in the center of proof, is difficult to visualize.
We therefore numerically assess only the following two distributions:
\begin{enumerate}
\item The objective function values for a single sample | $\texttt{PDF}_{\psi}$. 
\item The winning objective function values per competitions over $\lambda$ individuals | $\texttt{PDF}_{\omega}$.
\end{enumerate}
These underlying probability functions for the quadratic model were extensively explored in \cite{Shir-Theory-foga17} for the near-optimum case.
We refer the reader to \cite{Shir-Theory-foga17} for the complete details, and at the same time summarize these functions in Table \ref{table:probfunc} below.
In practice, $\texttt{PDF}_{\omega}$ requires both $\texttt{CDF}_{\psi}$ and $\texttt{PDF}_{\psi}$ (Eq.~\ref{eq:y_pdf}), which in the quadratic case adhere to the $\chi^2$-distribution.
The generalized $\chi^2$-distribution for any proper Hessian \cite{MOSCHOPOULOS1984383} $F_{\mathcal{H}\chi^2}$ is \textbf{approximated} \cite{FD68_NASA} by $F_{\tau\chi^2}$.
In our calculations, we used $F_{\tau\chi^2}$ instead of $F_{\mathcal{H}\chi^2}$ to approximate the density $\texttt{PDF}_{\omega}$;
see Eq.~\ref{eq:pdf_omega_practice} below.
The density $\texttt{PDF}_{\omega}$ may alternatively be approximated by GEVD, which is not used herein.\footnote{As already mentioned in Section \ref{sec:previousresults}, the GEVD approximately describes the winning values when the population-size increases, as was shown in~\cite{Shir-Theory-foga17}.}
\begin{table*}[ht]
\centering
\begin{small}
\begin{tabular}{|l p{6.5cm}|}
\hline
exact generalized $\chi^2$-distribution &
\begin{equation}\label{eq:cdf_chi}
\begin{array}{l}
\medskip
\displaystyle 
F_{\mathcal{H}\chi^2}(\psi) = \int_0^{\infty} \frac{2}{\pi} \frac{\sin\frac{t\psi}{2}}{t}\\
\displaystyle \times \cos \left(-t\psi + \frac{1}{2} \sum_{j=1}^{n}\tan^{-1}2\Delta_j t \right) \\
\displaystyle \times \prod_{j=1}^{n}\left(1+\Delta_j^2 t^2 \right)^{-\frac{1}{4}} ~ \textrm{d}t
\end{array}
\end{equation}\\
\hline
approx.\ generalized $\chi^2$-distribution &
\begin{equation}\label{eq:Nchi}
\displaystyle 
F_{\tau\chi^2}\left(\psi \right) = \frac{\Upsilon^{\eta}}{\Gamma\left(\eta\right)} \int_0^{\psi}t^{\eta-1}\exp\left(-\Upsilon t \right) ~ \textrm{d}t
\end{equation}\\
& \begin{equation}
\Upsilon=\frac{1}{2}\frac{\sum_{i=1}^{n}\Delta_i}{\sum_{i=1}^{n}\Delta_i^2},~~~\eta=\frac{1}{2}\frac{\left(\sum_{i=1}^{n}\Delta_i\right)^2}{\sum_{i=1}^{n}\Delta_i^2}
\end{equation}\\
approx.\ generalized $\chi^2$-density &
\begin{equation}\label{eq:Nchi_density}
\displaystyle f_{\tau\chi^2}\left(\psi \right) = \frac{\Upsilon^{\eta}}{\Gamma\left(\eta\right)} \psi^{\eta-1}\exp\left(-\Upsilon \psi \right)
\end{equation}\\
\hline
\hline
approx.~density of $(1,\lambda)$-winning values &
\begin{equation}\label{eq:pdf_omega_practice}
\displaystyle \texttt{PDF}_{\omega}\left( v \right) =   \lambda \cdot \left( 1-F_{\tau\chi^2}\left(v \right)\right)^{\lambda-1}\cdot f_{\tau\chi^2}\left(v\right)
\end{equation}\\
\hline
\end{tabular}
\end{small}
\caption{Summary of probability functions describing quadratic functions' values near the optimum, as was explored in \cite{Shir-Theory-foga17}.
The approximate $(1,\lambda)$-winning values' density function refers to the calculated form of the winning values in practice. \label{table:probfunc}}
\end{table*}

We consider the (H-1)-(H-4) landscapes with conditioning $c=10$ at dimensionality $n=64$.
The objective function values are generated over a sample of $10^5$ individuals when constructing $\texttt{PDF}_{\psi}$. 
The winners are selected out of a population-size of $\lambda=10^3$ over $N_{\texttt{iter}}=10^6$ competitions when constructing $\texttt{PDF}_{\omega}$. 
The empirical densities are constructed by histograms and depicted side-by-side with the analytical forms in Figures \ref{fig:densities1} and \ref{fig:densities2}.

Firstly, it is evident that $f_{\tau\chi^2}$ constitutes a sound approximation for the generalized $\chi^2$-distribution over the various Hessian forms. Secondly, $\texttt{PDF}_{\omega}$, whose analytical form uses $F_{\tau\chi^2}$, exhibits high accuracy on all cases.
\begin{figure}
\begin{tabular}{c c}
\hline
(H-1) & \\
\epsfig{file=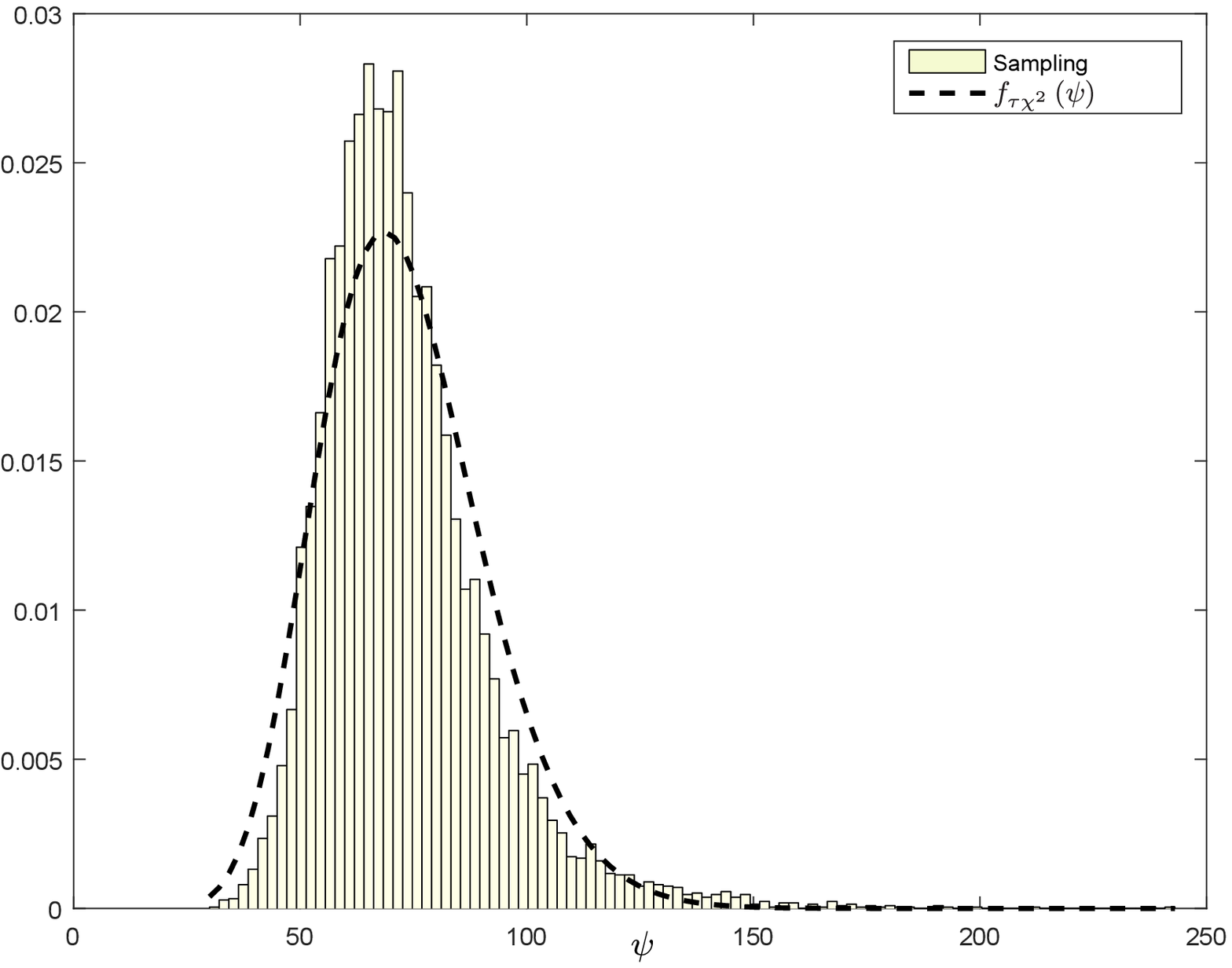, width=0.45\columnwidth} & \epsfig{file=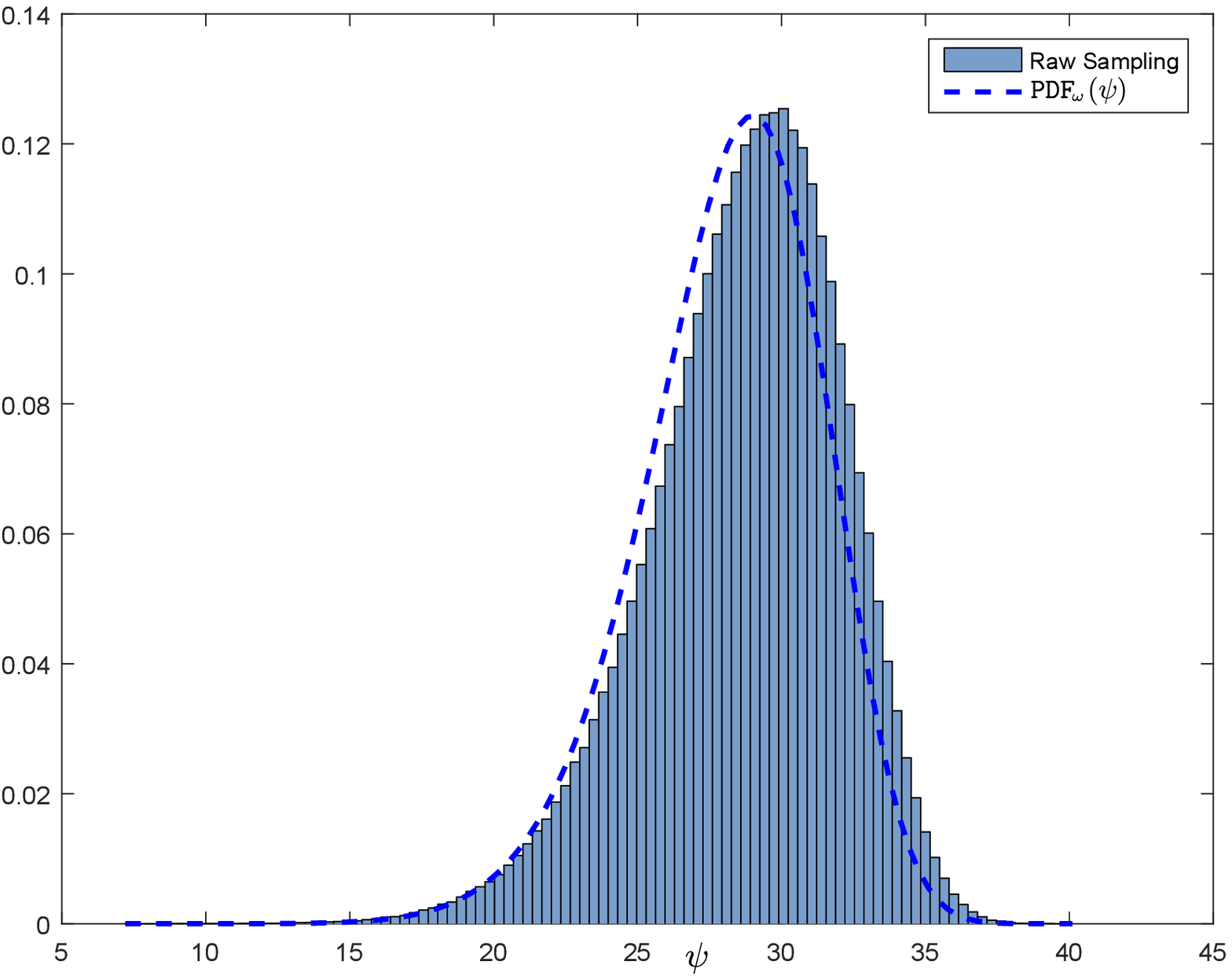, width=0.45\columnwidth} \\
\hline
(H-2) & \\
\epsfig{file=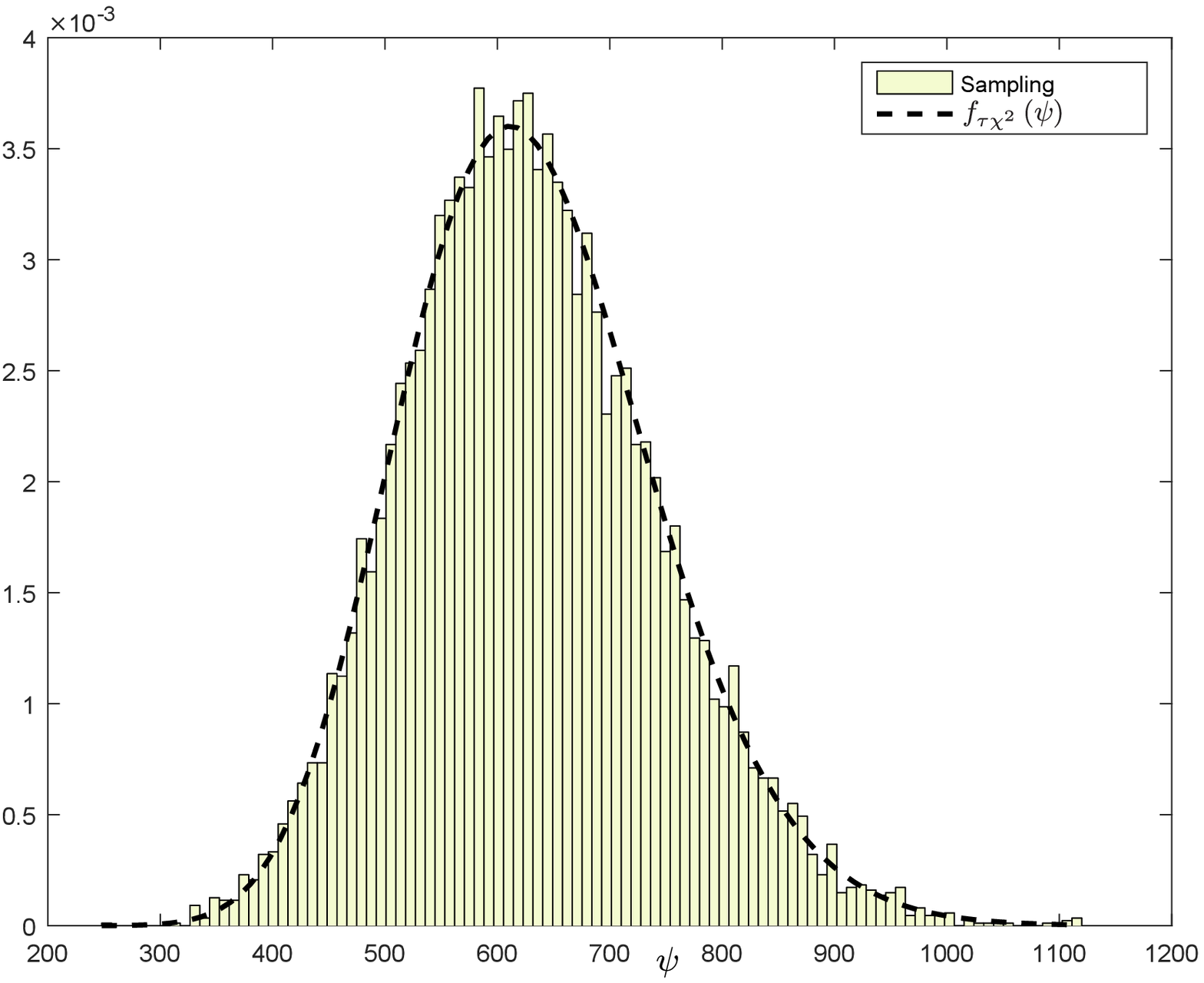, width=0.45\columnwidth} & \epsfig{file=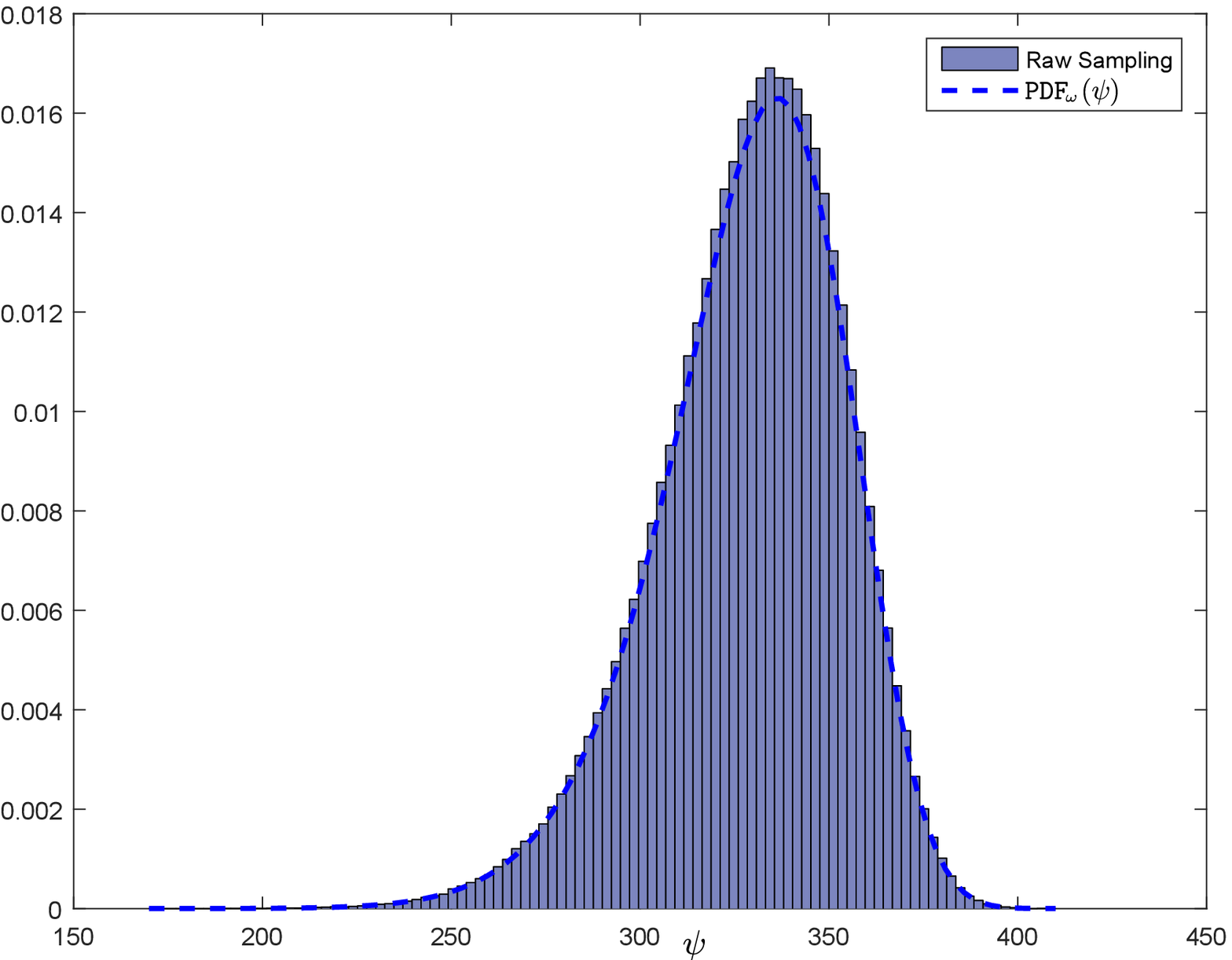, width=0.45\columnwidth} \\
\hline
\end{tabular}
\caption{Statistical demonstration of approximated densities for (H-1) and (H-2).
[LEFT]: The statistical density for a single mutation versus 
the approximation $f_{\tau\chi^2}$; see Eq.~\ref{eq:Nchi_density}.
[RIGHT]: 
The statistical density for the winner versus its approximation;
see Eq.~\ref{eq:pdf_omega_practice}.\label{fig:densities1}}
\end{figure}
\begin{figure}
\begin{tabular}{c c}
\hline
(H-3) & \\
\epsfig{file=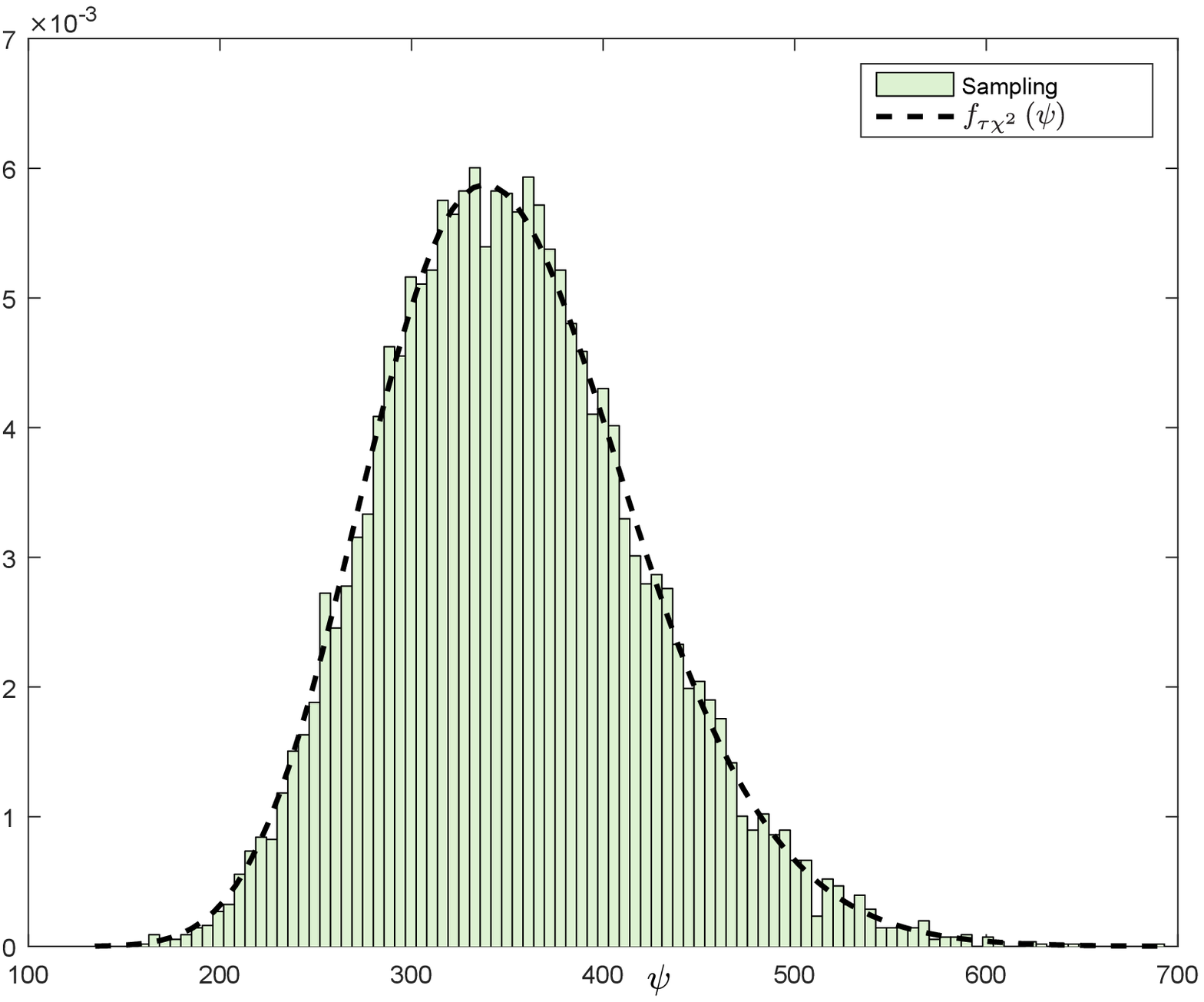, width=0.45\columnwidth} & \epsfig{file=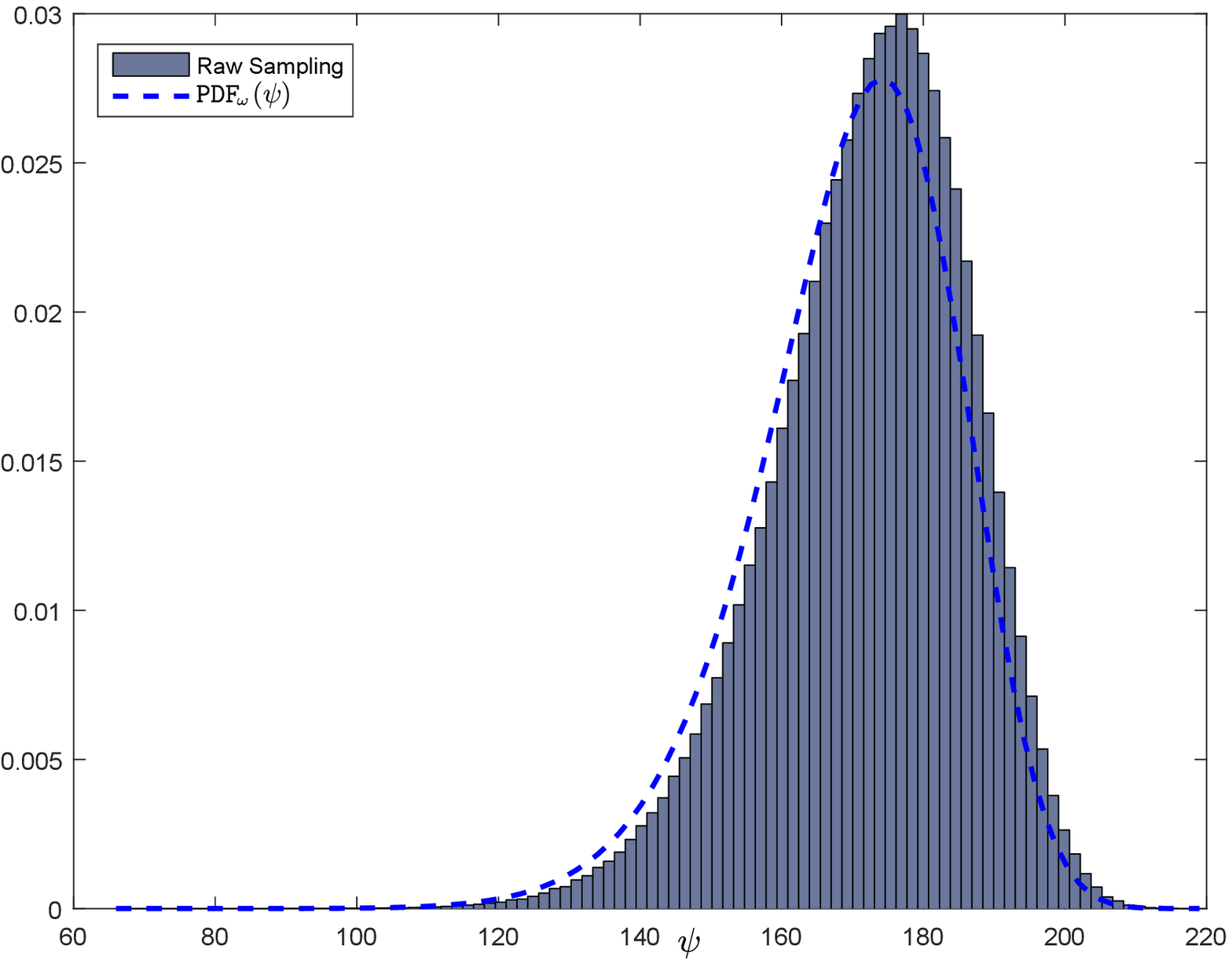, width=0.45\columnwidth} \\
\hline
(H-4) & \\
\epsfig{file=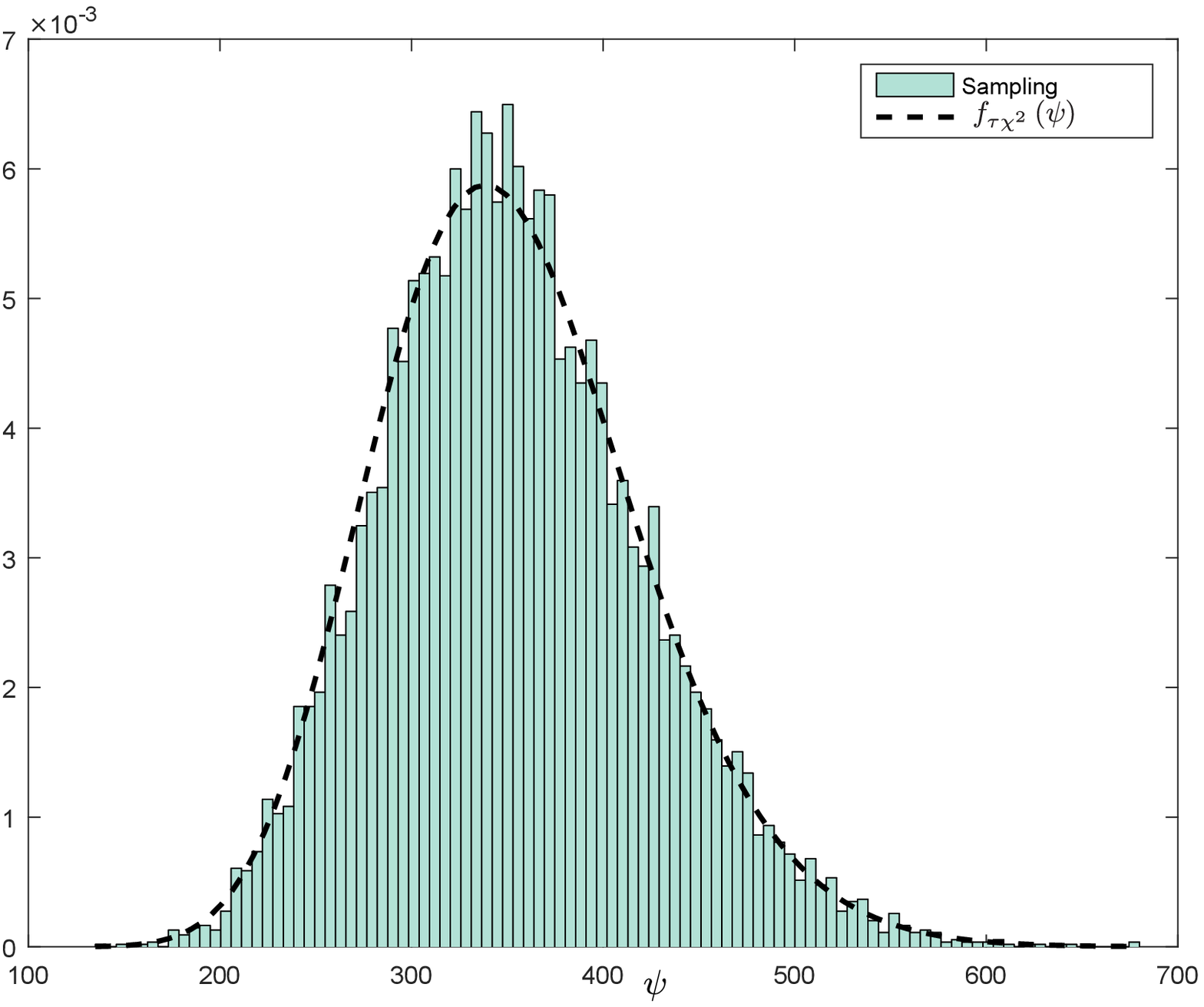, width=0.45\columnwidth} & \epsfig{file=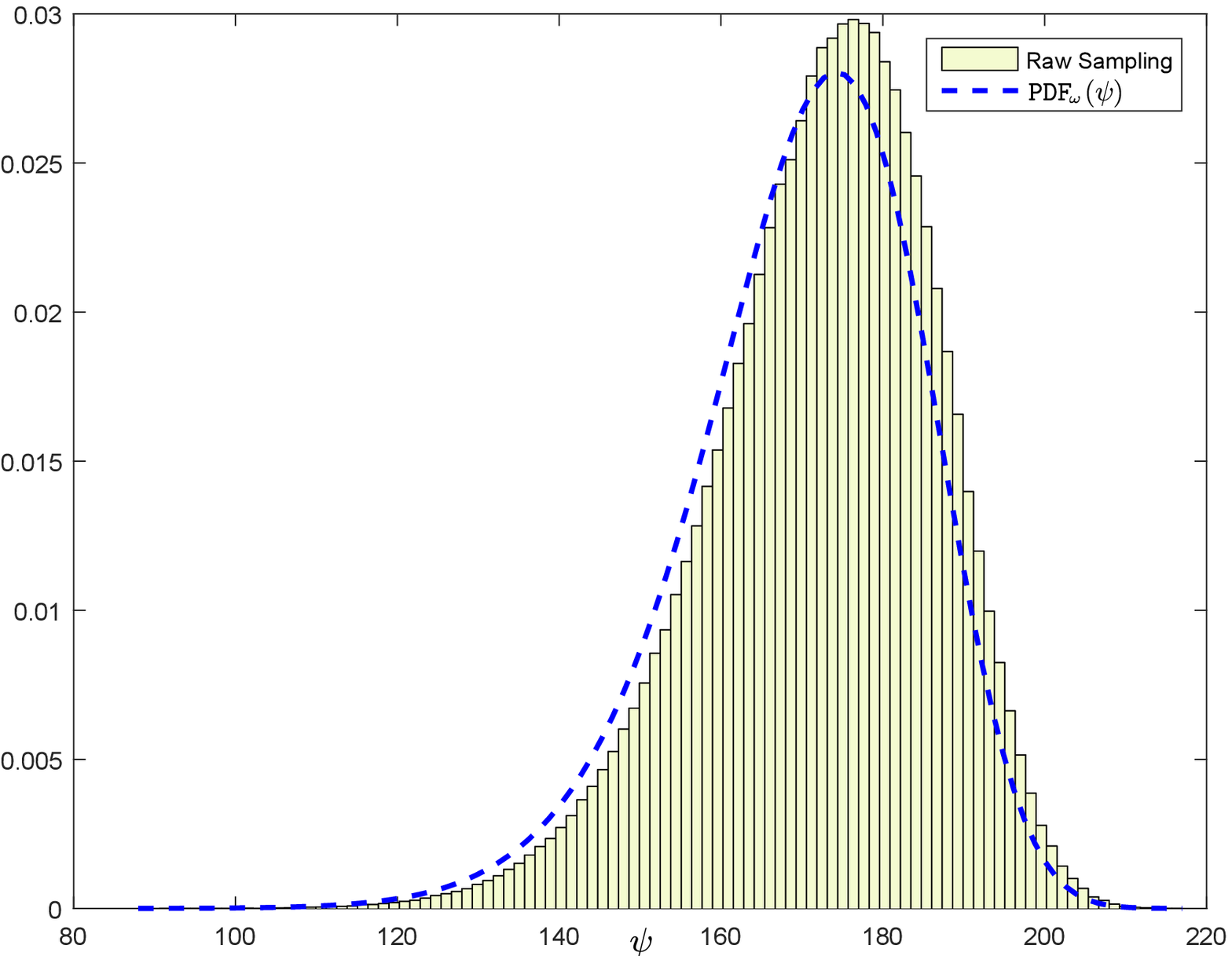, width=0.45\columnwidth} \\
\hline
\end{tabular}
\caption{Statistical demonstration of approximated densities for (H-3) and (H-4).
[LEFT]: The statistical density for a single mutation versus 
the approximation $f_{\tau\chi^2}$; see Eq.~\ref{eq:Nchi_density}.
[RIGHT]: The statistical density for the winner versus its approximation;
see Eq.~\ref{eq:pdf_omega_practice}.%
\label{fig:densities2}}
\end{figure}

\subsection{The Inverse Relation | Theorem \ref{thm:generalInverseRelation}}\label{sec:simCH}
We numerically examine the inverse relation in light of increasing the population-size. 
The sampling was done using \eqref{eqn:QSampling} with increasing population-sizes $\lambda$ non-uniformly in $5,\ldots,10^5$,
for dimensions $n:=\left\{4,8,16,32,64\right\}$, and over $N_{\texttt{iter}}=10^6$ iterations.
The (H-1)-(H-5) landscapes are considered with conditioning $c=10$.
The winner in each iteration, $t =1,\ldots,N_{\texttt{iter}}$, is denoted by $\vec{y}_t$.
The empirical covariance matrix $\mathcal{C}^{\texttt{stat}}$ is constructed out of the winners.

Firstly, the average distance of the winners from the optimum $\vec{x}^{*}$ is assessed using the following (e-0) measure:
\begin{enumerate}[(e-1)]
\setcounter{enumi}{-1}
\item the mean distance from the optimum:
\begin{equation}\label{eq:e0}
\displaystyle \texttt{e0:=}~~~ \frac{1}{N_{\texttt{iter}}}\sum_{t=1}^{N_{\texttt{iter}}}\left[ \| \vec{y}_t - \vec{x}^{*} \| \right] ~.
\end{equation}
\end{enumerate}
As expected, our experiments show that (e-0) systematically decreases as $\lambda$ increases, with tendency to vanish.

Secondly, the empirical covariance matrix was multiplied by the Hessian matrix
and normalized by the largest element:
\begin{equation}\label{eq:HCnormalization}
  \widetilde{\mathcal{H}_{0} \mathcal{C}}:=\frac{\mathcal{H}_{0} \mathcal{C}^{\texttt{stat}}}{\max_{i,j} \left\{ |\left(\mathcal{H}_{0} \mathcal{C}^{\texttt{stat}}\right)_{ij}| \right\} }.  
\end{equation}
The deviations from the identity matrix are assessed by two error measures (e-1),(e-2):
\begin{enumerate}[(e-1)]
\item the largest deviation within the diagonal:
\begin{equation}\label{eq:e1}
\displaystyle \texttt{e1:=}~~~ \max_{i}\left\{ \left| \left(\widetilde{\mathcal{H}_{0} \mathcal{C}}\right)_{ii}- 1.0 \right| \right\} ~.
\end{equation}
\item the largest off-diagonal deviation:
\begin{equation}\label{eq:e2}
\displaystyle \texttt{e2:=}~~~ \max_{i \neq j}\left\{ \left| \left(\widetilde{\mathcal{H}_{0} \mathcal{C}}\right)_{ij}\right| \right\} .
\end{equation}
\end{enumerate}
Our experiments show that both (e-1) and (e-2) tend to 0 as $\lambda$ increases.
Note that (e-1) and (e-2) also implicitly account for the commutator error
$$\| \mathcal{H}_{0} \mathcal{C}^{\texttt{stat}} - \mathcal{C}^{\texttt{stat}} \mathcal{H}_{0}\|_{ \textrm{frob} },$$
which was directly investigated for the near-optimum special case in \cite{Shir-Theory-foga17}. 

\subsubsection*{Elaboration}
Figures \ref{fig:H1H2}-\ref{fig:H3H4} present the calculations of the error measures for the (H-1)-(H-2)-(H-3) and (H-4)-(H-5) test-cases, respectively. Evidently, all error measures tend to decrease when the population-size increases. An elaboration follows:
\begin{enumerate}[e-1]
\setcounter{enumi}{-1}
\item The mean winners' distance to the optimum shrinks with the growing population-size. 
Experiments at lower dimensions yield lower absolute distance, as expected.
\item This error measure consistently decreases as $\lambda$ grows, reflecting a consistently increasing alignment of the diagonal of $\widetilde{\mathcal{H}_{0} \mathcal{C}}$ to uniform as $\lambda\rightarrow\infty$.

(H-5) exhibits exceptional behavior. Its (e-1) value is practically zero for all $\lambda$. 
This effect occurs regardless of the sampling/learning scheme;
see additional numerical analysis below.
We mention the inherent properties of the Hadamard transformation $\mathcal{S}$ that are responsible for this effect: 
(i) it is orthogonal and (ii) the absolute values of its entries $|{\cal S}_{ij}|$ are identical constants. These properties imply that for any diagonal matrix ${\cal D}$, the diagonal of $\mathcal{S} {\cal D} {\cal S}^{-1}$ is constant.

\item This error measure is practically zero for (H-1)-(H-3);
it lies within the noise regime due to the separable (non-rotated) nature of those landscapes. 
This follows from the fact that when the Hessian is diagonal the covariance matrix is also diagonal~\cite{Shir-Theory-foga17}.
It is therefore relevant only to the non-separable landscapes (H-4) and (H-5), where it follows the same trend of decrease that occurs for (e-1). 
Notably, the ordering of (e-2) values per dimensions are counter-intuitive for low $\lambda$ on (H-4), where $n=4$ exhibits the higher error rates, followed by $n=8$, and so on. This effect is a consequence of the matrix normalization, described in Eq.~\ref{eq:HCnormalization}. 
The absolute value of the largest off-diagonal term of $\mathcal{H}_{0} \mathcal{C}^{\texttt{stat}}$ increases with the dimensionality $n$ prior to normalization, but the normalization factor (largest diagonal term) increases faster over $n$ and thus shadows this trend. This effect is not evident for (H-5), where the diagonal of $\mathcal{H}_{0} \mathcal{C}^{\texttt{stat}}$ behaves differently.
\end{enumerate}
Overall, given the observations of (e-1) and (e-2), we conclude that the normalized multiplication $\widetilde{\mathcal{H}_{0} \mathcal{C}}$ indeed approaches the identity matrix when the population-size increases, as Theorem \ref{thm:generalInverseRelation} predicts.

\medskip

We numerically investigated the \textit{somewhat surprising} effect of (e-1) vanishing on (H-5), which we discussed and explained above.
We replaced the empirical covariance matrix $\mathcal{C}^{\texttt{stat}}$ by a normally-perturbed identity matrix $$\mathbf{C}^{\texttt{pert}}:=\mathbf{I}+\mathbf{E},$$ with $\mathbf{E}$ being a symmetric matrix whose elements are independently normally distributed $\mathcal{N}(0,\varepsilon^2)$ with $\varepsilon=0.05$.
The goal was to isolate the properties of the Hadamard matrix, by removing the ``cumulation'' aspect of the covariance matrix and using a truly random matrix instead (a ``no-cumulation'' \textit{reference}).
We generated $10^6$ normally-perturbed identity matrices, and examined (e-1) and (e-2) with respect to all five Hessians. 
Figure~\ref{fig:noLearning} depicts the statistical tests evaluating both (e-1) and (e-2) on all the quadratic landscapes listed in (H-1)-(H-5) for dimensions $\left\{4,8,32\right\}$ and over conditioning in the exponential range $c = \left\{2^2,2^3,\ldots, 2^{10} \right\}$.
It is apparent that the error measure (e-1) on (H-5) for this ``no-cumulation'' reference $\mathbf{C}^{\texttt{pert}}$ is significantly low in comparison to the (H-1)-H-4) landscapes for all tested dimensions and along all condition numbers.
This behavior is consistent with the aforementioned observation when constructing covariance matrices over winning vectors (Figure \ref{fig:H3H4}). 
The behavior on the other landscapes is as expected, with the separable cases (H-1)-(H-3) asymptotically approaching together $\texttt{e1}=1.0$ as the conditioning grows. At the same time, the (e-2) measure also behaves as expected, with increasingly growing values as the conditioning grows, exhibiting changing behavior for the various cases.

\begin{figure*}
\begin{tabular}{ c  c  c}
{\Large (e-0)} & {\Large (e-1)} & {\Large (e-2)}\\
\hline
(H-1) & & \\
\epsfig{file=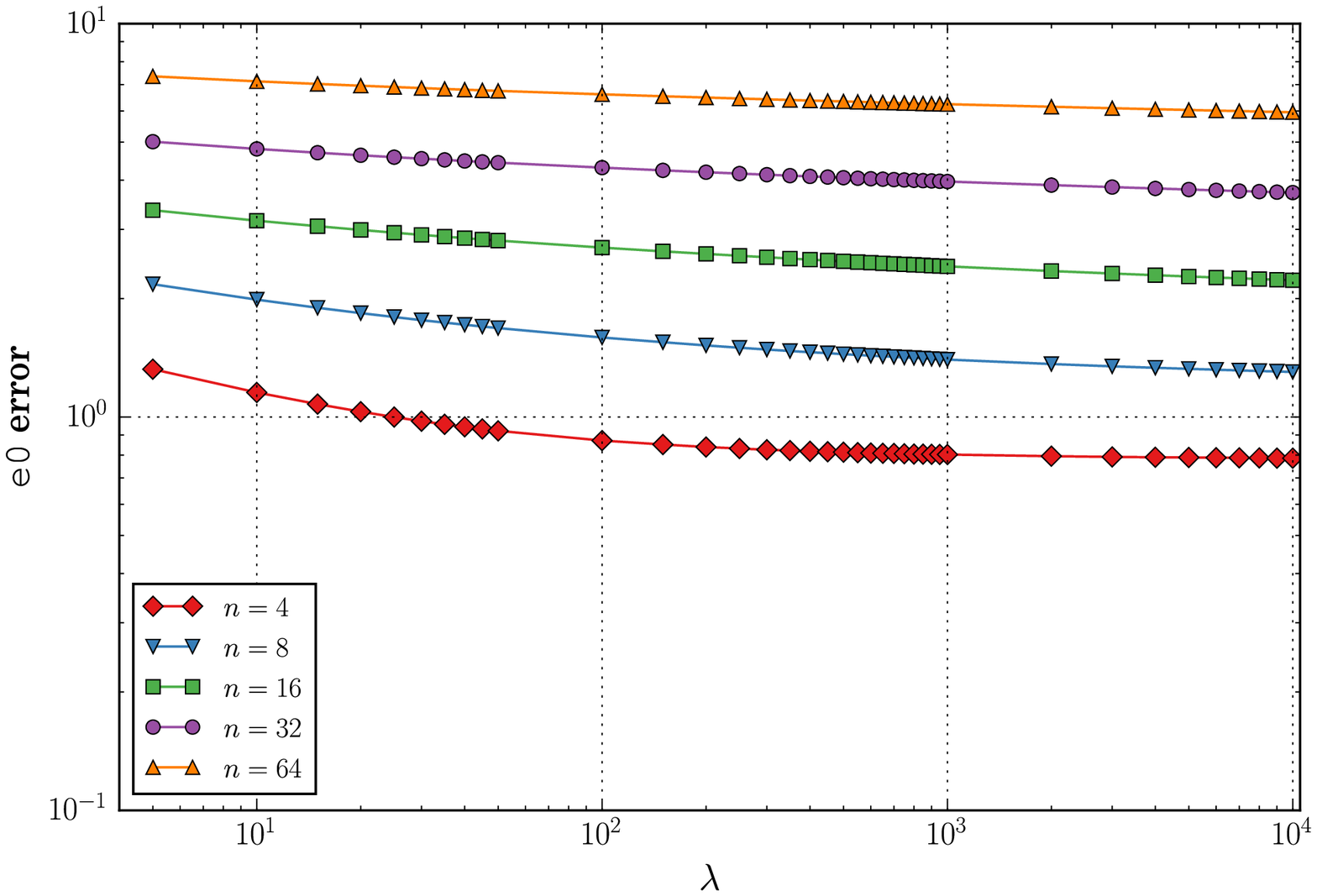, width=0.32\columnwidth} & \epsfig{file=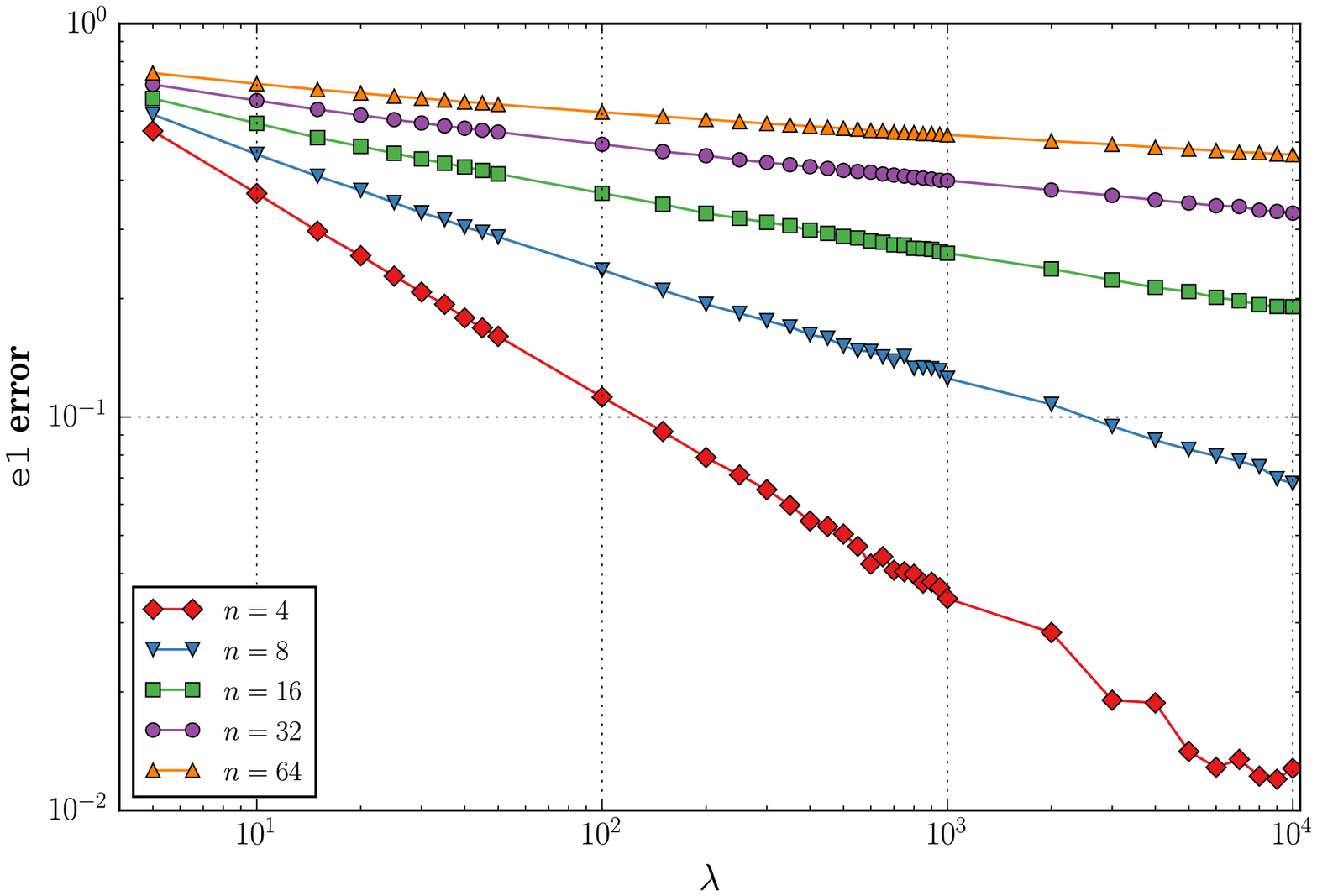, width=0.32\columnwidth} & \epsfig{file=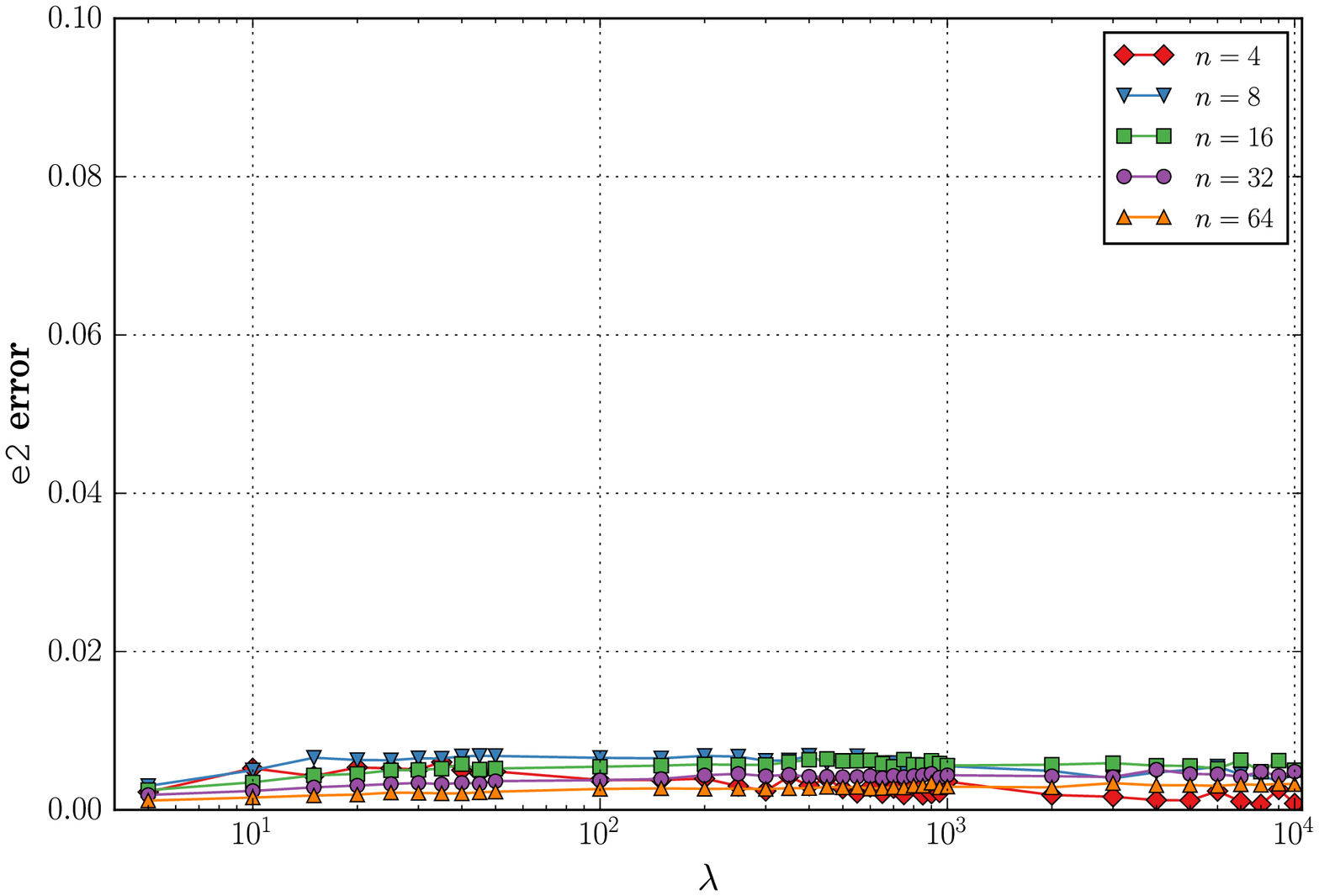, width=0.32\columnwidth} \\
\hline
(H-2) & \\
\epsfig{file=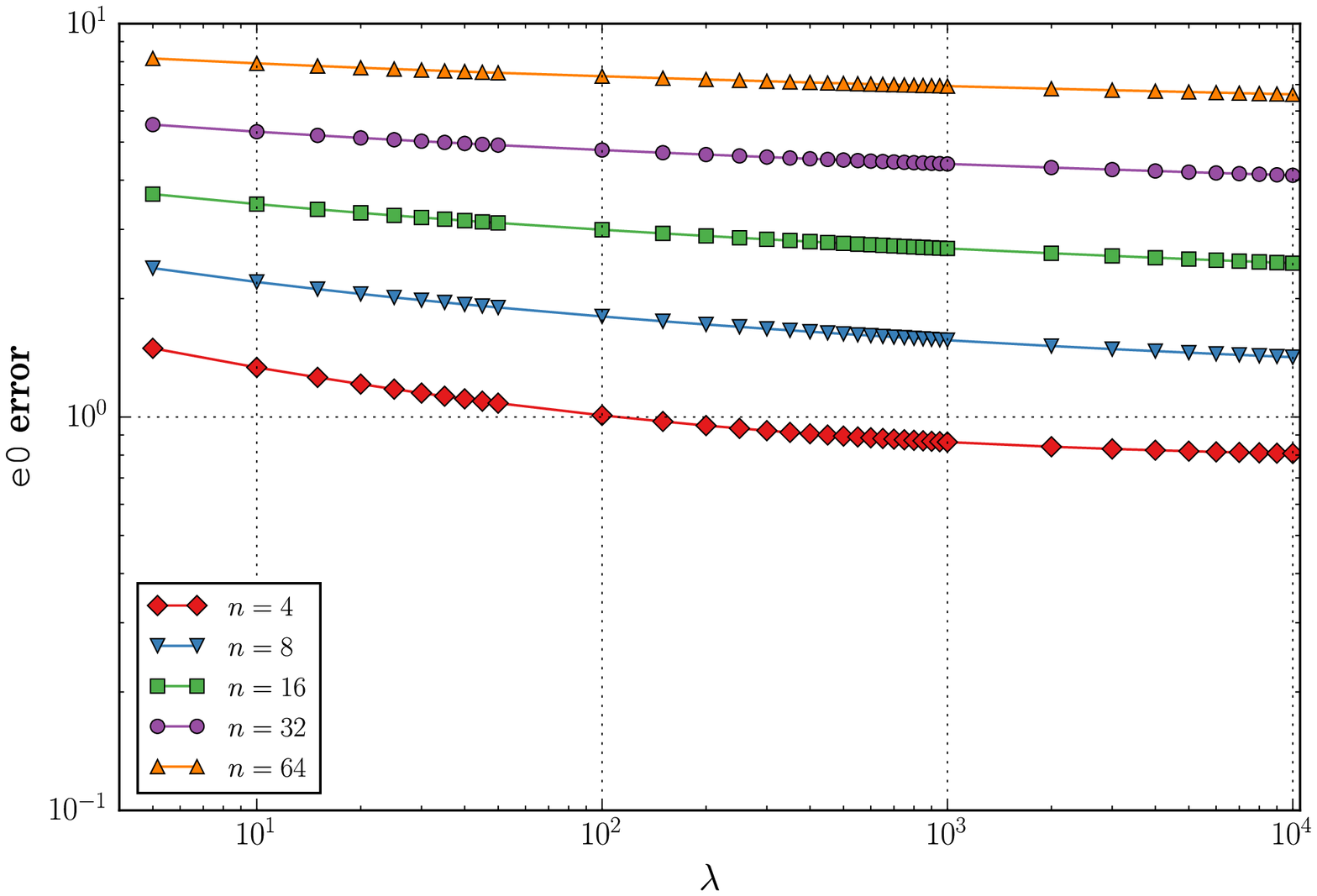, width=0.32\columnwidth} & \epsfig{file=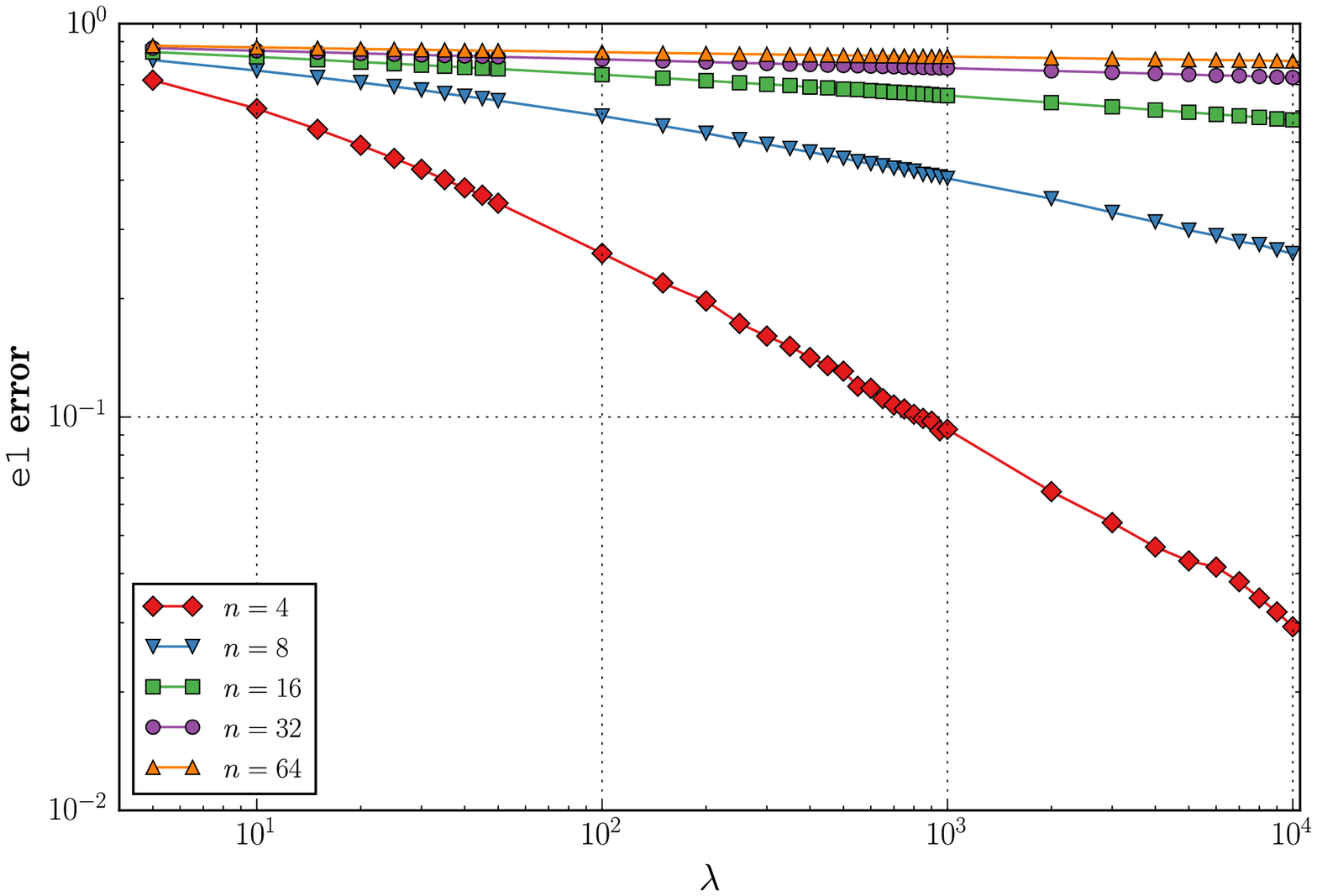, width=0.32\columnwidth} & \epsfig{file=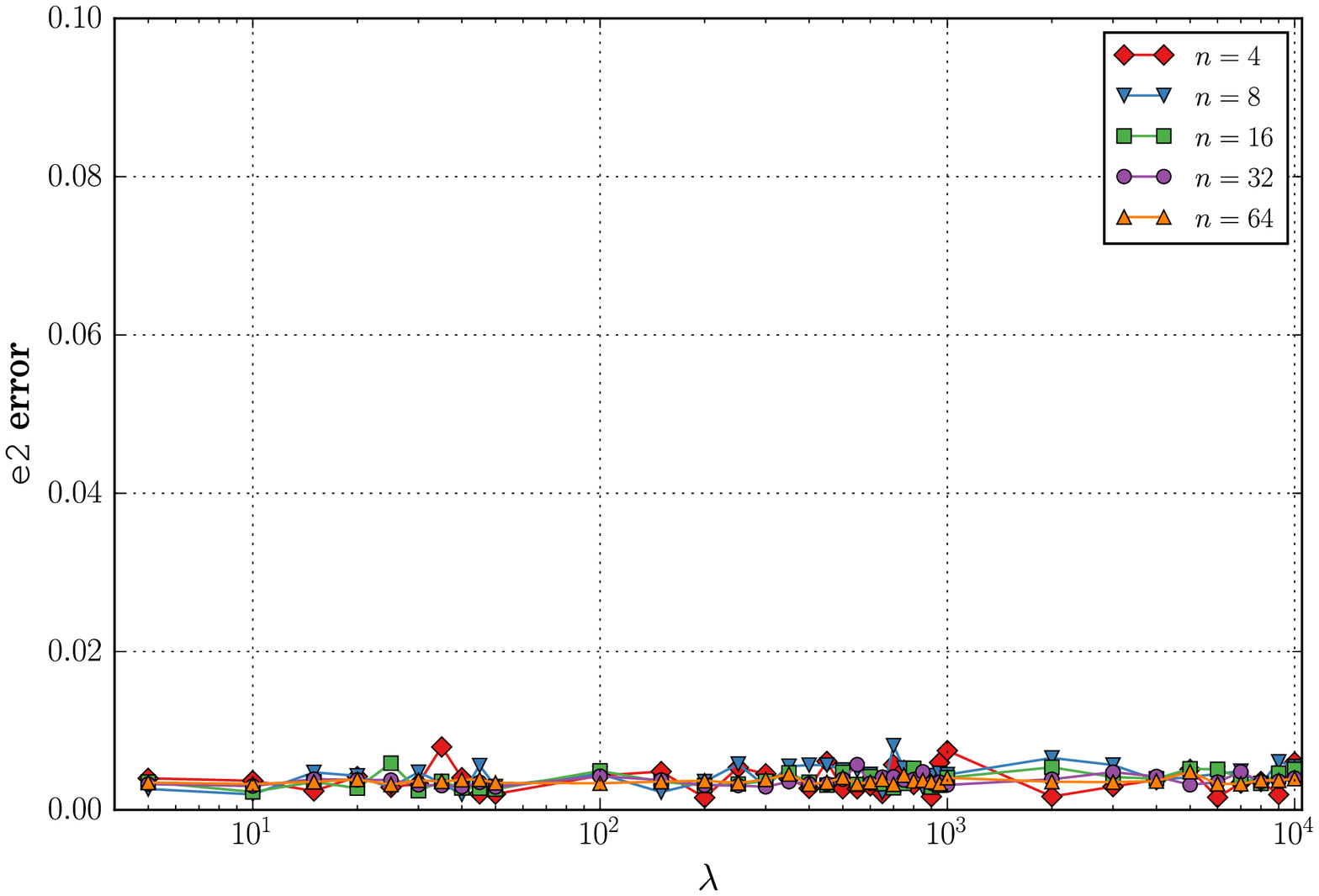, width=0.32\columnwidth} \\
\hline
(H-3) & \\
\epsfig{file=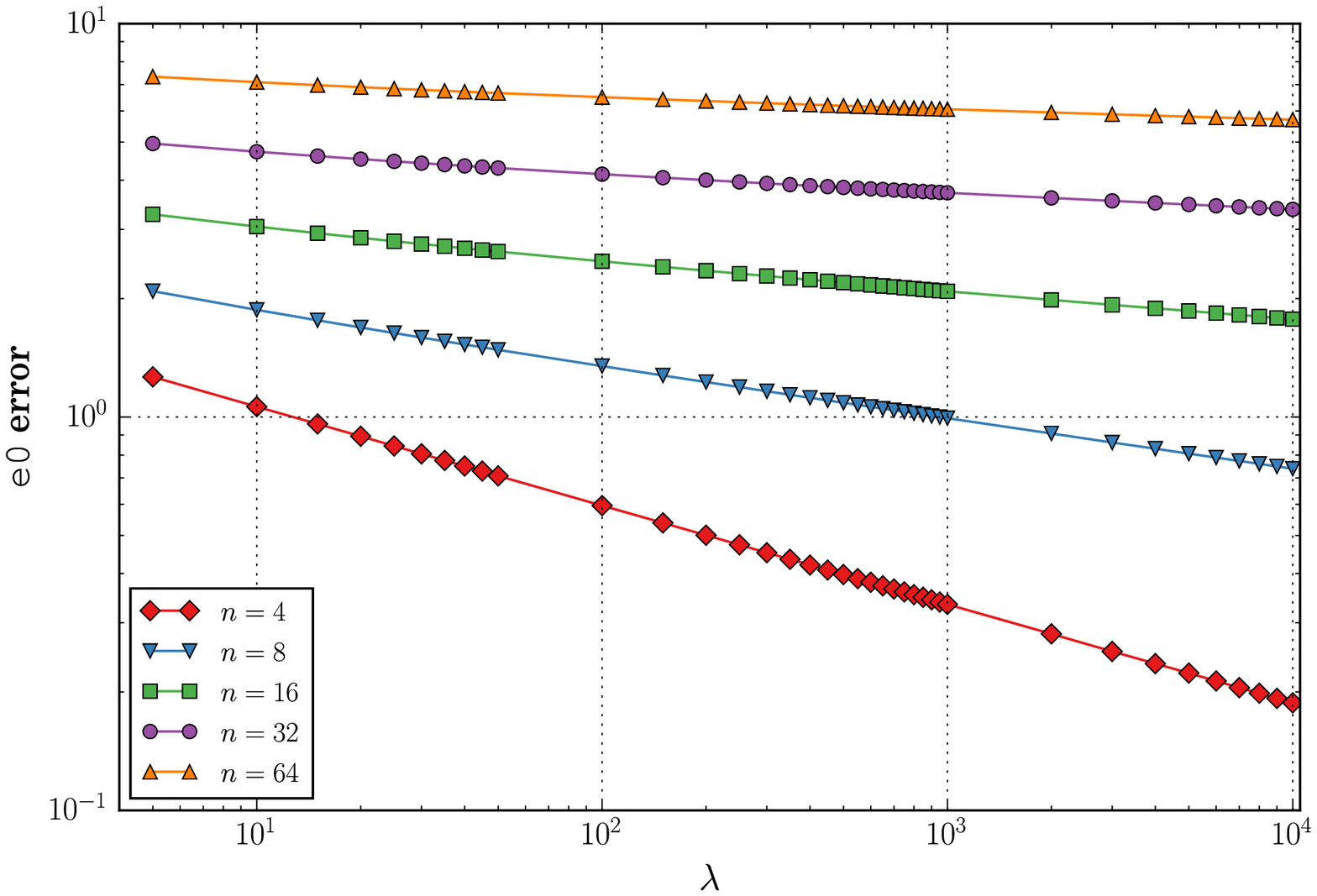, width=0.32\columnwidth} & \epsfig{file=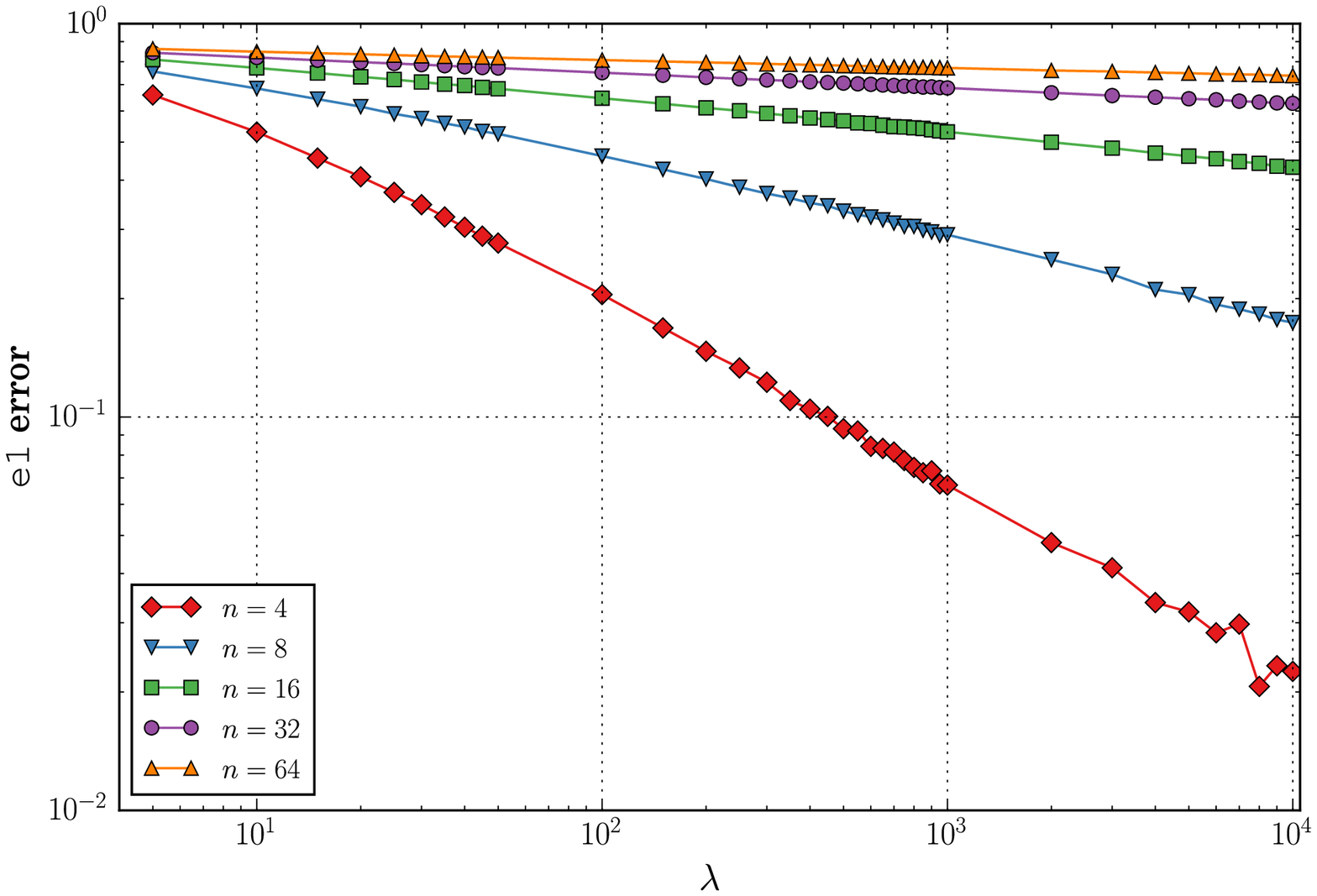, width=0.32\columnwidth} & \epsfig{file=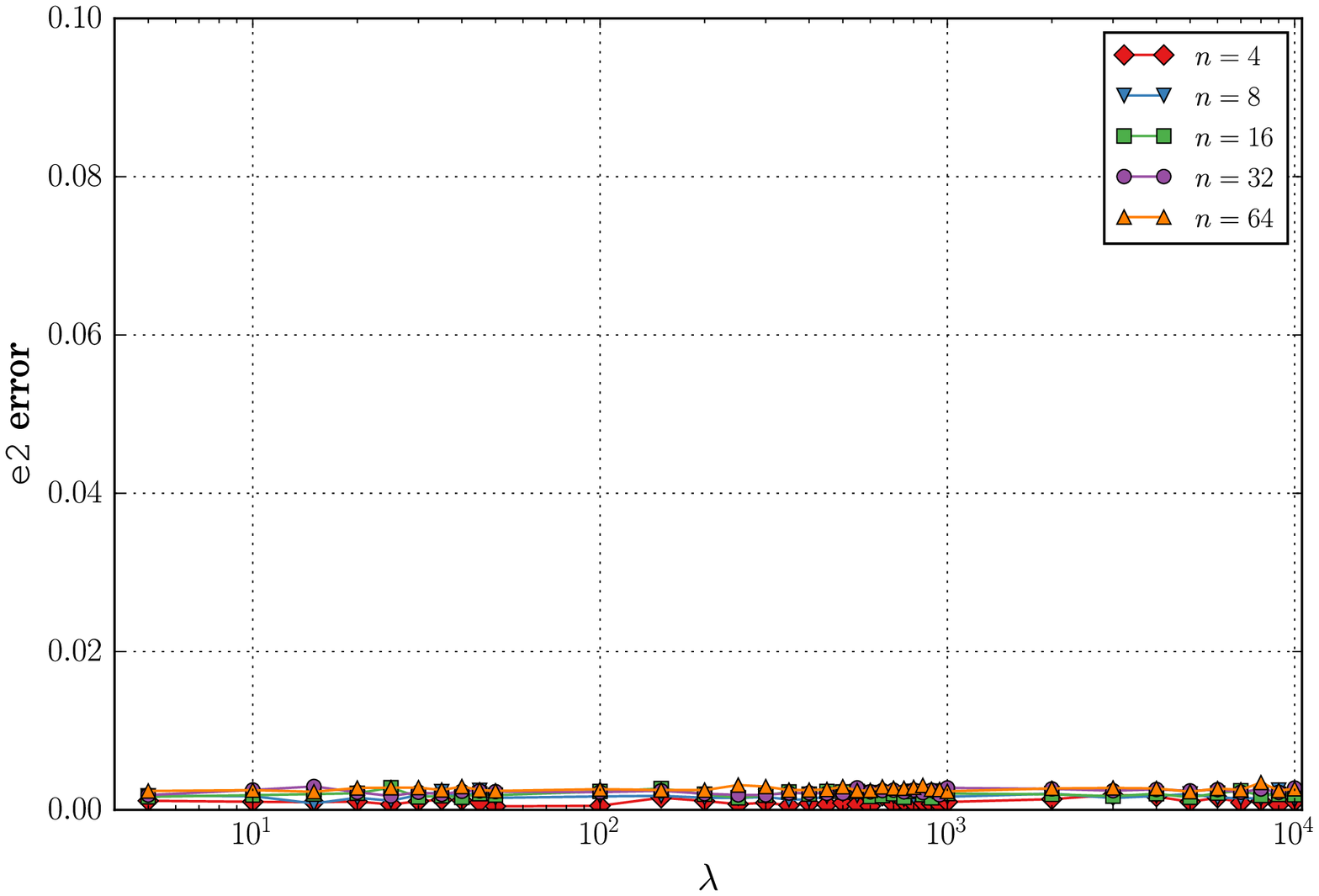, width=0.32\columnwidth} \\
\hline
\end{tabular}
\caption{Numerical corroboration of Theorem~\ref{thm:generalInverseRelation} evaluating the error measures (e-0)-(e-2) as a function of increasing $\lambda$ on (H-1) [TOP], (H-2) [MIDDLE], and (H-3) [BOTTOM] with conditioning $c=10$. The sampling was carried out over $N_{\texttt{iter}}=10^6$ iterations.
All axes are logarithmically scaled except for the $y$-axes in (e-2).
 Both (e-0) and (e-1) exhibit exponential patterns of decay as $\lambda$ increases. The (e-2) measure is practically vanished: it lies within the noise regime due to the separable (non-rotated) nature of all three landscapes. \label{fig:H1H2}}
\end{figure*}
\begin{figure*}
\begin{tabular}{ c  c  c}
{\Large (e-0)} & {\Large (e-1)} & {\Large (e-2)}\\
\hline
(H-4) & & \\
\epsfig{file=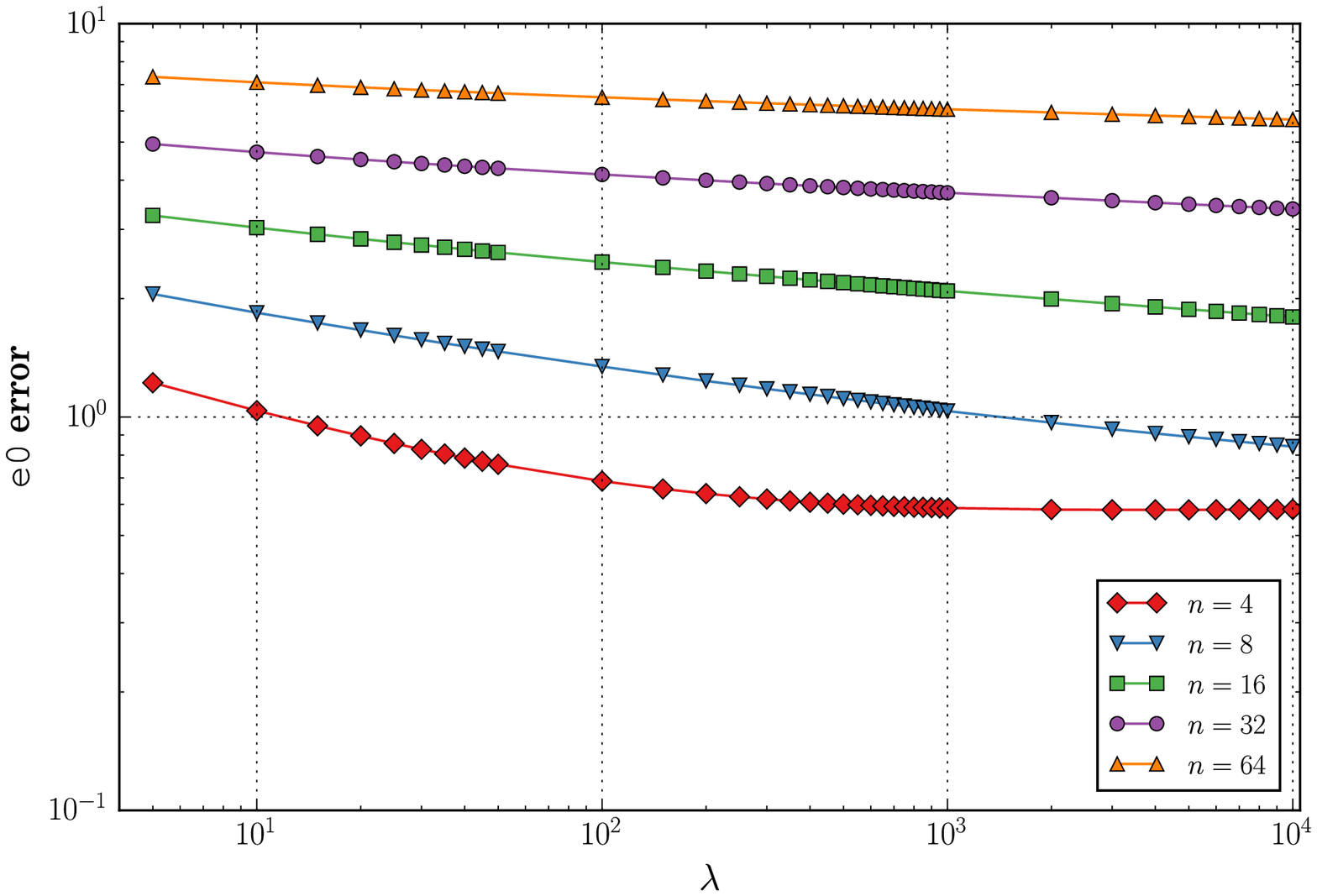, width=0.32\columnwidth} & \epsfig{file=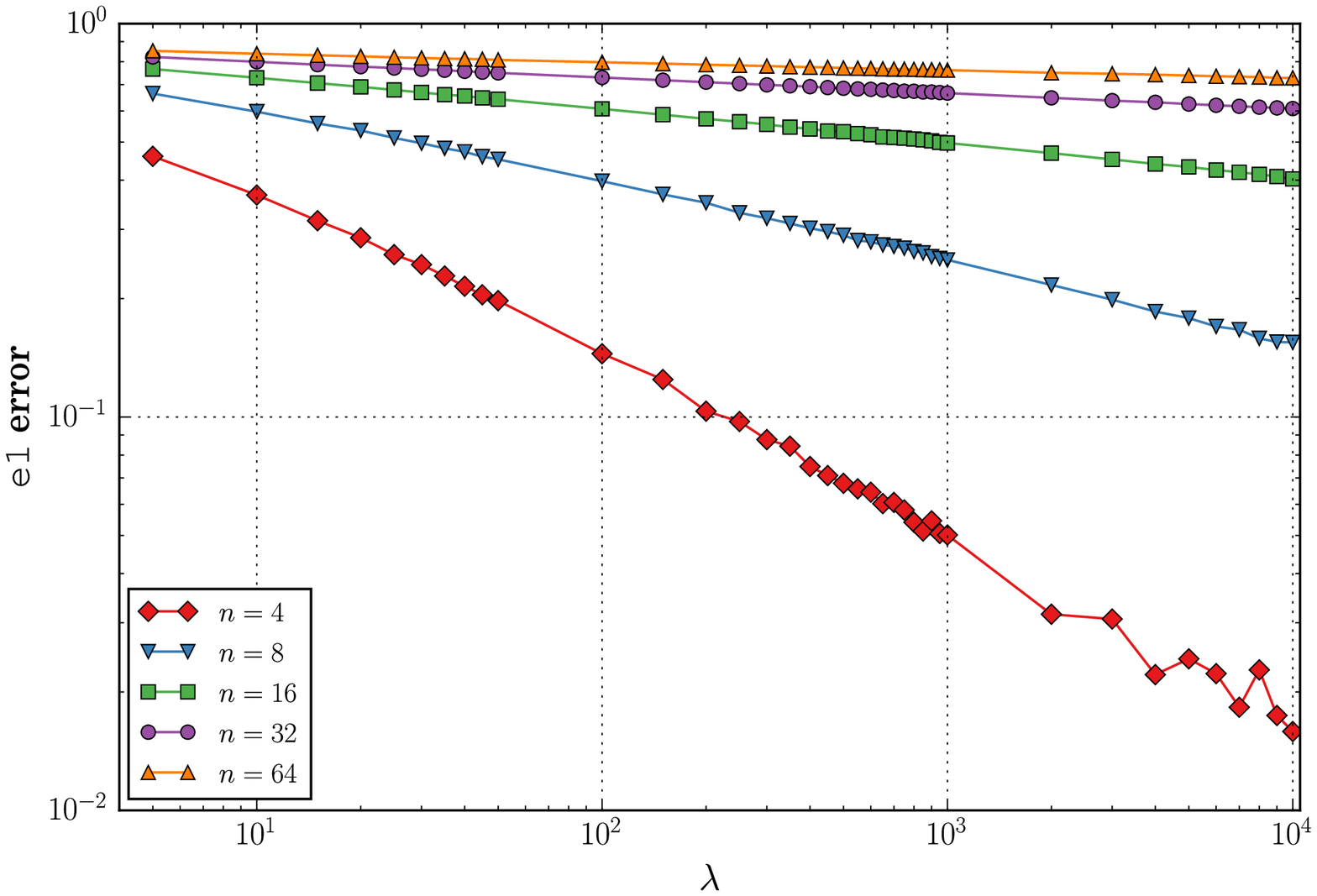, width=0.32\columnwidth} & \epsfig{file=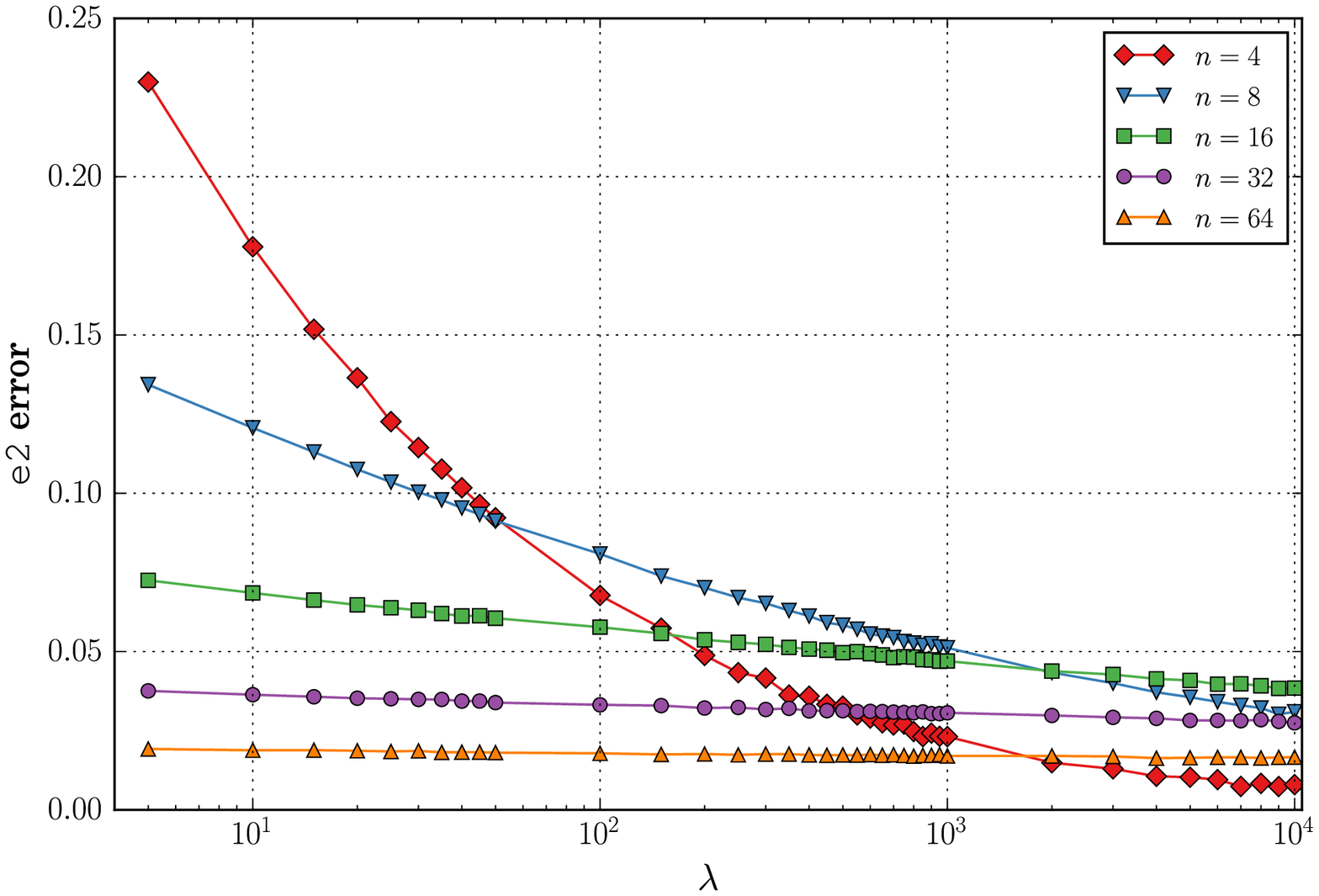, width=0.32\columnwidth} \\
\hline
(H-5) & \\
\epsfig{file=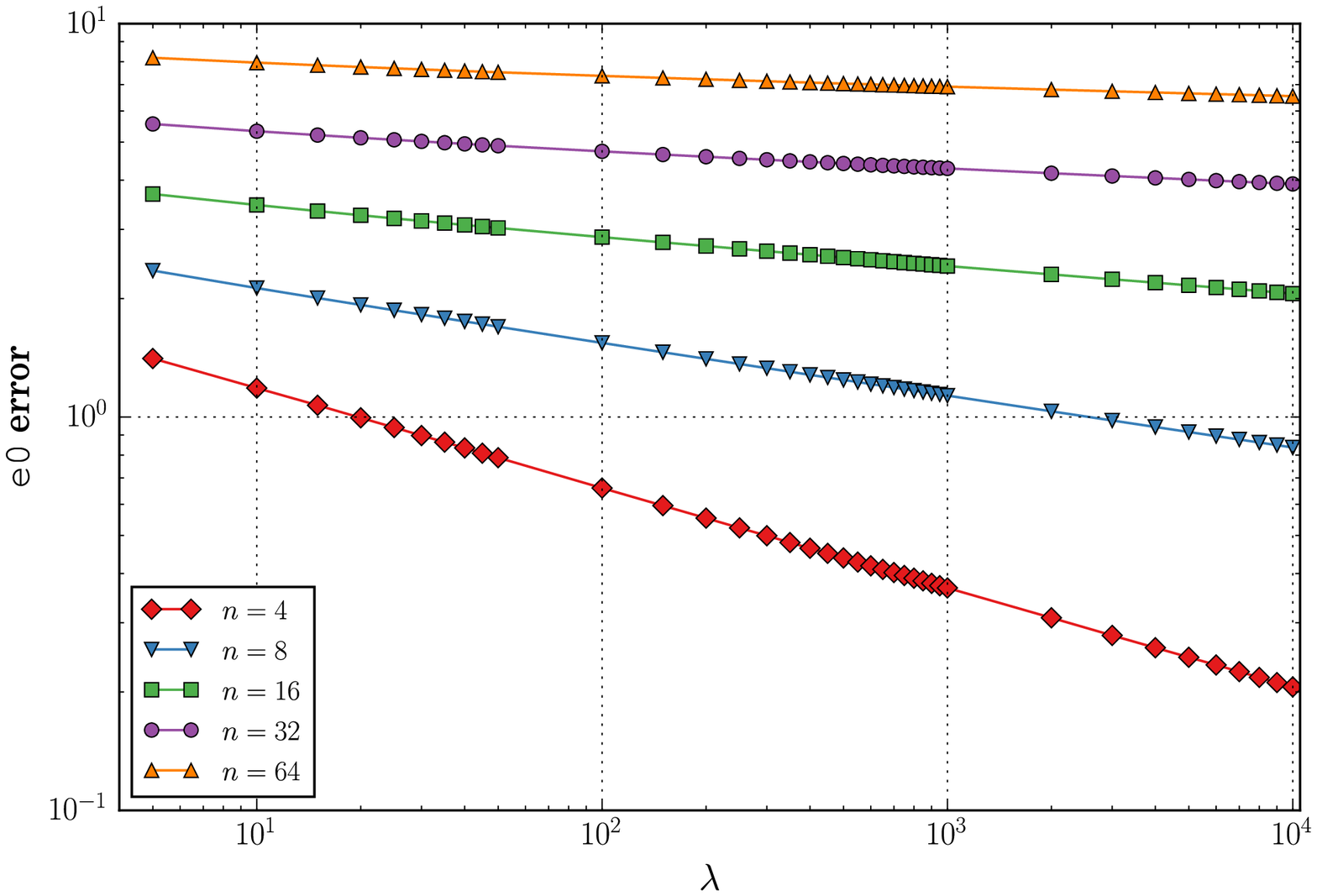, width=0.32\columnwidth} & \epsfig{file=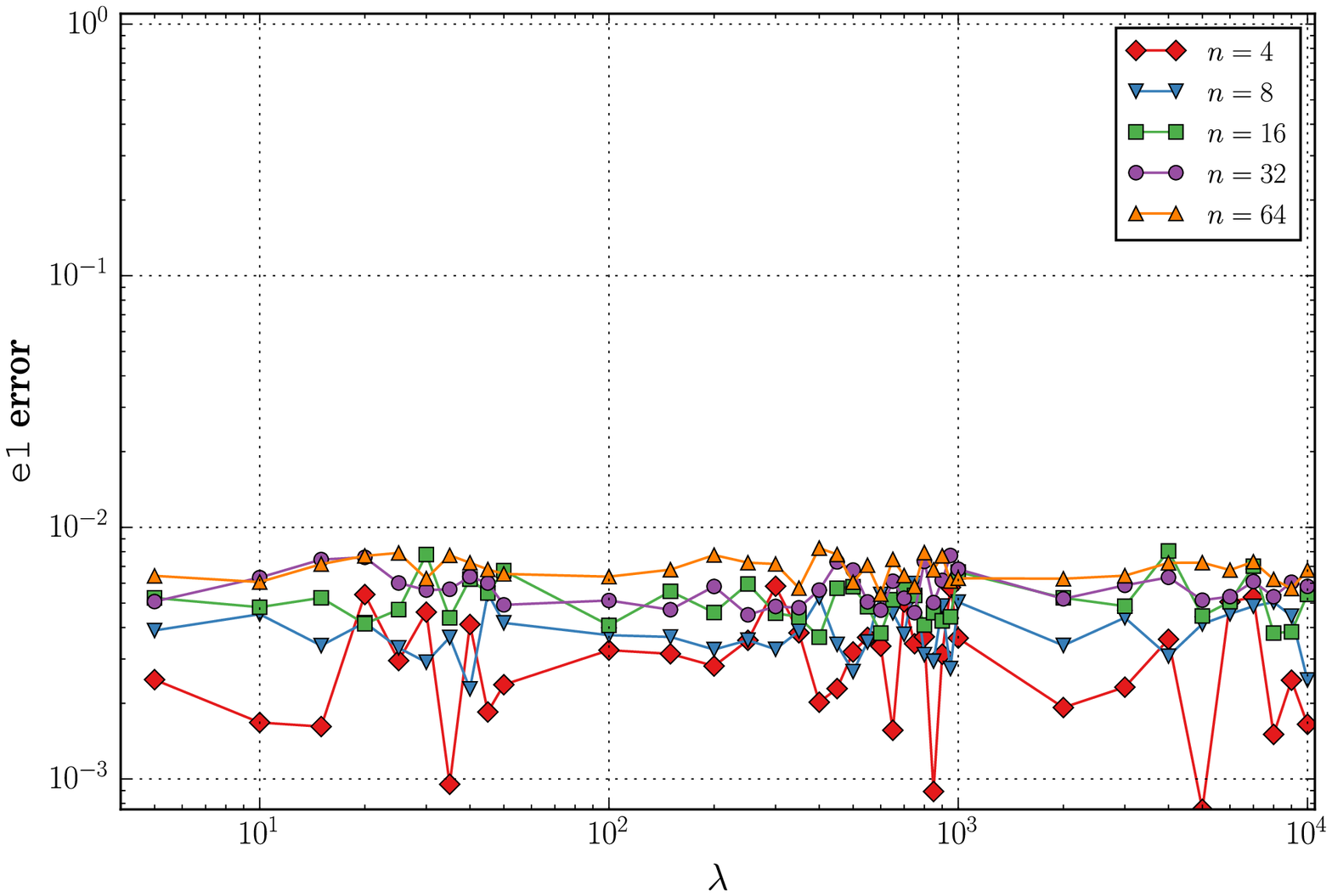, width=0.32\columnwidth} & \epsfig{file=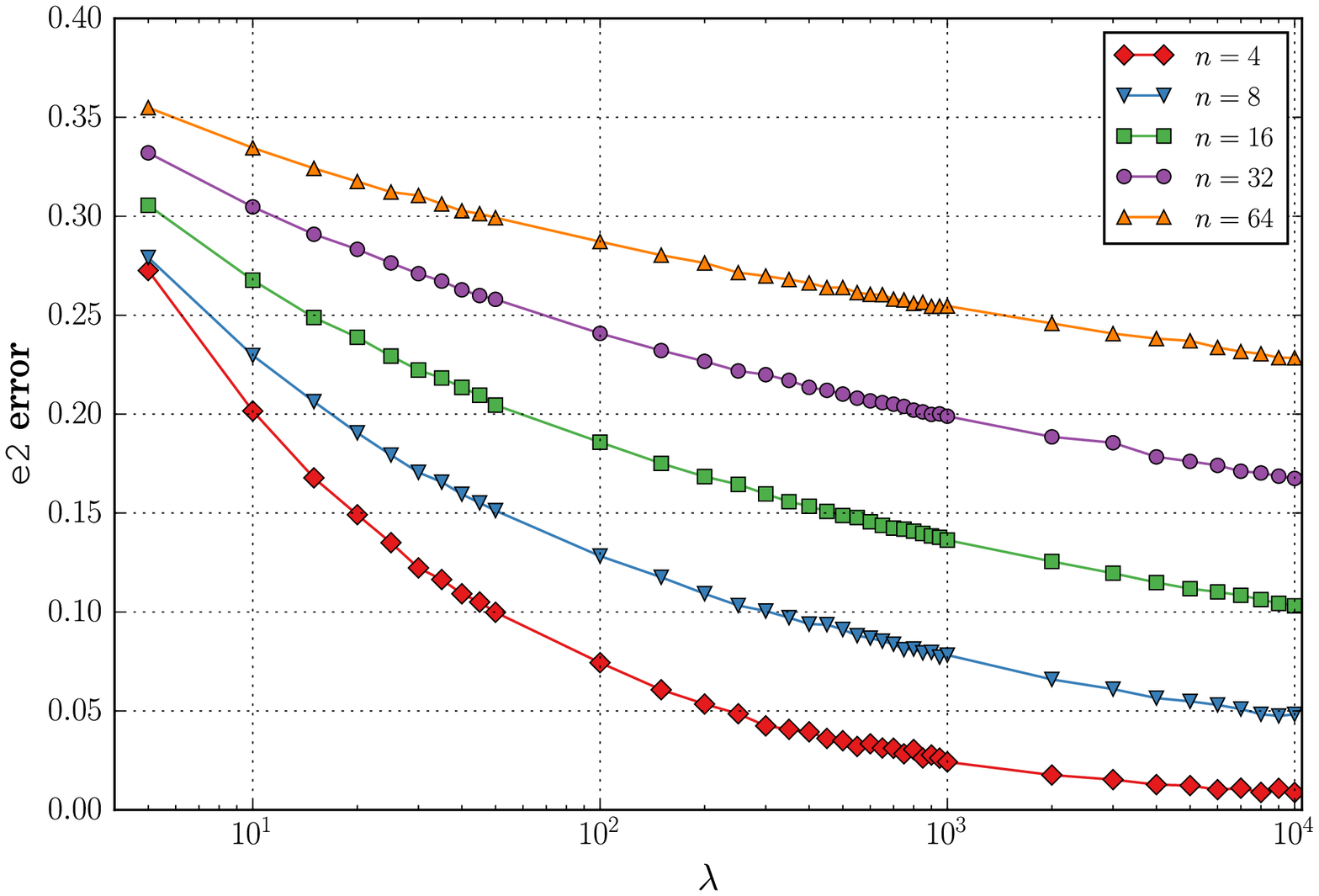, width=0.32\columnwidth} \\
\hline
\end{tabular}
\caption{Numerical corroboration of Theorem~\ref{thm:generalInverseRelation} evaluating the error measures (e-0)-(e-2) as a function of increasing $\lambda$ on (H-4) [TOP] and (H-5) [BOTTOM] with conditioning $c=10$. The sampling was carried out over $N_{\texttt{iter}}=10^6$ iterations. 
All axes are logarithmically scaled except for the $y$-axes in (e-2).
The (e-2) measure exponentially decreases as $\lambda$ increases for both test-case.
For (H-4), both (e-0) and (e-1) exhibit exponential patterns of decay as $\lambda$ increases.
For (H-5), unlike the other test-cases, the (e-1) measure is practically vanishing (it lies within the noise regime), as was noted above.
\label{fig:H3H4}}
\end{figure*}
\begin{figure*}
\begin{tabular}{ c  c  c}
 {\Large $n=4$} & {\Large $n=8$} & {\Large $n=32$} \\
\hline
(e-1) & & \\
\epsfig{file=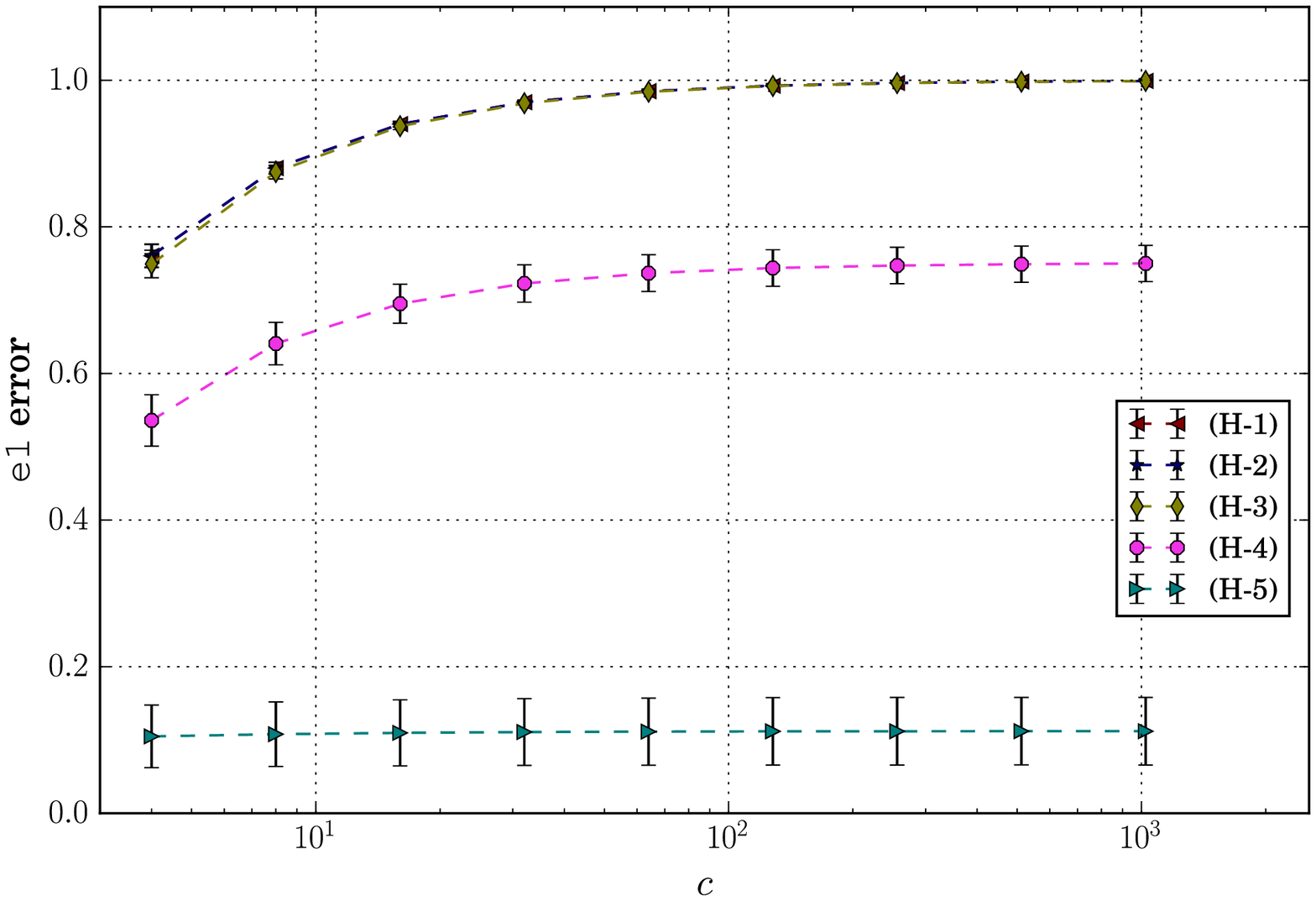, width=0.32\columnwidth} & \epsfig{file=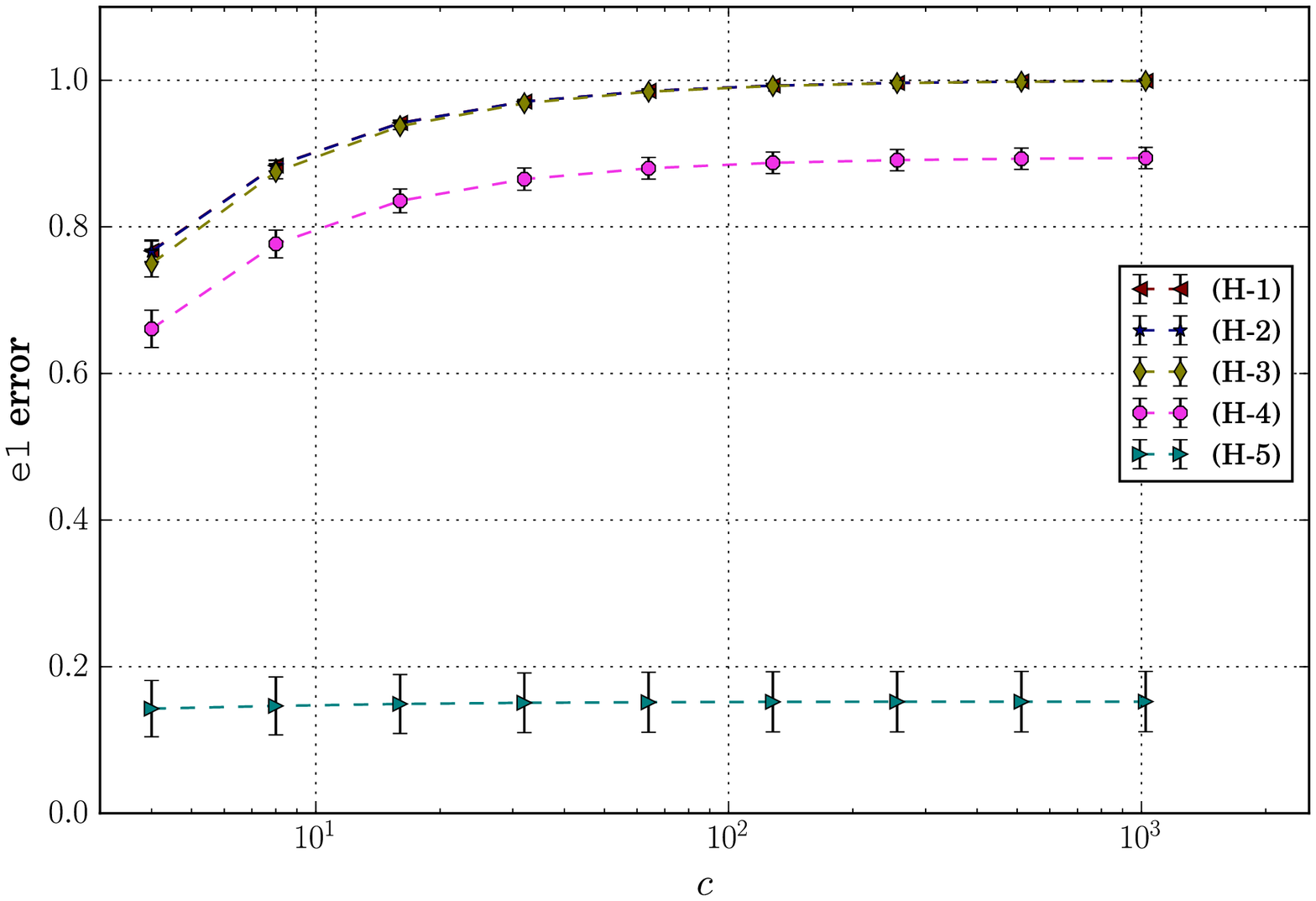, width=0.32\columnwidth} & \epsfig{file=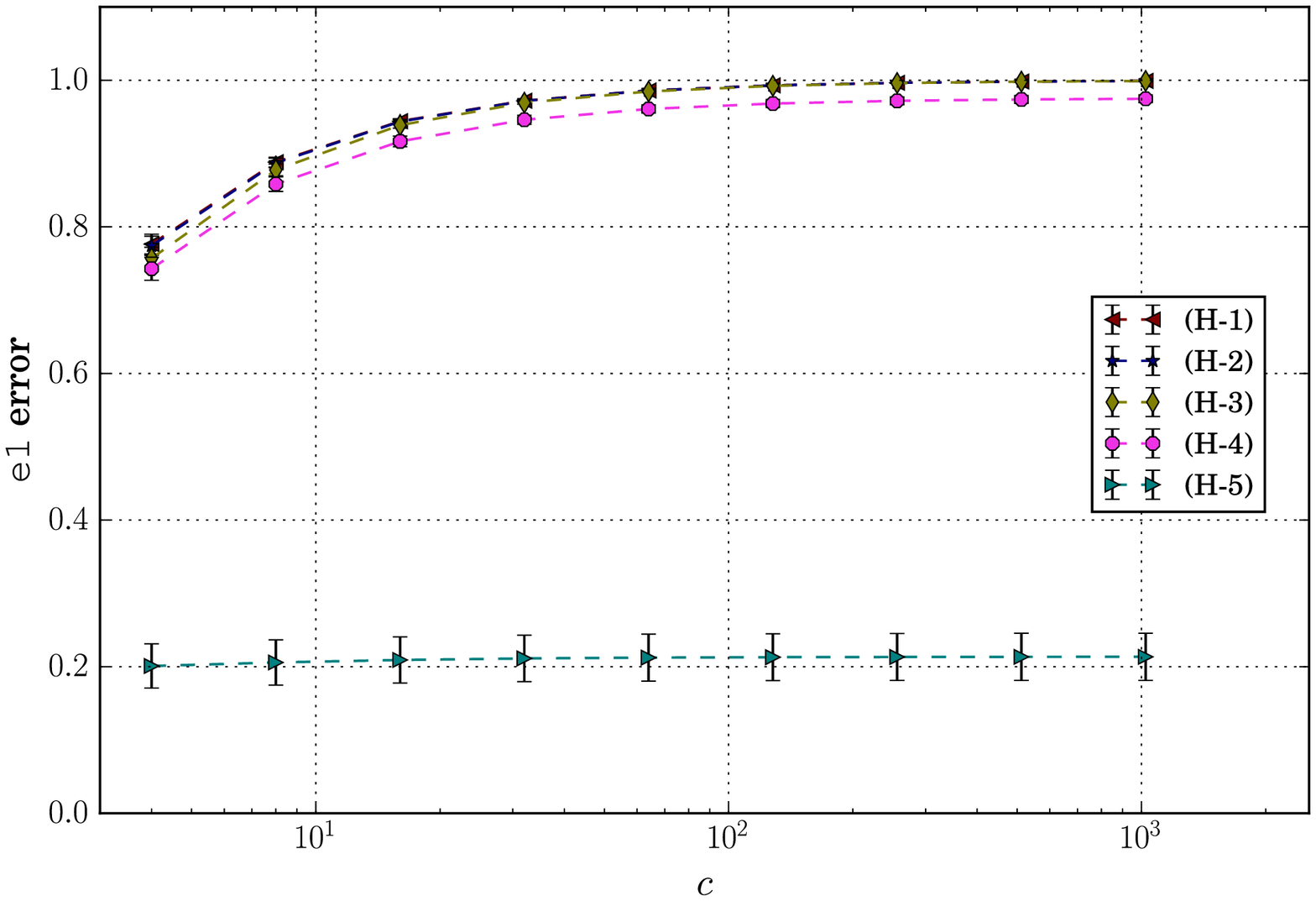, width=0.32\columnwidth} \\
\hline
(e-2) & & \\
\epsfig{file=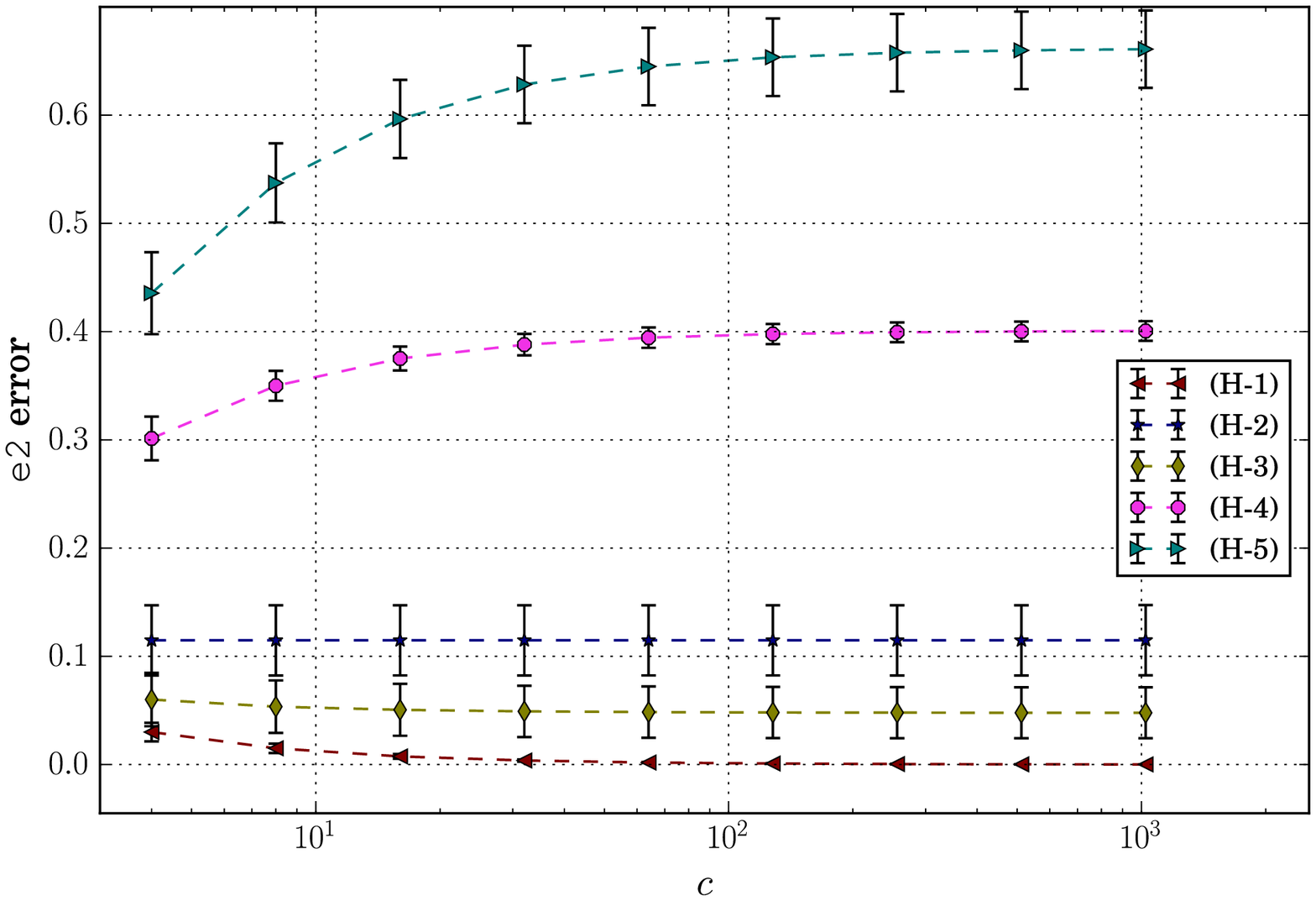, width=0.32\columnwidth} & \epsfig{file=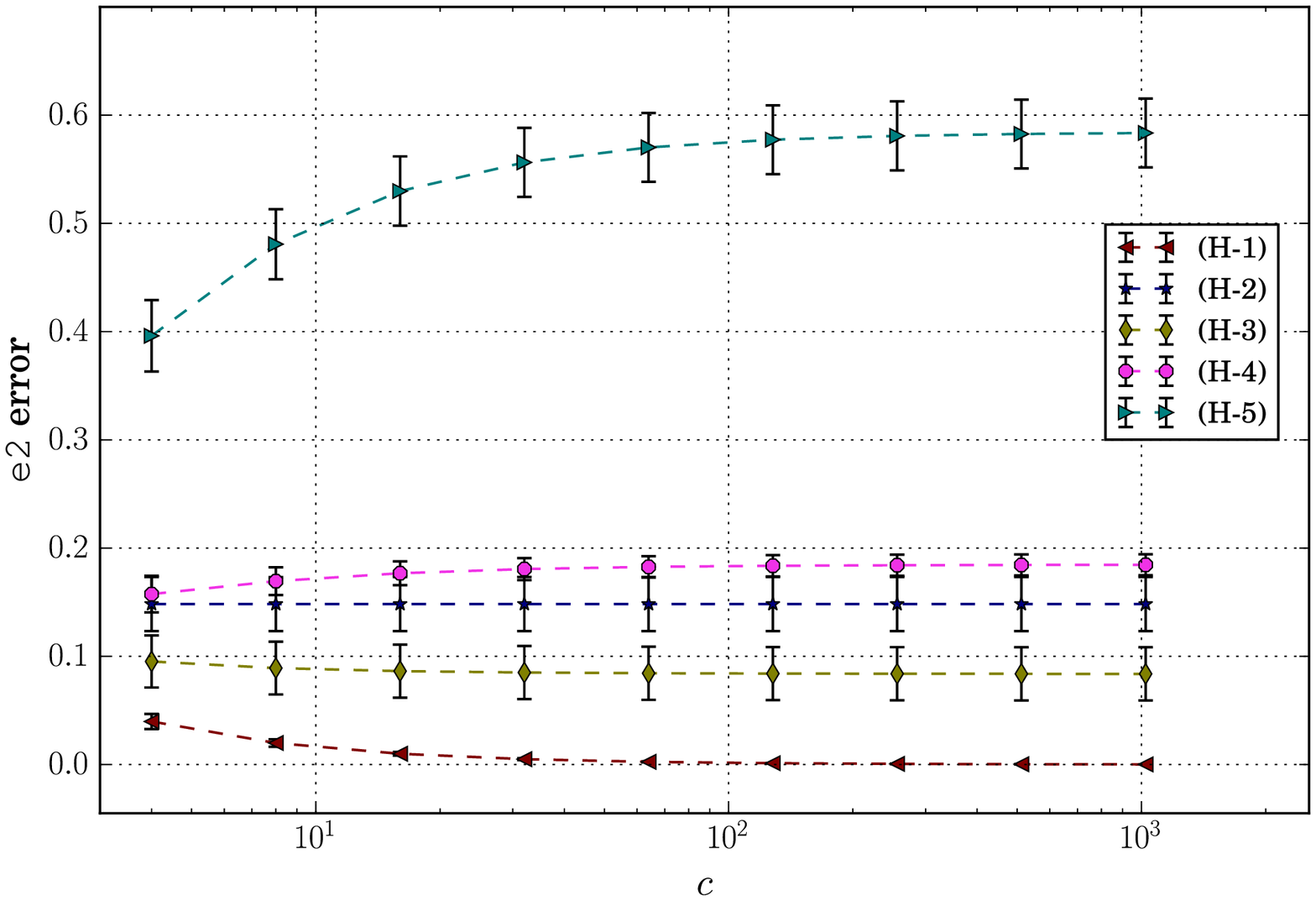, width=0.32\columnwidth} & \epsfig{file=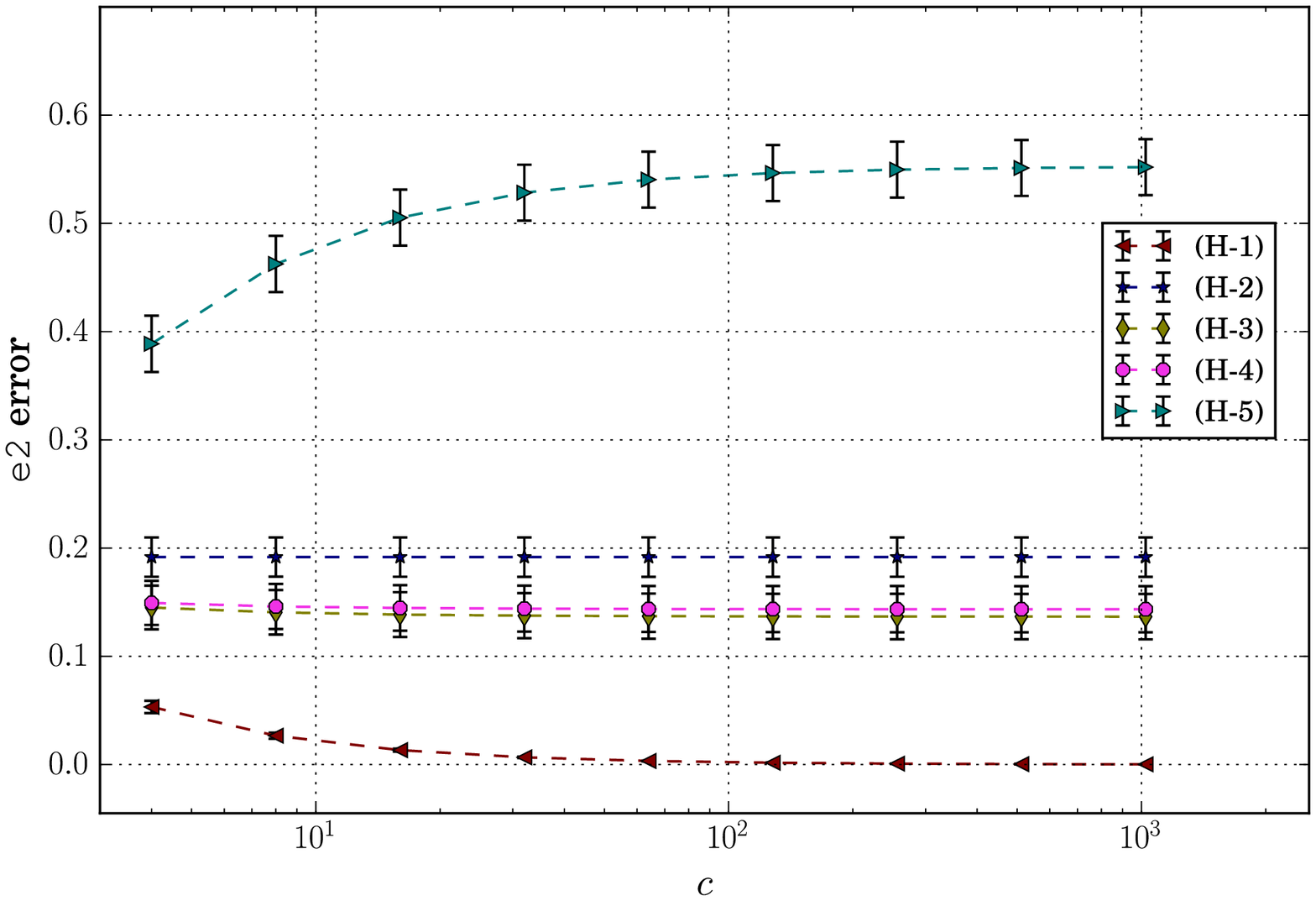, width=0.32\columnwidth} \\
\hline
\end{tabular}
\caption{Assessing the (e-1) [TOP] and (e-2) [BOTTOM] measures for the ``no-cumulation'' reference case.
A statistical test generated $10^6$ matrices of the form $\mathbf{C}^{\texttt{pert}}:=\mathbf{I}+\mathbf{E}$ as described above.
Each random matrix was evaluated on (e-1) and (e-2) against (H-1)-(H-5). The mean values are depicted for $n=4$ [LEFT], $n=8$ [MIDDLE], and $n=32$ [RIGHT] over conditioning in the exponential range $c\in [2^2,2^3,\ldots 2^{10}]$. Vertical error-bars are depicted per test using standard-deviation values. Importantly, (e-1) consistently exhibits low values for (H-5), in all dimensions and over all conditioning, explaining the observations reported in Figure \ref{fig:H3H4}.\label{fig:noLearning}}
\end{figure*}

\subsubsection{The Inverse Relation subject to Translations from the Optimum}\label{sec:shifting}
Since Theorem~\ref{thm:generalInverseRelation} extends a previous result concerning the near-optimum special case \cite{Shir-Theory-foga17}, $\vec{x}_0=\vec{x}^{*}=\vec{0}$, we are also interested in exploring the impact of translating farther away from the optimum.
Additional experiments were run to investigate the impact of such translations on the inverse relation. 
In practice, we account for the effect of increasing the shift vector $\vec{a}$: the vectors $\tilde{\vec{a}}:=\left\{\vec{2},\vec{4},\vec{8} \right\}$ are utilized in \eqref{eqn:QSampling}, introducing a factor of $\left\{2\sqrt{n},~4\sqrt{n},~8\sqrt{n} \right\}$ Euclidean distance away in comparison to the main results reported herein (Figures \ref{fig:H1H2}-\ref{fig:H3H4}).
Numerical observations for (H-4) with conditioning $c=10$ are presented in Figures~\ref{fig:FARTHER}-\ref{fig:FARTHER2}, encompassing in addition the default case $\vec{a}=\vec{1}$, as well as the optimum-based sampling $\vec{a}_0=\vec{0}$. 
Per a given translation vector $\vec{a}$, it is evident that the decrease in both (e-1) and (e-2) is consistent with the previously observed trends. That is, a decay toward zero as $\lambda$ increases, which becomes slower as the dimension $n$ increases.
When comparing amongst the translation degrees per dimension, the decay becomes slower as the sampling location translates farther from the optimum, as expected.
According to Theorem~\ref{thm:generalInverseRelation}, the error rates will necessarily vanish for any translation, yet large translations would necessitate in practice very large $\lambda$ values.

\begin{figure*}
\begin{tabular}{c  c  c}
{\Large $n=4$} & {\Large $n=8$} & {\Large $n=16$}  \\
\hline
(e-1) & & \\
\epsfig{file=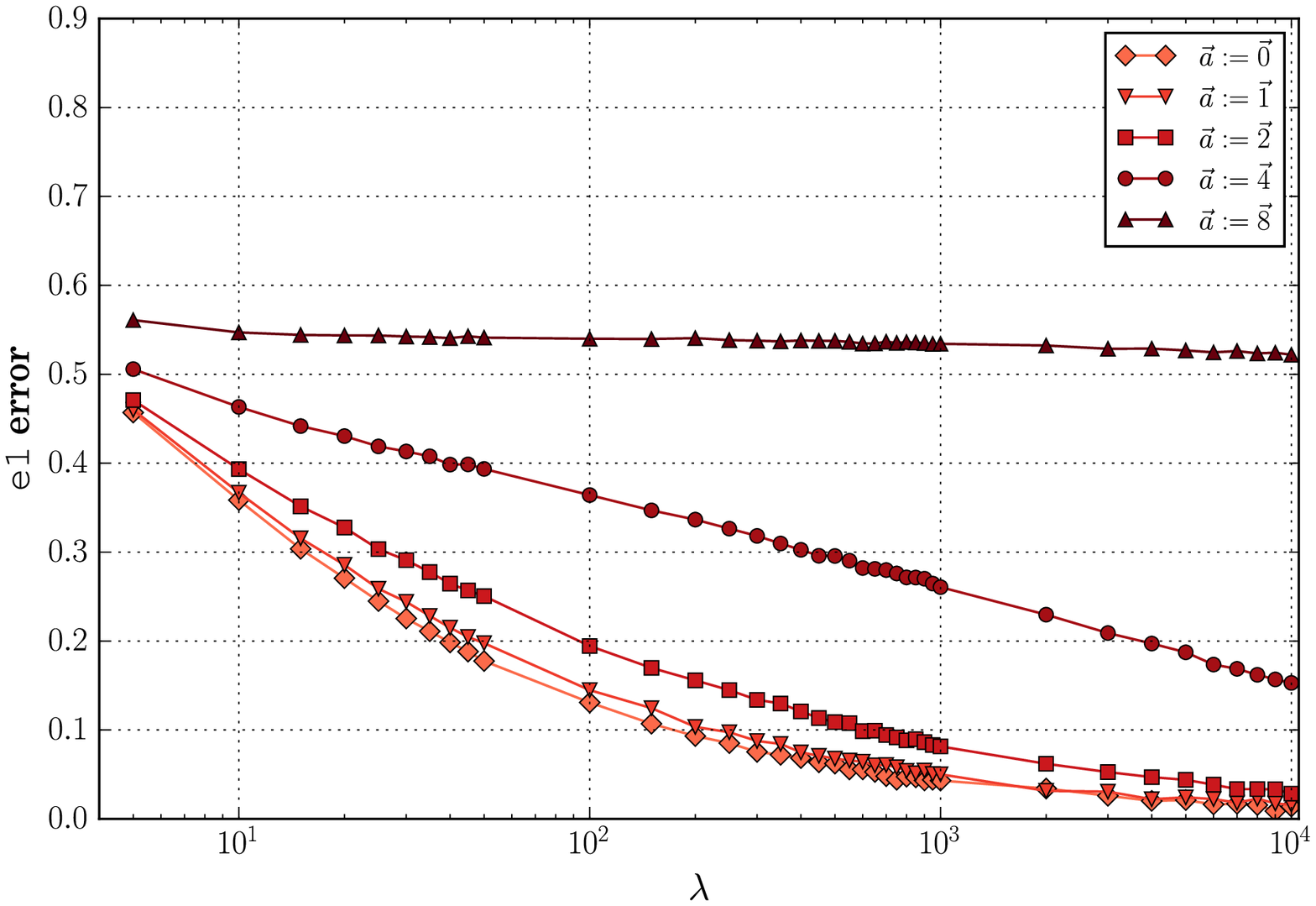, width=0.32\columnwidth} & \epsfig{file=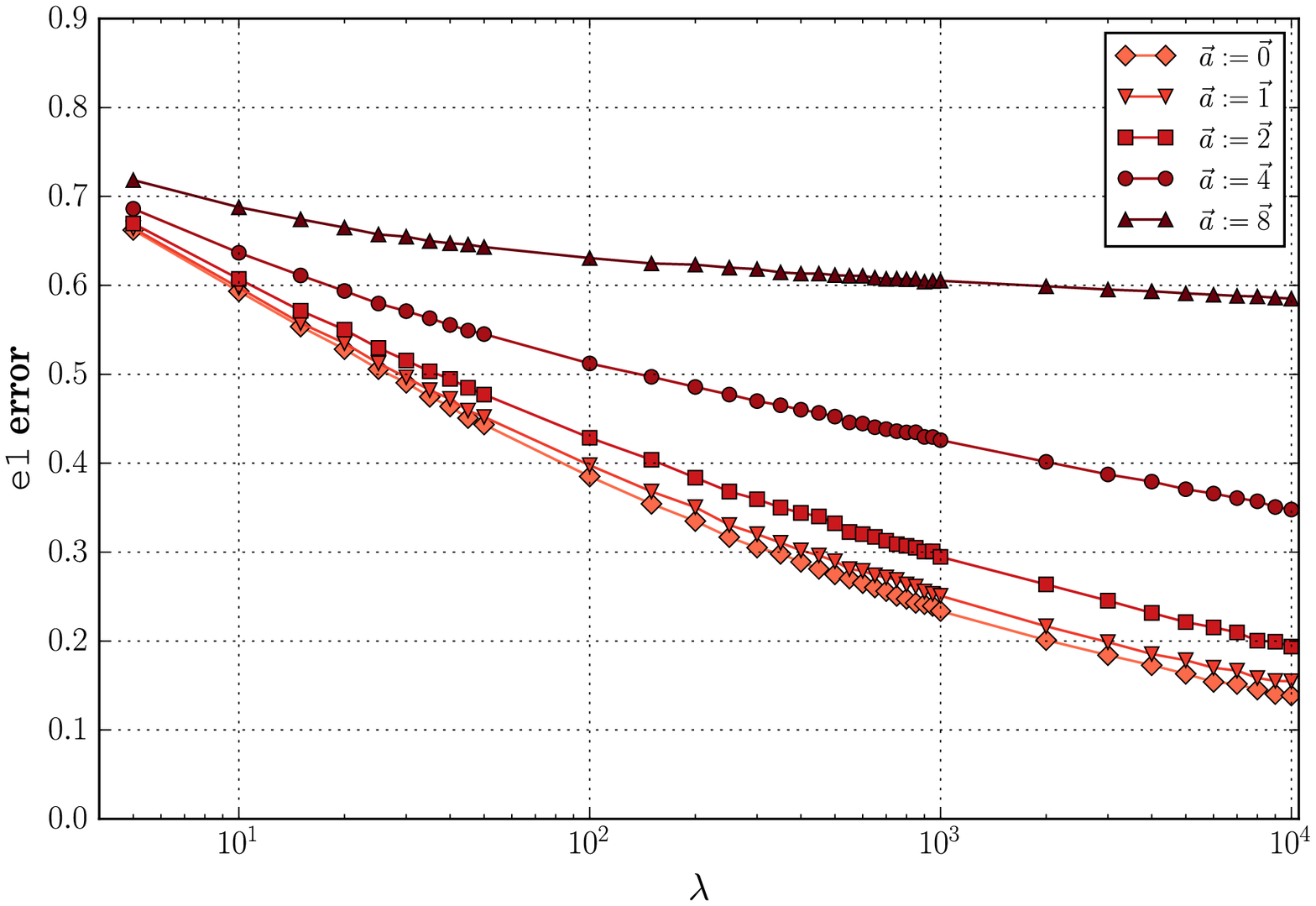, width=0.32\columnwidth} & \epsfig{file=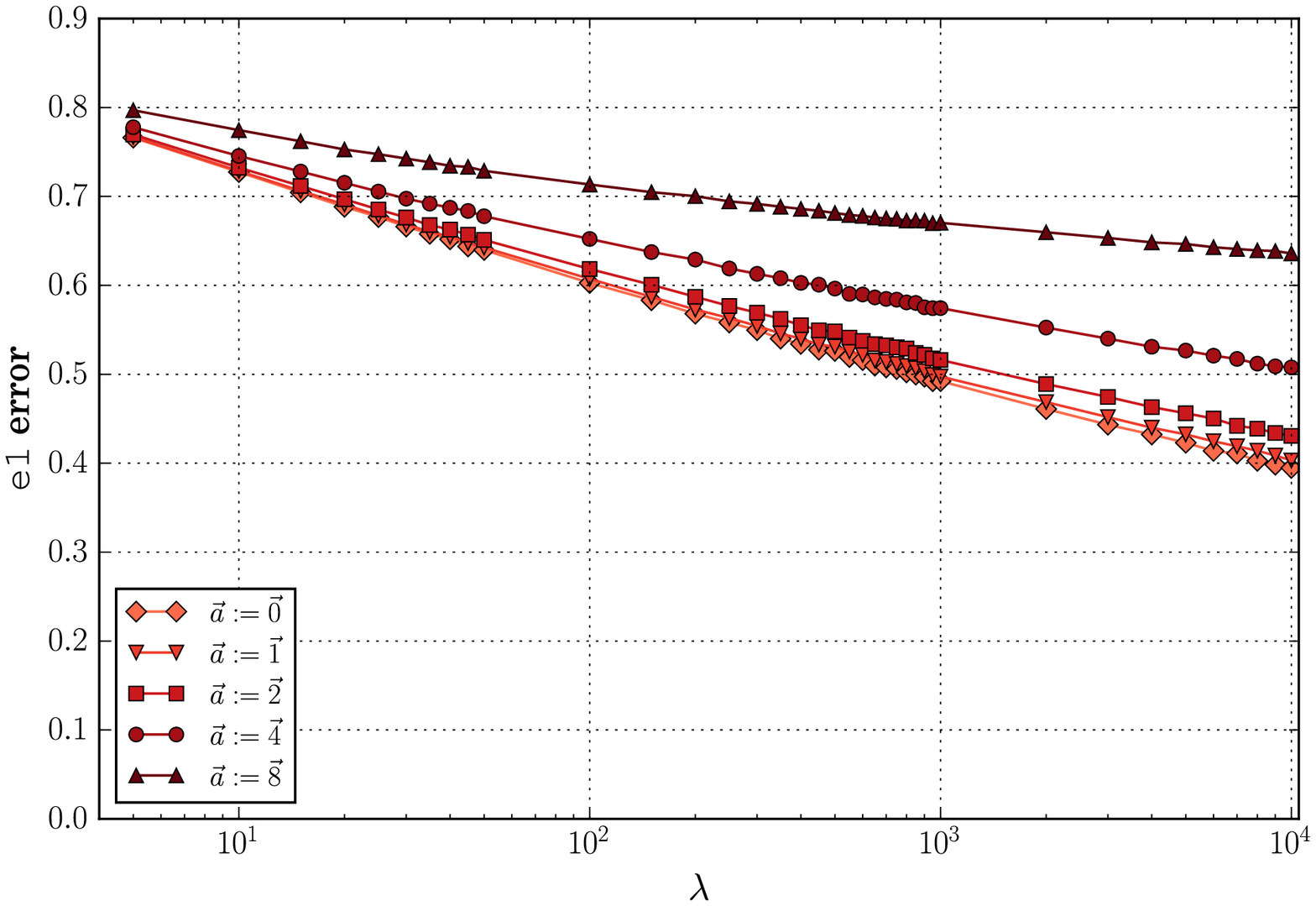, width=0.32\columnwidth}\\
\hline
(e-2) & & \\
\epsfig{file=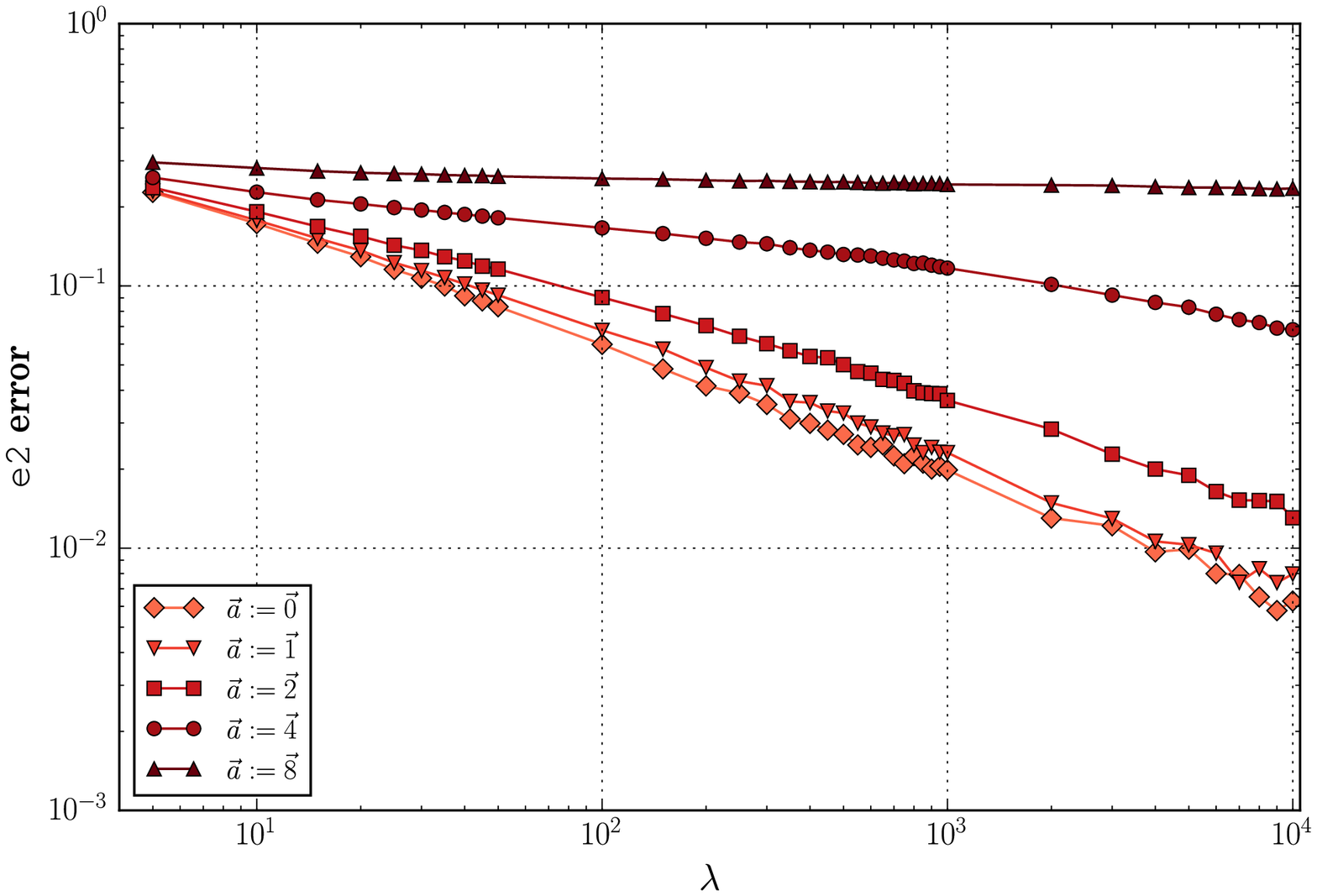, width=0.32\columnwidth} & \epsfig{file=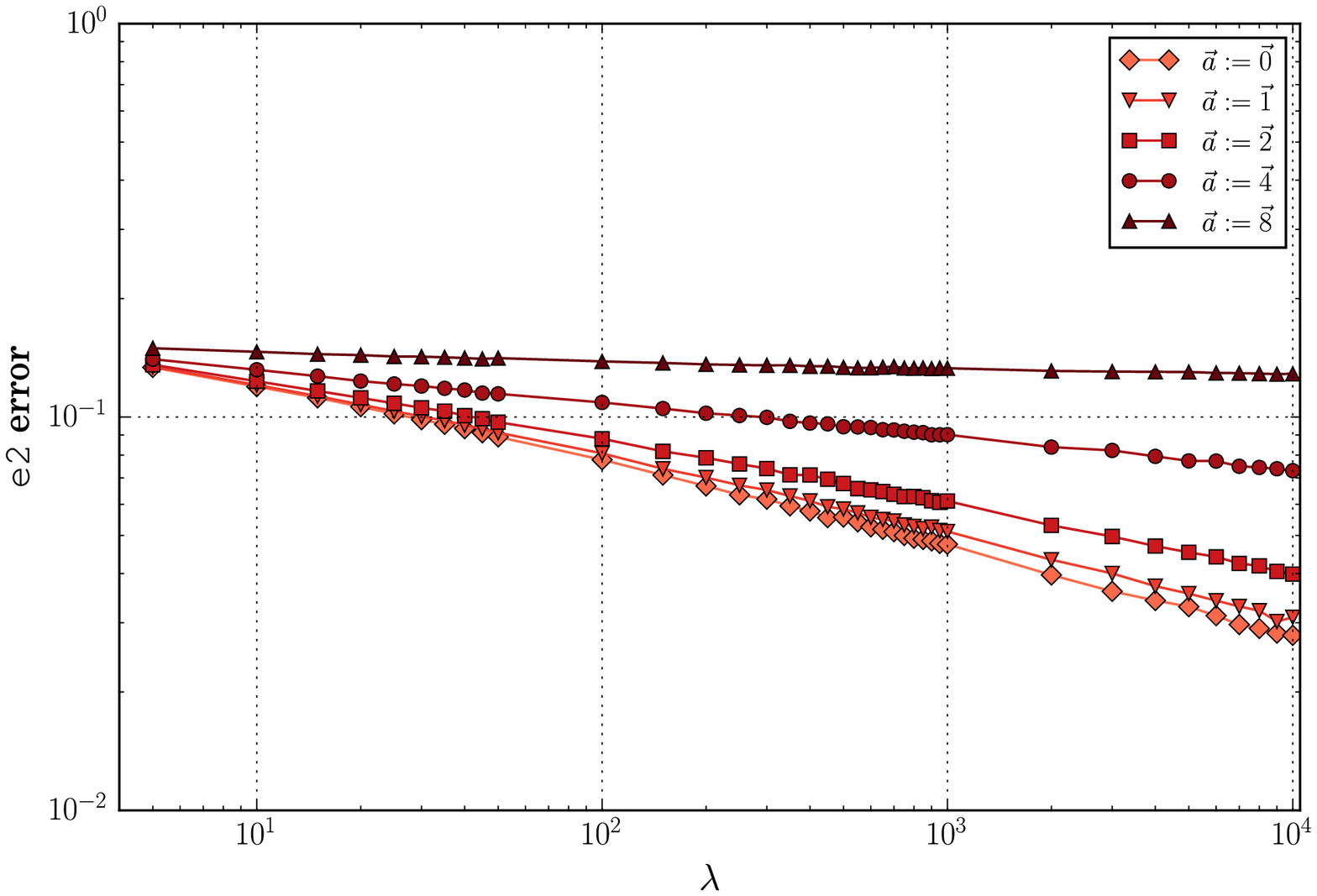, width=0.32\columnwidth} & \epsfig{file=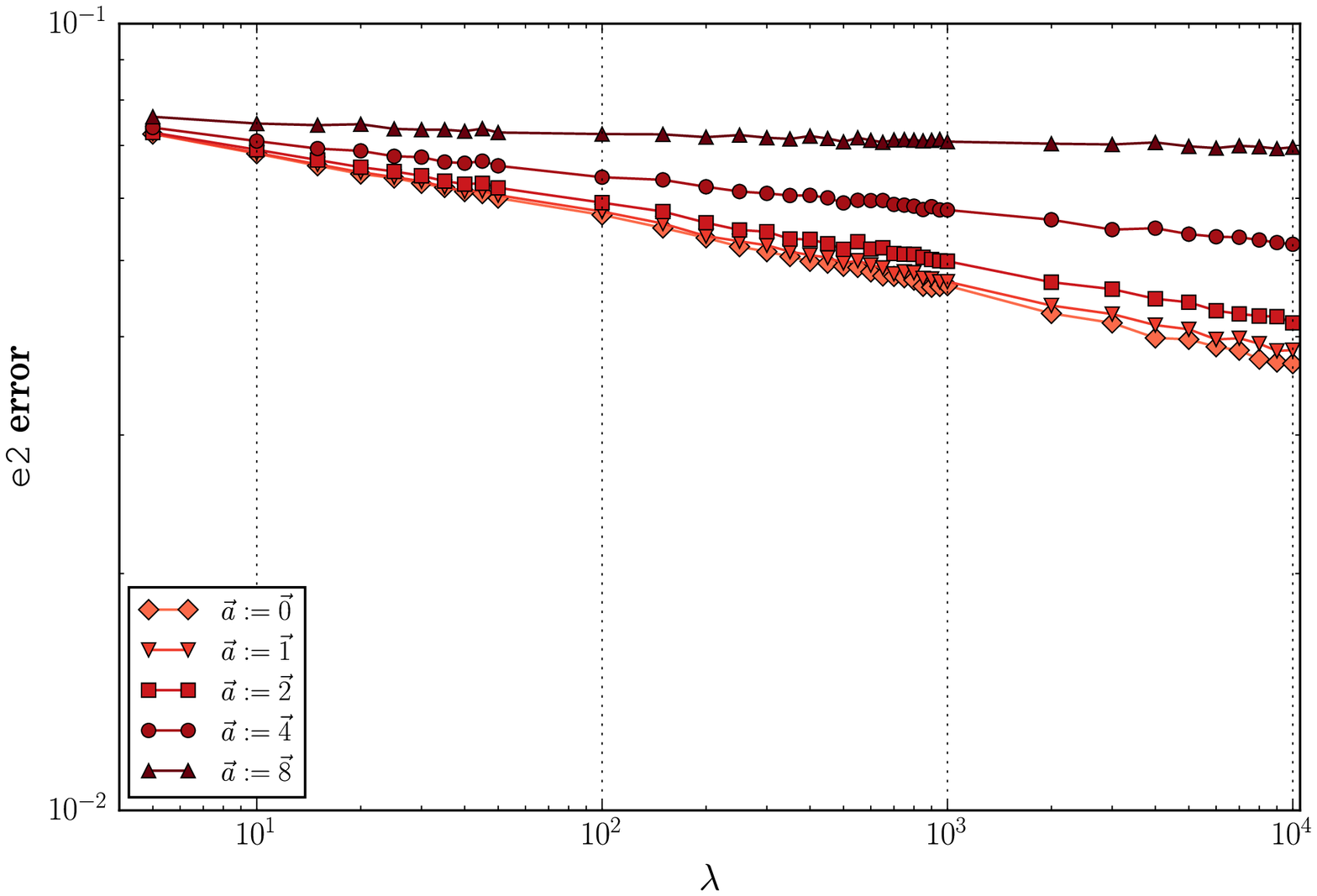, width=0.32\columnwidth}\\
\hline
\end{tabular}
\caption{Investigating the translation of the sampling location on (H-4) with conditioning $c=10$ over $N_{\texttt{iter}}=10^6$ iterations.
The error measures are depicted for $n=4$ [LEFT], $n=8$ [MIDDLE], and $n=16$ [RIGHT] over various instantiations of a shifting vector $\vec{a}$. 
All axes are logarithmically scaled except for the $y$-axes in (e-2).
\label{fig:FARTHER}}
\end{figure*}

\begin{figure*}
\begin{center}
\begin{tabular}{c  c}
{\Large $n=32$} & {\Large $n=64$} \\
\hline
(e-1) & \\
\epsfig{file=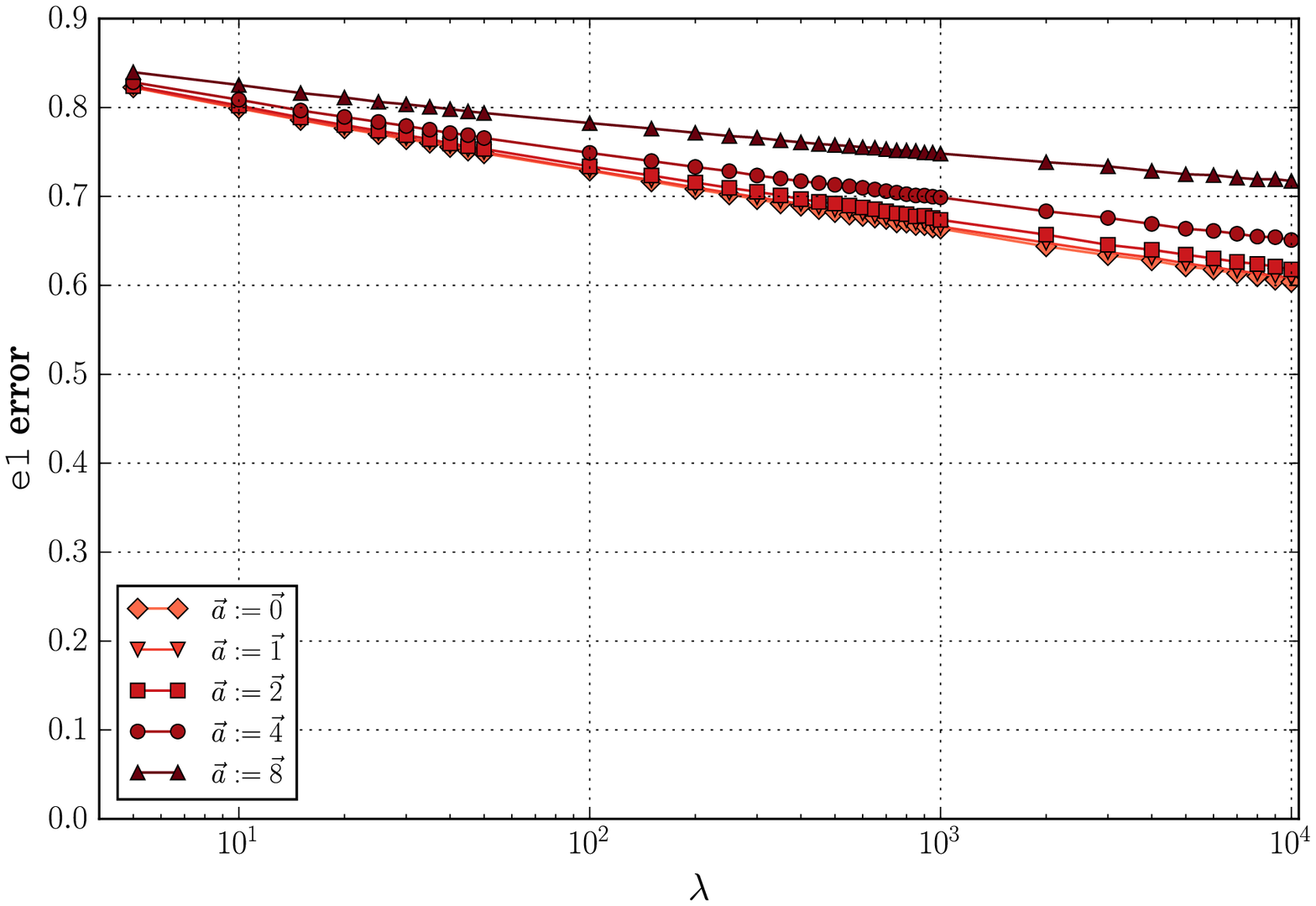, width=0.48\columnwidth} & \epsfig{file=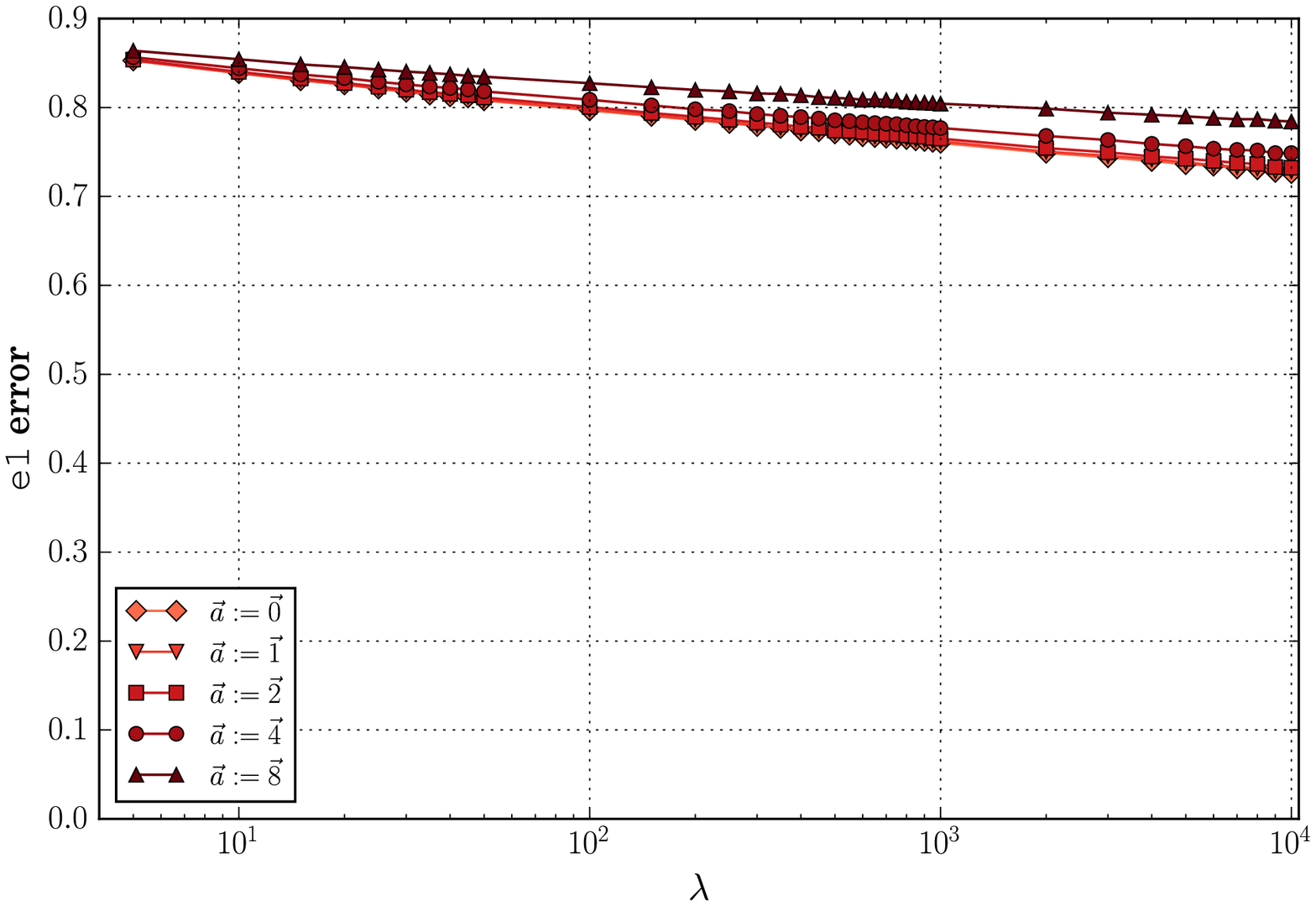, width=0.48\columnwidth} \\
\hline
(e-2) & \\
\epsfig{file=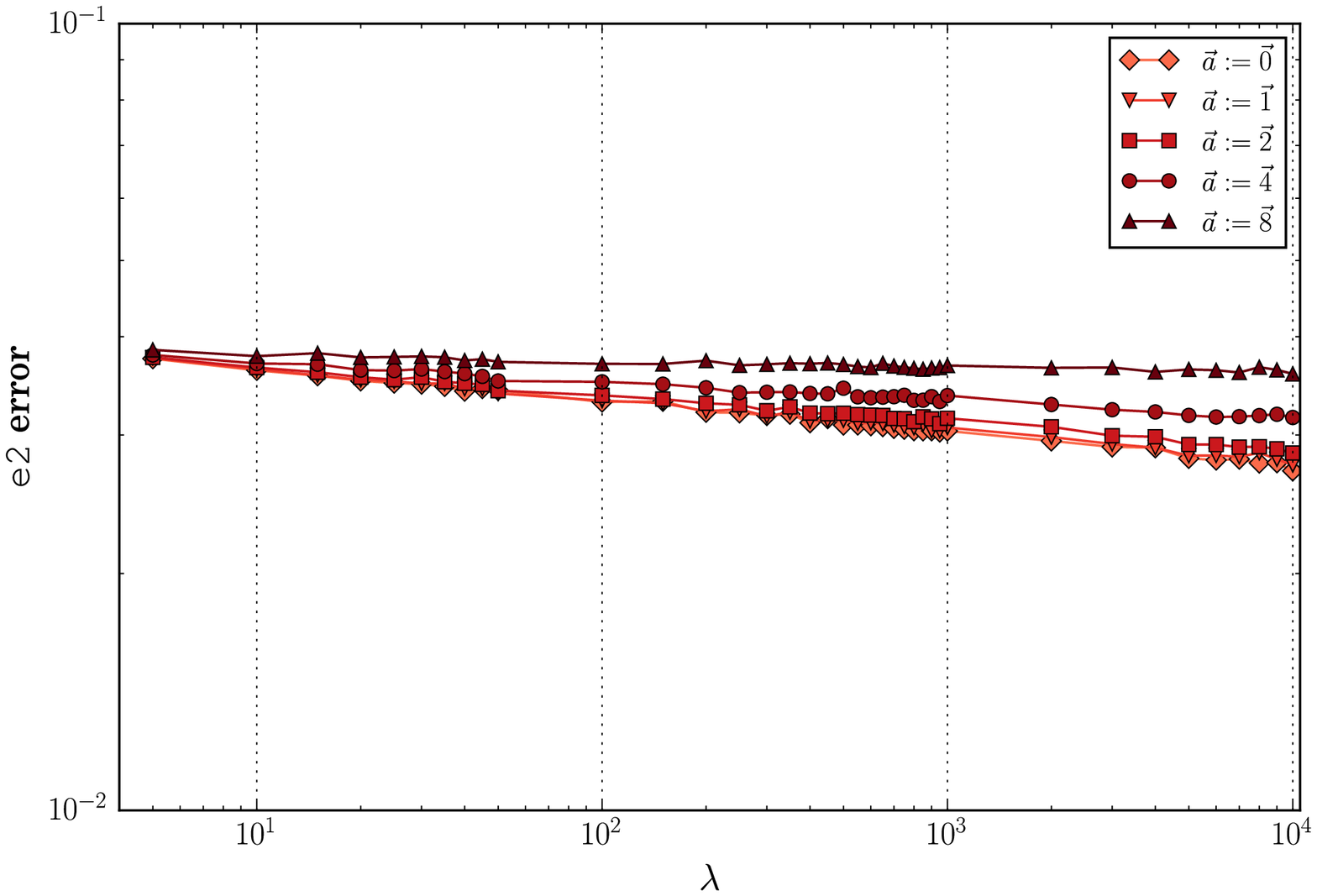, width=0.48\columnwidth} & \epsfig{file=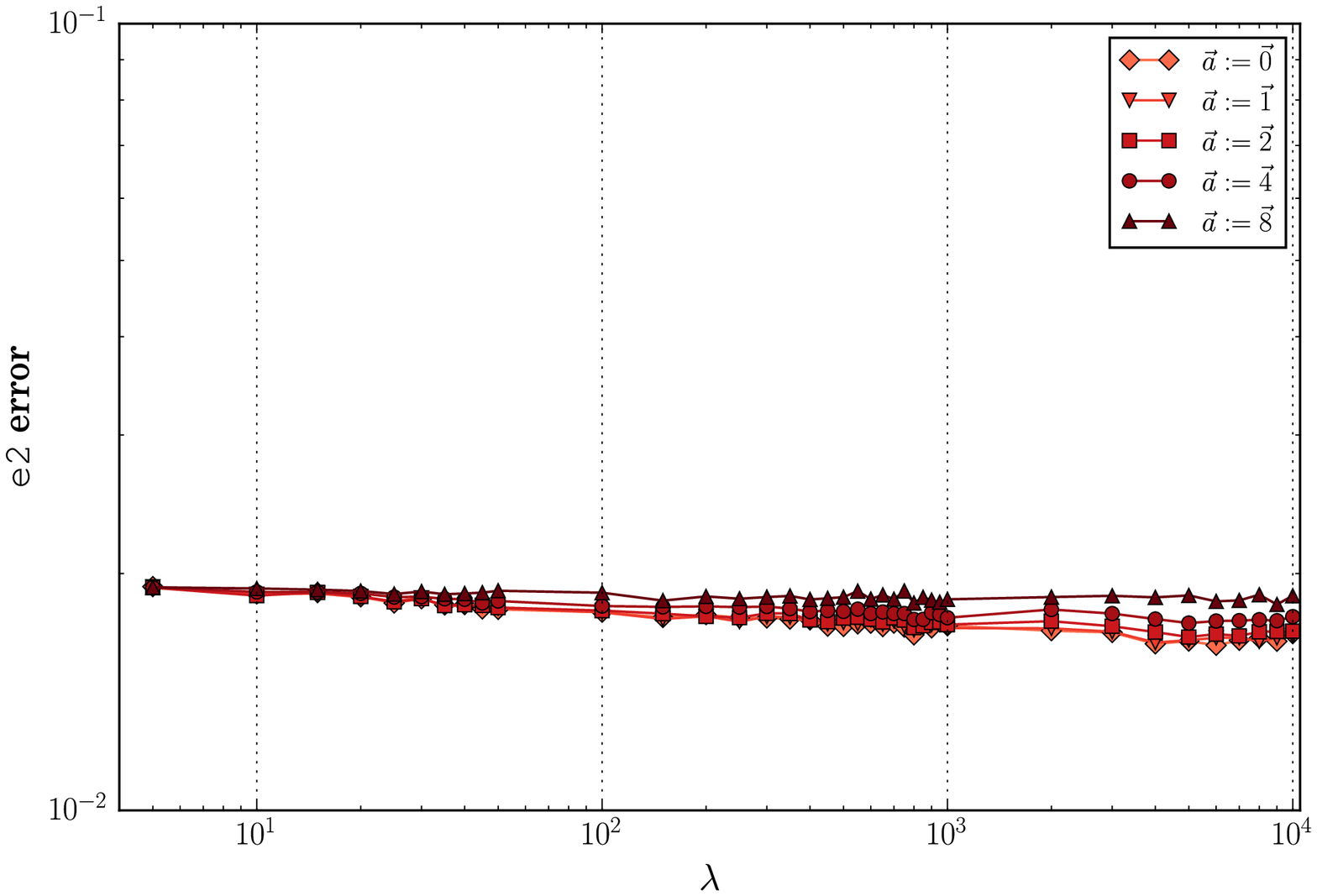, width=0.48\columnwidth} \\
\hline
\end{tabular}
\end{center}
\caption{Investigating the translation of the sampling location on (H-4) with conditioning $c=10$ over $N_{\texttt{iter}}=10^6$ iterations. 
The error measures are depicted for $n=32$ [LEFT] and $n=64$ [RIGHT] over various instantiations of a shifting vector $\vec{a}$.
All axes are logarithmically scaled except for the $y$-axes in (e-2).
\label{fig:FARTHER2}}
\end{figure*}

\subsubsection{The Inverse Relation subject to Increasing Conditioning}\label{sec:condScaling}
As an extension, we conducted a systematic evaluation over condition numbers in the exponential range $c\in [2^2,2^3,\ldots 2^{20}]$, in order to account also for ill-conditioned Hessians.

Figure~\ref{fig:ConditionUp} depicts the evaluation of the (H-4) and (H-5) landscapes using the error measures (e-1) and (e-2), exhibiting a clear trend of error increase as the conditioning grows. 
At the same time, the actual error values are lower as the population-size grows, as expected.
This effect of increasing error rates for increasing conditioning is rather intuitive, and is explained by the increasingly growing problem-complexity and the requirement for larger population-sizes to demonstrate the proved inverse-relation. 
According to Theorem~\ref{thm:generalInverseRelation}, the error rates will necessarily vanish also for extreme Hessian spectra, yet this would require in practice dramatically larger $\lambda$ values.
\begin{figure*}
\begin{center}
\begin{tabular}{c c}
{\Large (e-1)} & {\Large (e-2)} \\
\hline
(H-4) Rotated Ellipse & \\
\epsfig{file=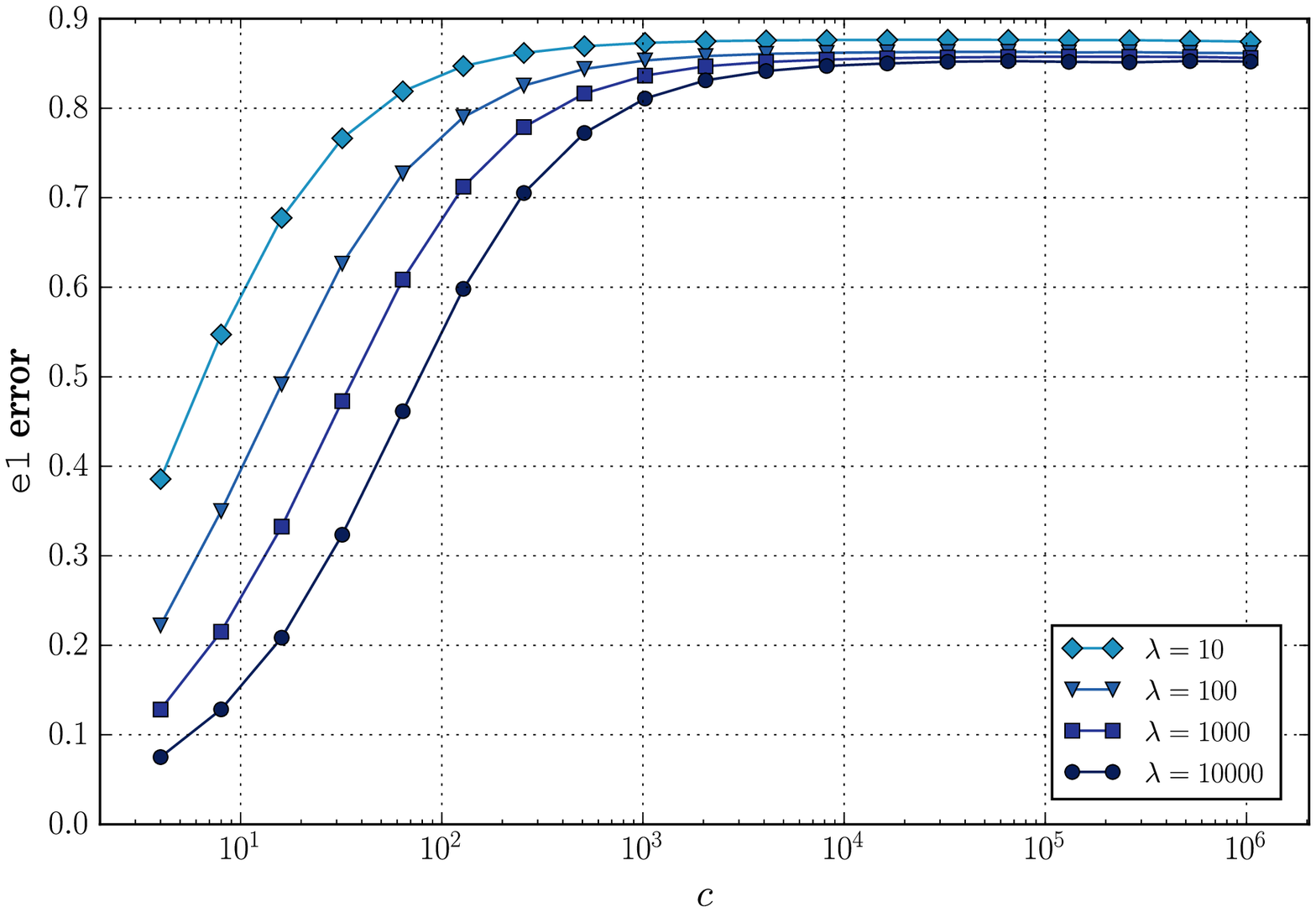, width=0.48\columnwidth} & \epsfig{file=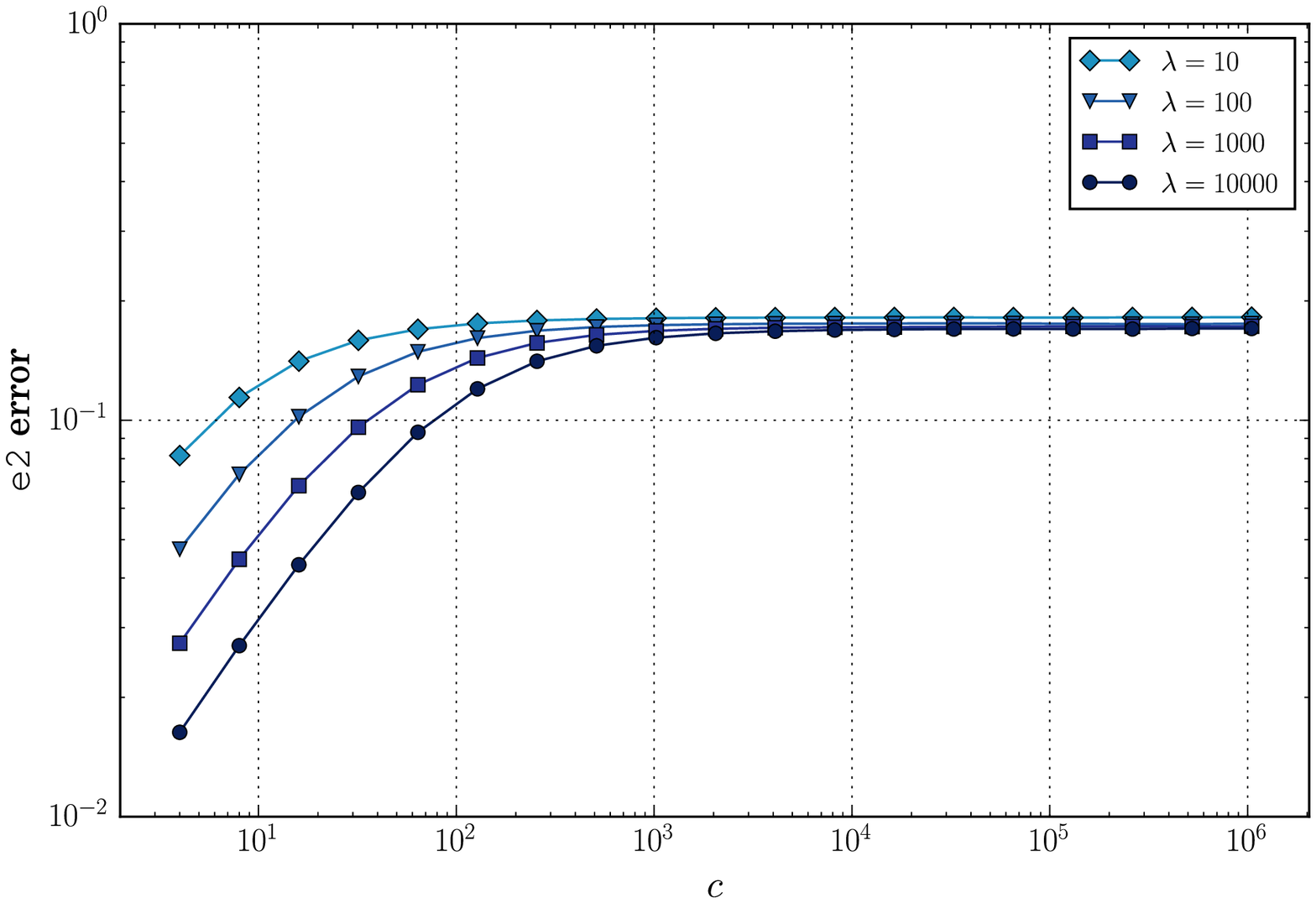, width=0.48\columnwidth} \\
\hline
(H-5) Hadamard Ellipse & \\
\epsfig{file=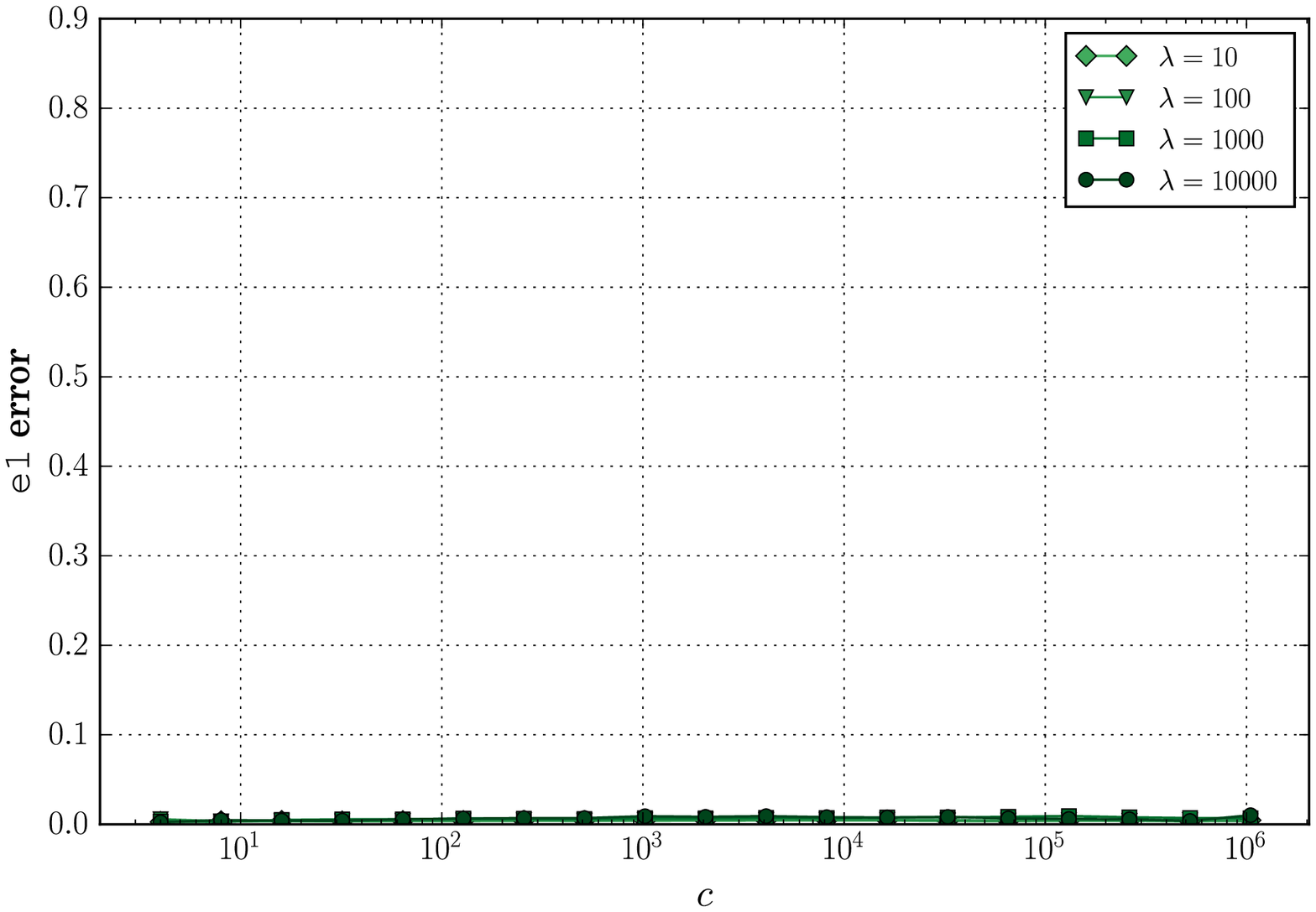, width=0.48\columnwidth} & \epsfig{file=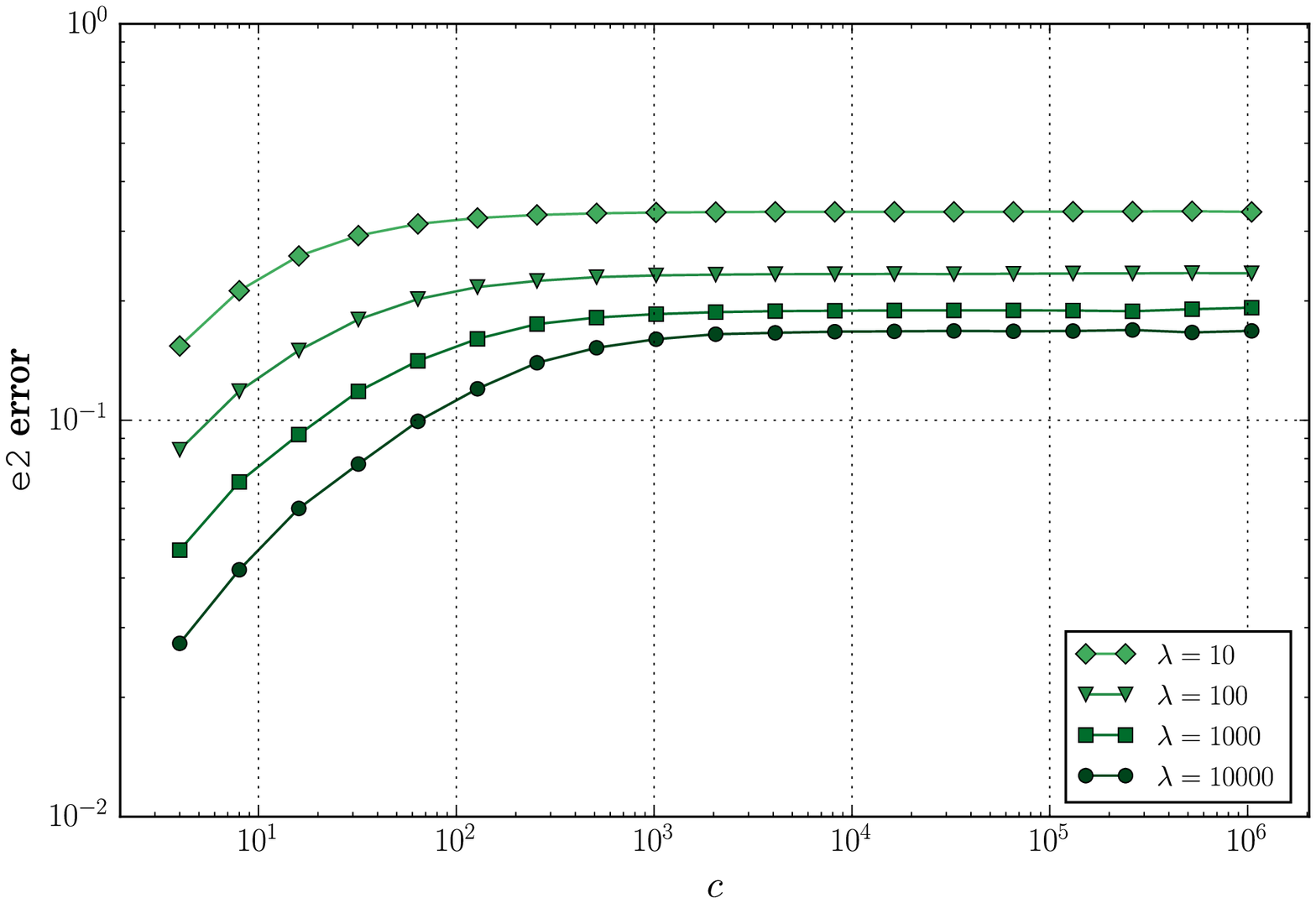, width=0.48\columnwidth} \\
\hline
\end{tabular}
\end{center}
\caption{The inverse-relation subject to increasing conditioning for various population-sizes $\lambda = 10,100,1000,10000$ on (H-4) [TOP] and on (H-5) [BOTTOM] in dimensionality $n=8$. 
The sampling was carried out over $N_{\texttt{iter}}=10^6$ iterations.
(e-1) [LEFT] and (e-2) [RIGHT] are depicted as a function of increasing $c$; all axes are logarithmically scaled, except for the $y$-axis of (e-1). (e-1) vanishes on (H-5), as was previously observed and explained.
\label{fig:ConditionUp}}
\end{figure*}

\section{Discussion}\label{sec:discussion}
This study has proven the inverse relation between the landscape Hessian and the covariance matrix when statistically constructed by ESs operating with $(1,\lambda)$-selection on the class of positive quadratic functions. 
The results presented herein generalized previous analytical work \cite{Shir-Theory-foga17} that was limited to sampling in the vicinity of the minima. As was for the near-optimum special case, this proven learning capability is rooted only at two aspects: (i) isotropic Gaussian mutations, and (ii) rank-based selection. 
This general result confirms the hypothesized capacity of ESs to extract the \textit{sensitive optimization directions} from their covariance matrices.

Notably, learning the landscape is an inherent property of classical ESs; it does not require Derandomization, nor does it require IGO as a proof tool. 
Unlike such prior proofs that were linked to actual search algorithms (e.g., the NGD \cite{Akimoto2012_NGD}), this work modeled \textit{passive evolutionary learning}, adhering to no strategy adaptation, when an empirical covariance matrix was statistically constructed out of winning decision vectors around a fixed sampling point. Yet, a mathematical rigor to a long-standing hypothesis on the behavior of general ESs was finally achieved.

The approximation sampling bounds derived for the near-optimum special case (see Section \ref{sec:previousresults}) hold also for the general case.
Firstly, guaranteeing that $\mathcal{C}^{\texttt{stat}}$ is pointwise $\eps$-close to $\mathcal C$ with confidence $1-\delta$ requires 
a polynomial number of samples in $\lambda,1/\eps,\ln(n)$ and $\ln(1/\delta)$. 
Secondly, guaranteeing that $\mathcal C$ is pointwise $\eps$-close to $\alpha {\mathcal H}^{-1}$ with~$\alpha \left(\lambda,\mathcal{H} \right) >0$, dictates an upper bound on the number of samples depending upon $\eps,\lambda$ and on the Hessian's spectrum.

Following our analytical work, which concluded with Theorem~\ref{thm:generalInverseRelation}, we carried out an extensive simulation study to numerically corroborate this result at multiple levels. Most importantly, we demonstrated the tendency of the normalized $\mathcal{H}_{0} \mathcal{C}^{\texttt{stat}}$ to become the identity when $\lambda$ increases, exactly as Theorem~\ref{thm:generalInverseRelation} predicts, over multiple landscapes, condition numbers, and search-locations.\\

Next, we offer a future direction of work and hypothesize a generalization of this work to $(\mu,\lambda)$-selection.
\subsection*{Future Work: The Inverse Relation in $(\mu,\lambda)$-Selection}
The exact probability and density functions for the $\ell^{th}$-degree winning value, $\omega_{\ell:\lambda}$, read (see, e.g., \cite{OrderStatistic}):
\begin{equation}\label{eq:os_distributions}
\begin{array}{l}
\medskip
\texttt{CDF}_{\omega_{\ell:\lambda}}\left(\psi\right) = \sum_{k=\ell}^{\lambda} {\lambda \choose k} \texttt{CDF}_{\psi}\left(\psi\right)^{k} \left( 1-\texttt{CDF}_{\psi}\left(\psi\right)\right)^{\lambda-k}\\
\texttt{PDF}_{\omega_{\ell:\lambda}}\left(\psi\right)\\
 = \lambda \cdot \texttt{PDF}_{\psi}\left(\psi\right) {\lambda-1 \choose \ell-1} \texttt{CDF}_{\psi}\left(\psi\right)^{\ell-1} \left( 1-\texttt{CDF}_{\psi}\left(\psi\right)\right)^{\lambda-\ell}~.
\end{array}
\end{equation}

Let $\mathcal{C}^{(\ell:\lambda)}$ denote the covariance matrix constructed out of $\ell^{th}$-degree winners only.
We hypothesize that this covariance matrix is also close to being proportional to the inverse Hessian, under a large population and subject to strong selection pressure, i.e., $\lambda\rightarrow \infty,~\ell \ll \lambda$.

In Figure \ref{fig:Ldegree} we present empirical evidence for this hypothesis on the (H-4) test-case considering either $\ell=2$ or $\ell=5$.
It is apparent that the equivalent phenomenon occurs for these covariance matrices of $\ell^{th}$-degree winners. 
\begin{figure*}
\begin{center}
\begin{tabular}{c c}
 {\Large (e-1)} & {\Large (e-2)} \\
\hline
$\ell=2$ & \\
\epsfig{file=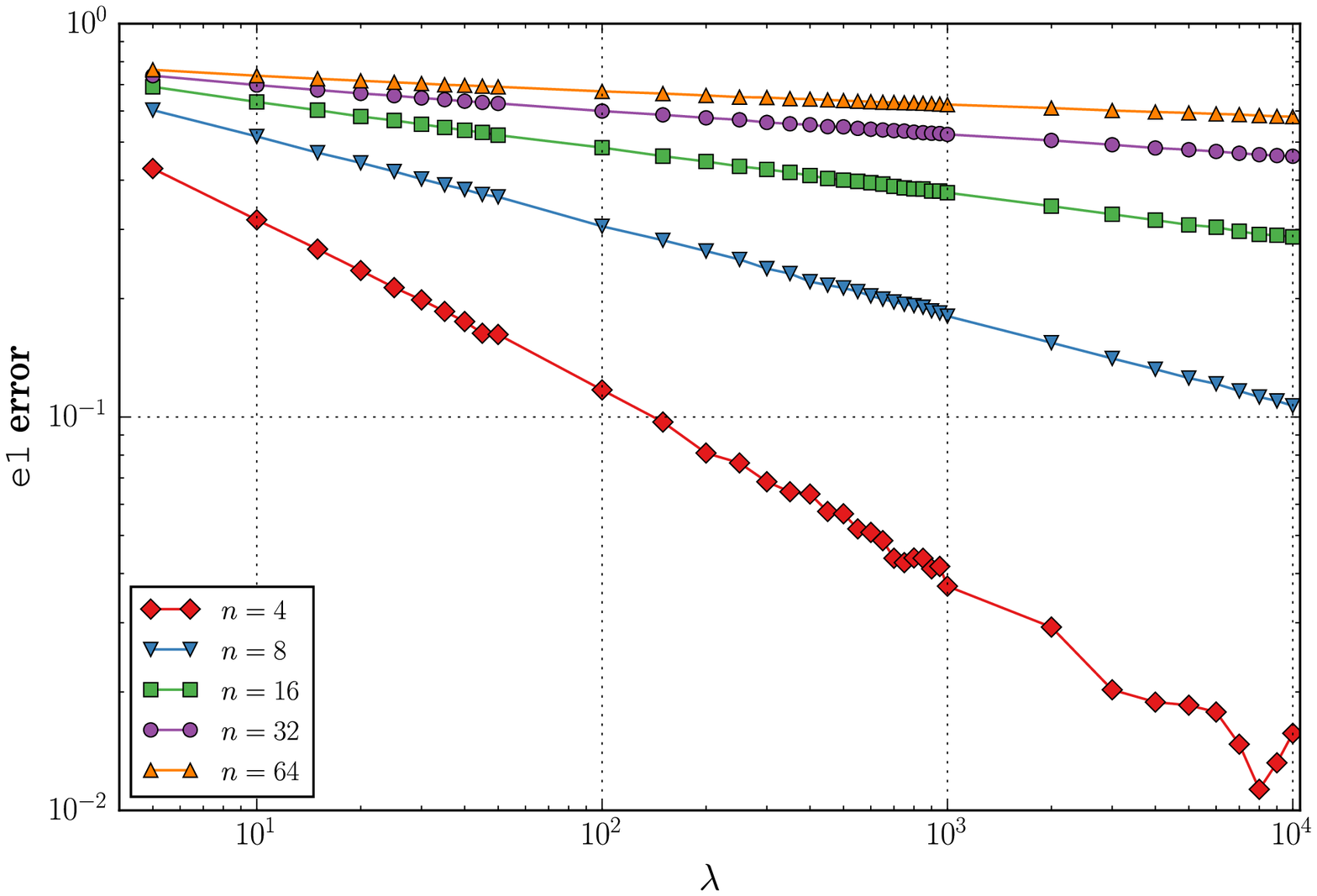, width=0.48\columnwidth} & \epsfig{file=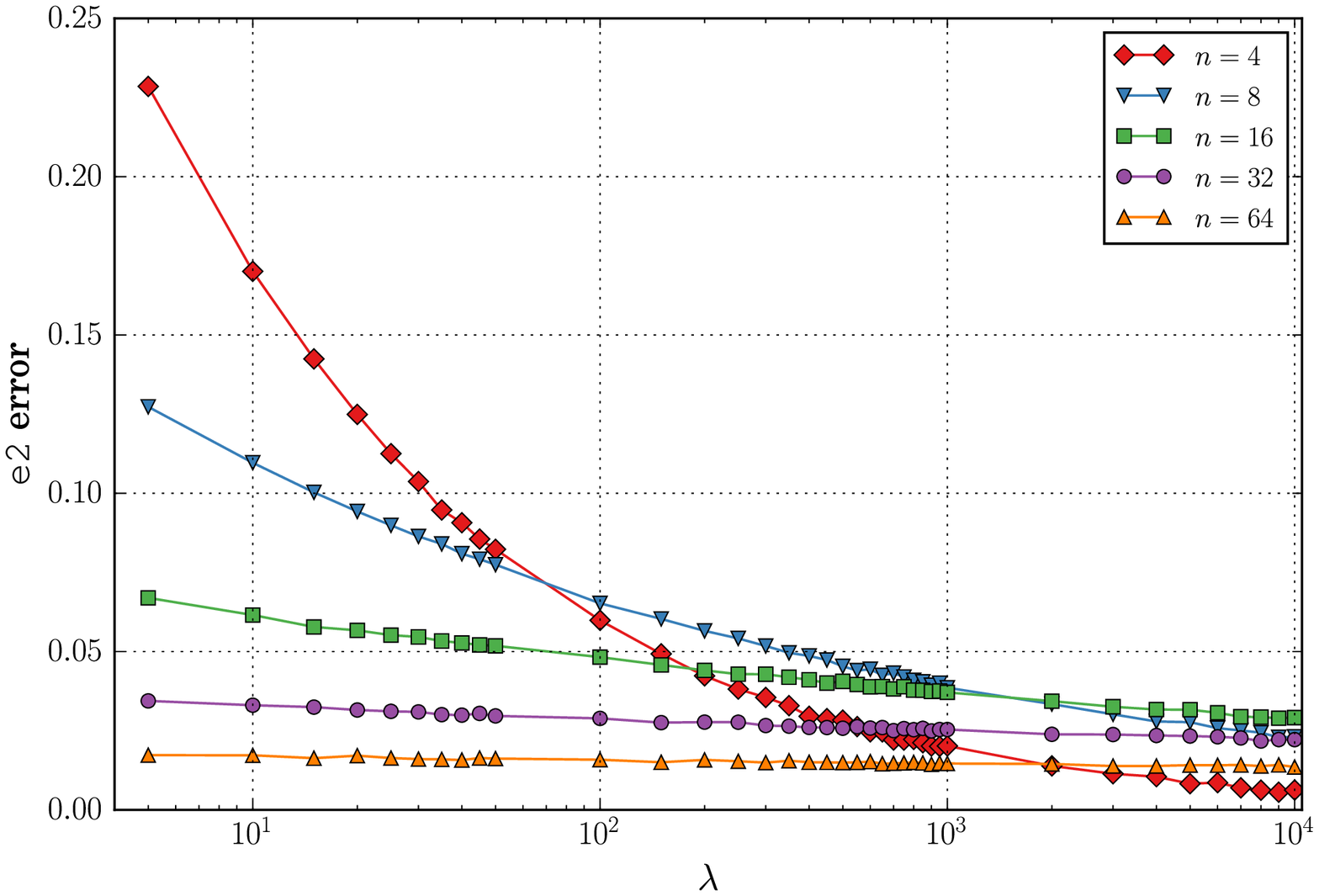, width=0.48\columnwidth} \\
\hline
$\ell=5$ & \\
\epsfig{file=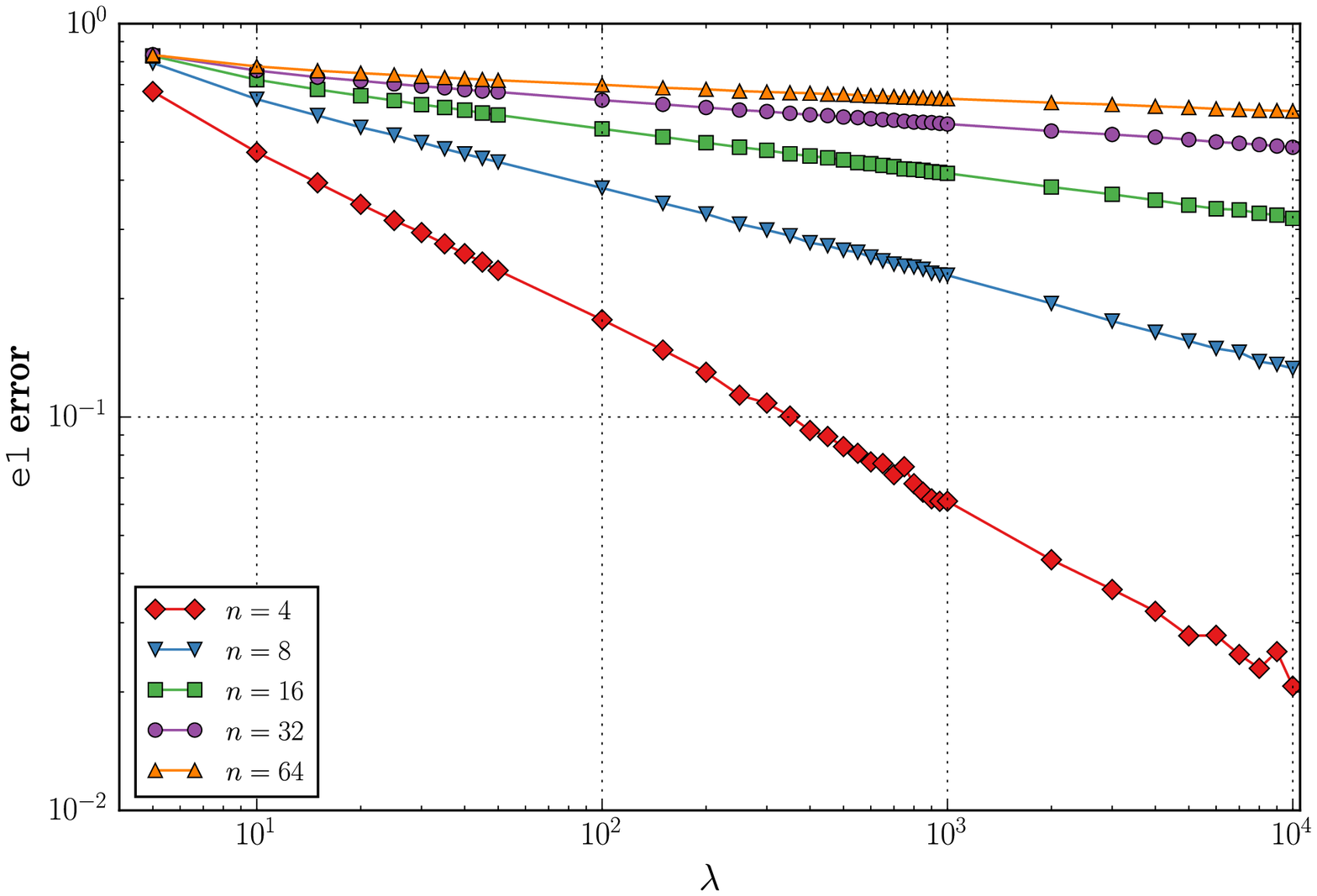, width=0.48\columnwidth} & \epsfig{file=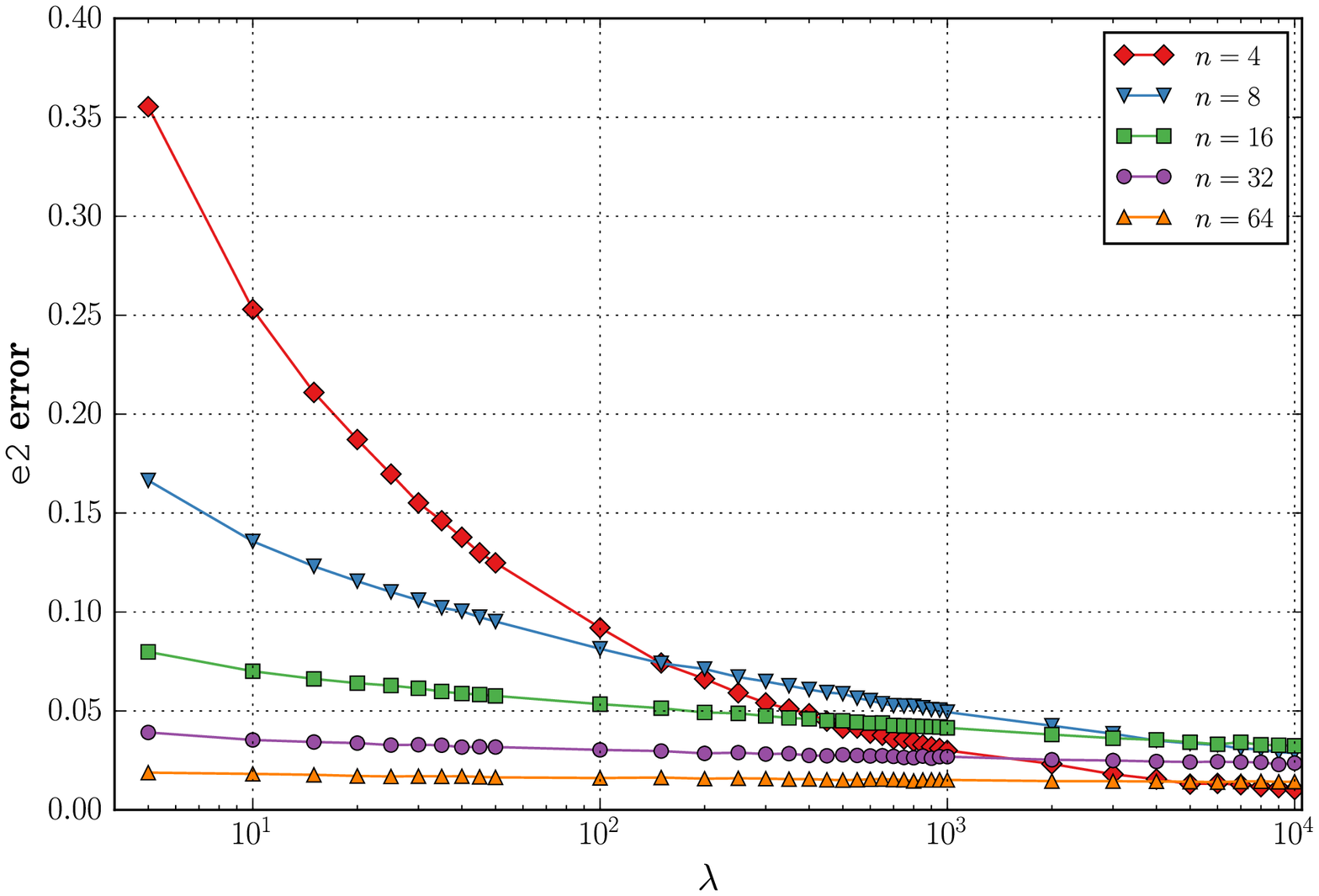, width=0.48\columnwidth} \\
\hline
\end{tabular}
\end{center}
\caption{Empirical evidence for the hypothesis concerning $\ell^{th}$-degree winners by systematic evaluation of the error measures (e-1)-(e-2) as a function of increasing $\lambda$ on (H-4) with conditioning $c=10$ for various dimensions $n = 4,8,16,32,64$. The sampling was carried out over $N_{\texttt{iter}}=10^6$ iterations.
(e-1) [LEFT] and (e-2) [RIGHT] are depicted for $\ell=2$ [TOP] and $\ell=5$ [BOTTOM]; all $x$-axes are logarithmically scaled.
\label{fig:Ldegree}}
\end{figure*}

\section{Acknowledgements}
The authors are indebted to Jonathan Roslund, for his significant contributions that ignited this line of work.
Ofer Shir would like to thank the participants of the Dagstuhl Seminar on ``Theory of Randomized Optimization Heuristics'' (17191) for insightful discussions on theory of evolution strategies and notions of derandomization.

\small


\appendix

\section{The Density Expression}
\label{sec:DE}

\newcommand{\tz}{\tilde z}
\newcommand{\ty}{\tilde y}
\newcommand{\tw}{\tilde \omega}
\newcommand{\tpsi}{\tilde \psi}

Here we explain~\eqref{eq:x_pdf_2}.
Start with random variables taking values in a finite set.
Assume $\tz$ is a random variable
taking values in a finite subset of $\R^n$.
We can sample
$(\tz,\tpsi)$ where $\tpsi = J(\tz)$
by first sampling $\tpsi$ from the appropriate distribution and then sampling
$\tz$ conditioned on the value of $\tpsi$.
Now, sample $\tpsi_1,\ldots,\tpsi_\lambda$ i.i.d.\
as $\tpsi$.
Let $\tw$ be the minimum of all these values.
Let $\ty$ be chosen from the distribution of
$\tz$ conditioned on the event $J(\tz) = \tw$.
The pair $(\ty,\tw)$ is distributed as the winner
among the $\lambda$ samples and its $J$-value.
We see that for each $x$,
\begin{align*}
\Pr[\ty = x]
& = \Pr[\tw = J(x)] \Pr[ \ty = x | J(\ty) = J(x)] \\
& = \Pr[\tw = J(x)] \Pr[ \tz = x | J(\tz) = J(x)] \\
& = \Pr[\tw = J(x)] \frac{\Pr[ \tz = x]}{\Pr[\tpsi = J(x)]} .
\end{align*}
Stated differently,
\begin{align*}
\frac{\Pr[\ty = x]}{\Pr[ \tz = x]}
& =  \frac{\Pr[\tw = J(x)]}{\Pr[\tpsi = J(x)]} .
\end{align*}
Now, choose $\tz$ from denser and denser subsets of $\R^n$
in a way that approaches $\mathcal{N} (\vec{x}_0,\mathbf{I})$.
The distribution of $\ty$ approaches that of $y$.
Since $J$ is smooth,
the distribution of $\tpsi$ approaches that of $\psi$.
The distribution of $\tw$ approaches that of $\omega$.
We see that the l.h.s.\ converges
to $\frac{\texttt{PDF}_y(x)}{\texttt{PDF}_z(x)}$;
for a small ball $B$ around $x$ we have that
$\tfrac{\Pr[\ty = x]}{\Pr[ \tz = x]}$
is approximately 
$\tfrac{\texttt{PDF}_y(x) \cdot |B| }{\texttt{PDF}_z(x) \cdot |B|}$.
Similarly, the r.h.s.\ approaches
$\frac{\texttt{PDF}_\omega(J(x))}{\texttt{PDF}_\psi(J(x))}$.

\end{document}